\newcommand{\cmark}{\ding{51}}
\newcommand{\xmark}{\ding{55}}
\newcommand*{\centerfloat}{%
  \parindent \z@
  \leftskip \z@ \@plus 1fil \@minus \textwidth
  \rightskip\leftskip
  \parfillskip \z@skip}
\newcommand{\R}{\mathbb{R}}
\newcommand{\ep}{\epsilon}
\newcommand{\TV}{\textnormal{TV}}
\newcommand{\X}{\mathcal{X}}
\newcommand{\f}[2]{\dfrac{#1}{#2}}
\newcommand{\ff}[2]{\tfrac{#1}{#2}}
\newcommand{\de}{\delta}
\newcommand{\T}{\theta}
\newcommand{\su}[2]{\mathlarger{\sum\limits_{#1}^{#2}}}
\newcommand{\p}{\partial}
\newcommand{\lt}{\left(}
\newcommand{\rt}{\right)}
\newcommand{\Lt}{\left[}
\newcommand{\Rt}{\right]}
\newcommand{\A}{\alpha}
\newcommand{\F}{\mathcal{F}}
\newcommand{\I}[2]{\mathlarger{\int_{#1}^{#2}}}
\newcommand{\G}{\nabla}
\renewcommand{\P}{\mathcal{P}}
\newcommand{\N}{\mathcal{N}}
\newcommand{\ra}{\rightarrow}
\newcommand{\1}{\mathbf{1}}
\providecommand{\norm}[1]{\left\lVert#1\right\rVert}
\newcommand{\defeq}{:=}
\newcommand{\E}{\mathbb{E}}
\DeclareMathOperator*{\argmin}{argmin}
\DeclareMathOperator*{\argmax}{argmax}
\newcommand{\iter}[1]{^{(#1)}}
\newtheorem{lem}{Lemma}
\newtheorem{example}{Example}[section]
\newcommand{\unaryminus}{\scalebox{0.75}[1.0]{\( - \)}}
\newcommand{\citen}[1]{[\citenum{#1}]}
\newcommand{\dx}[0]{\mathrm{d} x}
\crefname{appsec}{Appendix}{Appendices}
\newcommand{\name}{{\textsc{George}}}
\title{No Subclass Left Behind: Fine-Grained Robustness in \linebreak Coarse-Grained Classification Problems}
\author{%
  Nimit S. Sohoni, Jared A. Dunnmon, Geoffrey Angus, Albert Gu, Christopher R\'e \\
  \emph{Stanford University} \\
  \small nims@stanford.edu, jdunnmon@cs.stanford.edu, gdlangus@cs.stanford.edu, \\
  \small albertgu@stanford.edu, chrismre@cs.stanford.edu
}
\begin{document}

\maketitle

\begin{abstract}
In real-world classification tasks, each class often comprises multiple finer-grained ``subclasses.''
As the subclass labels are frequently unavailable,
models trained using only the coarser-grained class labels often
exhibit highly variable performance across different subclasses.
This phenomenon, known as \emph{hidden stratification}, has
important consequences for models deployed in safety-critical applications such as medicine.
We propose \name, a method to both measure and mitigate hidden stratification
\textit{even when subclass labels are unknown}.
We first observe that unlabeled subclasses are often separable in the feature space of deep neural networks,
and exploit this fact to estimate subclass labels for the training data via clustering techniques.
We then use these approximate subclass labels as a form of noisy supervision
in a distributionally robust optimization objective.
We theoretically characterize the performance of \name~in terms of the worst-case generalization error across any subclass.
We empirically validate \name~on a mix of real-world and benchmark image classification datasets, and
show that our approach boosts worst-case subclass accuracy by up to 22 percentage
points compared to standard
training techniques, without requiring any prior information about the subclasses.
\end{abstract}

\section{Introduction}
In many real-world classification tasks, each labeled class consists of multiple semantically distinct \emph{subclasses} that are unlabeled.
Because models are typically trained to maximize \emph{global} metrics such as average performance,
they often underperform on important subclasses~\citen{yao2011combining, recht2019imagenet}.
This phenomenon---recently termed \textit{hidden stratification}---can
lead to skewed assessments of model quality and result in unexpectedly poor performance when models are deployed~\citen{oakden2019hidden}.
For instance, a medical imaging model trained to classify between benign and abnormal lesions
may achieve high overall performance, yet consistently mislabel a rare but critical abnormal subclass as ``benign'' \citen{Dunnmon2019-rr}.
As another example, a well-known model for classifying chest radiographs was
shown to perform substantially worse at recognizing pneumothorax (collapsed lung)
on the subclass of pneumothorax images without a chest drain--which is worrisome since
chest drains are the common form of \emph{treatment} for the condition,
so the drain-free subclass is in fact the clinically important one \citen{oakden2019hidden}.

Modern robust optimization techniques can improve performance on poorly-performing groups
when the group identities are known \citen{sagawa2020distributionally}. However,
in practice, a key obstacle is that \textit{subclasses are often unlabeled}, or even unidentified.
This makes even detecting such performance gaps---let alone mitigating them---a challenging problem.
Nevertheless, recent empirical evidence~\citen{oakden2019hidden} encouragingly suggests that feature
representations of deep neural networks
often carry information about unlabeled subclasses (e.g., \Cref{fig:isic-clusters}).
Motivated by this observation, we propose a method for addressing hidden stratification, by
 both measuring and improving worst-case
subclass performance in the setting where subclass labels are unavailable.
Our work towards this is organized into four main sections.

\begin{wrapfigure}{R}{0.45\textwidth}
\vspace{-\intextsep}
\begin{center}
\includegraphics[height=1.5in]{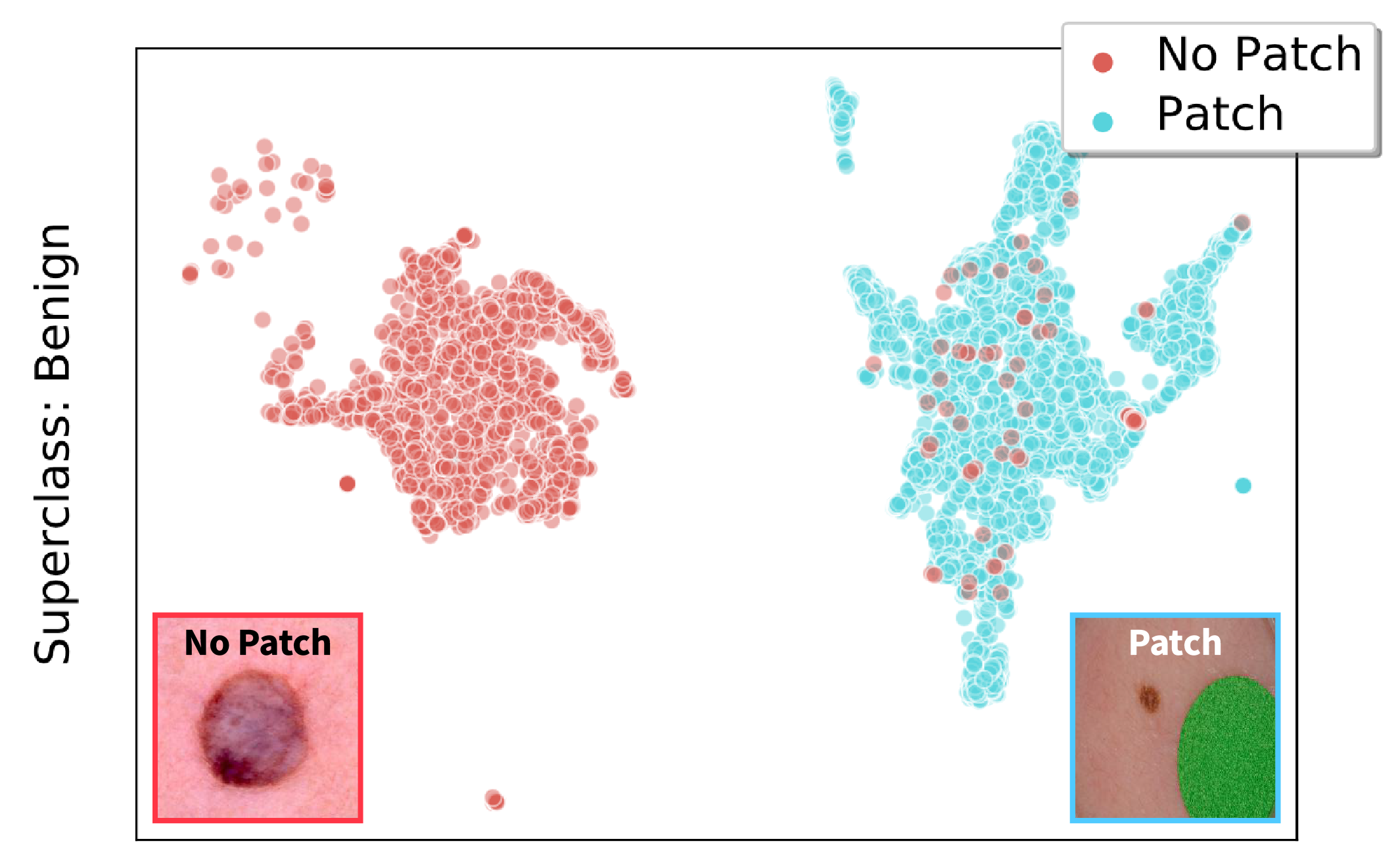}
\caption{Benign class examples in the feature space of a model classifying
skin lesions as benign or malignant. Benign examples containing a
brightly colored patch (blue) and those without a patch (red)
are separable in model feature space,
even though the labels do not specify the presence of patches.}
\label{fig:isic-clusters}
\end{center}
\vspace{-1.5em}
\end{wrapfigure}

First, in Section~\ref{sec:sec2} we propose a simple generative model of the data labeling process.
Using this model, we show that when label annotations are insufficiently fine-grained---as is often the case in real-world datasets---hidden stratification can naturally arise.
For instance, an image classification task might be to classify birds vs. frogs; if labels
are only provided for these broad classes, they may fail to capture visually meaningful finer-grained, intra-class variation
(e.g., ``bird in flight'' versus ``bird in nest'').
We show that in the setting of our generative model, standard training via empirical risk minimization (ERM)
can result in arbitrarily poor performance on underrepresented subclasses.

Second, in Section~\ref{sec:alg-framework} we use insights from this generative model to motivate
\name, a two-step procedure for alleviating hidden stratification by first \emph{estimating} the subclass labels
and then \emph{exploiting} these estimates to train a robust classifier.
To {estimate} subclass labels, we train a standard model on the task, and split each class (or ``superclass,''
for clarity) into estimated subclasses via unsupervised clustering in the model's feature space.
We then {exploit} these estimated subclasses
by training a new model to optimize \emph{worst-case} performance over all estimated subclasses using group distributionally robust optimization (GDRO) \citen{sagawa2020distributionally}.
In this way, our framework allows ML practitioners to automatically
detect poorly-performing subclasses and improve performance on them, without needing to resort to expensive manual relabeling of the data.

Third, in Section~\ref{sec:analysis}
we use our generative framework to prove that---under sufficiently strong conditions on the data distribution
and the quality of the recovered clusters---\name~can reduce the subclass performance gap,
attaining the same asymptotic sample complexity rates as if the true subclass labels were known.

Fourth, in Section~\ref{sec:experiments} we empirically validate the ability of \name~to both \emph{measure} and \emph{mitigate} hidden stratification on
four image classification tasks, comprising
both robustness benchmarks and real-world datasets.
We demonstrate that the first step of \name---training an ERM model and clustering the superclass features---often recovers clusters that align closely with true subclasses.
We evaluate the ability of these clusters to measure the worst-case subclass (i.e., ``robust'') performance:
on average, the gap between worst-case cluster performance and worst-case subclass performance is less than
half the gap between overall and worst-case subclass performance,
indicating that \name~enables more accurate measurement of robust performance.
Next, we show that the second stage of \name---retraining a robust model using cluster assignments as proxy subclass labels---reduces average
worst-case subclass error rates by 22\% on these datasets. 
For comparison, the state-of-the-art ``oracle'' GDRO method that \emph{does} require subclass labels \citep{sagawa2020distributionally}
reduces average worst-case subclass error rates by 51\%.
As an extension, we show that leveraging recent pretrained image embeddings \cite{koleskinov2020bit} for clustering
can substantially further improve the robust performance of \name, in some cases
to nearly match the performance of GDRO trained using the true subclass labels.\footnote{Code for \name\ can be found at \url{https://github.com/HazyResearch/hidden-stratification}.}

\section{Background}
\subsection{Related Work}
\label{sec:related}
Our work builds upon prior work from three main areas: robust optimization,
representation learning, and unsupervised clustering. We provide a more extensive
discussion of related work in Appendix \ref{app:related}.

\textbf{Distributionally Robust Optimization.}
Robustness and fairness is an active research area in machine learning \citen{barocas-hardt-narayanan, hardt2016equality, lipton2018does, kearns2019average}.
\emph{Distributionally robust optimization} (DRO) attempts to guarantee
good performance in the presence of distribution shift, e.g., from adversarial perturbations
\cite{staib2017distributionally, sinha2018certifying} or evaluation on arbitrary subpopulations \citep{duchidistributionally}. 
Because these notions of robustness can be
pessimistic \citen{hu2018does}, others investigate \emph{group}
DRO (GDRO), which optimizes
worst-case performance over a known set of groups (partitions) of the data \citen{hu2018does, sagawa2020distributionally}.
A major obstacle to applying GDRO methods in practice is that group labels are often unavailable;
in our work, we aim to address this issue in the classification setting.

\textbf{Representation Learning \& Clustering.}
Our approach relies on estimating unknown subclass labels by
clustering a feature representation of the data. Techniques for learning
semantically useful image features include 
autoencoder-based methods \citen{mcconville2019n2d, shukla2018semi},
the use of unsupervised auxiliary tasks \citen{asano2020critical, chen2020simple},
and pretraining on massive datasets \citen{koleskinov2020bit}.
Such features may be used for unsupervised identification of classes,
either using clustering techniques \citen{caron2018deep} or an end-to-end approach \citen{ji2019invariant, gansbeke2020learning}.
It has also been observed that when a model is trained on coarse-grained class labels,
the data \emph{within each class} can often be separated into distinct clusters in model feature space 
(e.g., \citen{oakden2019hidden}). While we primarily focus on the latter
approach, we also evaluate the utility of pretrained embeddings as a source of features for clustering.

\subsection{Problem Setup}
\label{sec:setup}
We are given $n$ datapoints $x_1, \dots, x_n \in \X$ and associated \emph{superclass} labels $y_1, \dots, y_n \in \{1,\dots,B\}$.\footnote{We assume $B > 1$ since otherwise the ``classification problem'' is trivial.} In addition, each datapoint $x_i$ is associated with a latent (unobserved) \emph{subclass} label $z_i \in \{1,\dots,C\}$. We assume that $\{1,\dots,C\}$ is partitioned into disjoint nonempty sets $S_1,\dots,S_B$ such that if $z_i \in S_b$, then $y_i = b$; in other words, the subclass label $z_i$ determines the superclass label $y_i$. Let $S_b$ denote the set of all subclasses comprising superclass $b$, and $S(c)$ denote the superclass corresponding to subclass $c$.

Our goal is to classify examples from $\X$ into their correct \emph{superclass}.
Given a function class $\mathcal{F}$, it is typical to seek a classifier $f \in \mathcal{F}$ that maximizes overall population accuracy:
\begin{equation}
\label{eq:erm}
\argmax\limits_{f \in \mathcal{F}} \mathbb{E}_{(x,y)}\left[\1(f(x) = y)\right].
\end{equation}
By contrast, we seek to maximize the \emph{robust accuracy},
defined as the \emph{worst-case} expected accuracy over all subclasses:
\begin{equation}
\label{eq:robust}
\argmax\limits_{f \in \mathcal{F}} \min\limits_{c \in \{1,\dots,C\}} \mathbb{E}_{(x,y) | z = c}\left[\1(f(x) = y) \right].
\end{equation}
Note that $y$ is fixed conditional on the value of $z$. As we cannot directly optimize the population accuracy, we select a surrogate loss function $\ell$ and attempt to minimize this loss over the training data.
For instance, the standard ERM approach to approximate \eqref{eq:erm} minimizes the empirical risk (i.e., training loss) $R(f)$:
\begin{equation}
\argmin\limits_{f \in \mathcal{F}} \left\{ {R}(f) \defeq \tfrac{1}{n}\textstyle\sum\limits_{i=1}^n \ell(f(x_i), y_i) \right\}.
\end{equation}
To approximate \eqref{eq:robust}, if we knew $z_1, ..., z_n$ we could minimize the \emph{worst-case per-subclass training risk} by solving:
{\small
\begin{equation}
\label{eq:robust_obj}
\argmin\limits_{f \in \mathcal{F}} \left\{ {R}_{\text{robust}}(f) \defeq \max\limits_{c \in \{1,\dots,C\}} \ff{1}{n_c} \textstyle\sum\limits_{i=1}^n \mathbf{1}(z_i = c) \ell(f(x_i), y_i) \right\},
\end{equation}}%
where $n_c = \sum_{i=1}^n \mathbf{1}(z_i = c)$ is the number of training examples from subclass $c$.
 ${R}_{\text{robust}}(f)$ is the ``robust loss'' achieved by $f$.
Our goal is to learn a model $\tilde{f} \in \mathcal{F}$ such that
$
{R}_{\text{robust}}(\tilde{f}) - \min\limits_{f \in \mathcal{F}} \lt {R}_{\text{robust}}(f) \rt
$ 
is small with high probability.
When the $z_i$'s are known, Eq.~\eqref{eq:robust_obj} can be tractably optimized using GDRO~\citen{hu2018does, sagawa2020distributionally}.
However, we do \emph{not} assume access to the $z_i$'s;
we seek to approximately minimize $R_{\text{robust}}$ {without knowledge of the subclass labels}.

\section{Modeling Hidden Stratification}
\label{sec:sec2}
In Section~\ref{sec:generative-model}, we introduce a generative model of the data labeling process.
In Section~\ref{sec:erm-failure}, we use this model to explain how hidden stratification can occur,
and show that in the setting of this model ERM can attain arbitrarily poor robust risk compared to GDRO.

\subsection{A Model of the Data Generating and Labeling Process}
\label{sec:generative-model}
In real datasets, individual datapoints are typically described by multiple different attributes,
yet often only a subset of these are captured by the class labels. For example,
a dataset might consist of images labeled ``cat'' or ``dog.''
These coarse class labels may not capture other salient attributes (color, size, breed, etc.);
these attributes can be interpreted as latent variables representing different subclasses.

We model this phenomenon with a hierarchical data generation process.
First, a binary vector $\vec{Z}\,{\in}\,\{\unaryminus 1, +1\}^k$ is sampled from a distribution $\P(\vec{Z})$.\footnote{In this paper, we use $\mathcal{P}$ to denote a distribution and $p$ to denote its density.}
Each entry $Z_i$ is an attribute, while each unique value of $\vec{Z}$ represents a different subclass.
Then, a latent ``feature vector'' $\vec{V} \in \R^k$ is sampled from a distribution conditioned on $\vec{Z}$.
Specifically, when conditioned on $Z_i$, each individual feature $V_i$ is Gaussian and independent of the $Z_j$'s with
$j > i$.
Finally, the datapoint $X \in \mathcal{X}$ is determined by the latent features
$\vec{V}$ via a fixed map $g : \R^k \to \mathcal{X}$.
Meanwhile, the superclass label $Y$ is equal to $h(\vec{Z})$, where $h$ is a fixed discrete-valued function.
In particular, $h$ may only depend on a subset of the $Z_i$ attributes;
the $Z_i$'s which do not influence the label $Y$
correspond to hidden subclasses.
$X, Y$ are observed, while $\vec{V}, \vec{Z}$ are not.
Figure \ref{fig:generative}a illustrates this generative process;
Figure \ref{fig:generative}b presents an analogue on the Waterbirds~dataset \citen{sagawa2020distributionally}.

A key assumption is that the subclasses are ``meaningful'' in some sense, rather than just arbitrary groups of datapoints.
Thus, rather than attempting to enforce good performance on \emph{all possible subsets} of the data,
we assume some meaningful structure on the subclass data distributions.
We model this via the Gaussian assumption on $\P(V_i | \vec{Z})$,
which is similar to that often made for the latent space of GANs \citen{bojanowski2018optimizing}. 
Consequently, the data distribution is a mixture of Gaussians in the ``canonical feature space'' $\vec{V}$,
which
facilitates further theoretical analysis (Section~\ref{sec:analysis}).
Our generative model also bears similarity to that of \citen{hu2018does},
who use a hierarchical data-generation model to analyze the behavior of DRO methods
in the presence of distribution shift.

\begin{figure*}[!t!]
\centerfloat
\begin{subfigure}{0.3\linewidth}
\centering
\includegraphics[height=1 in]{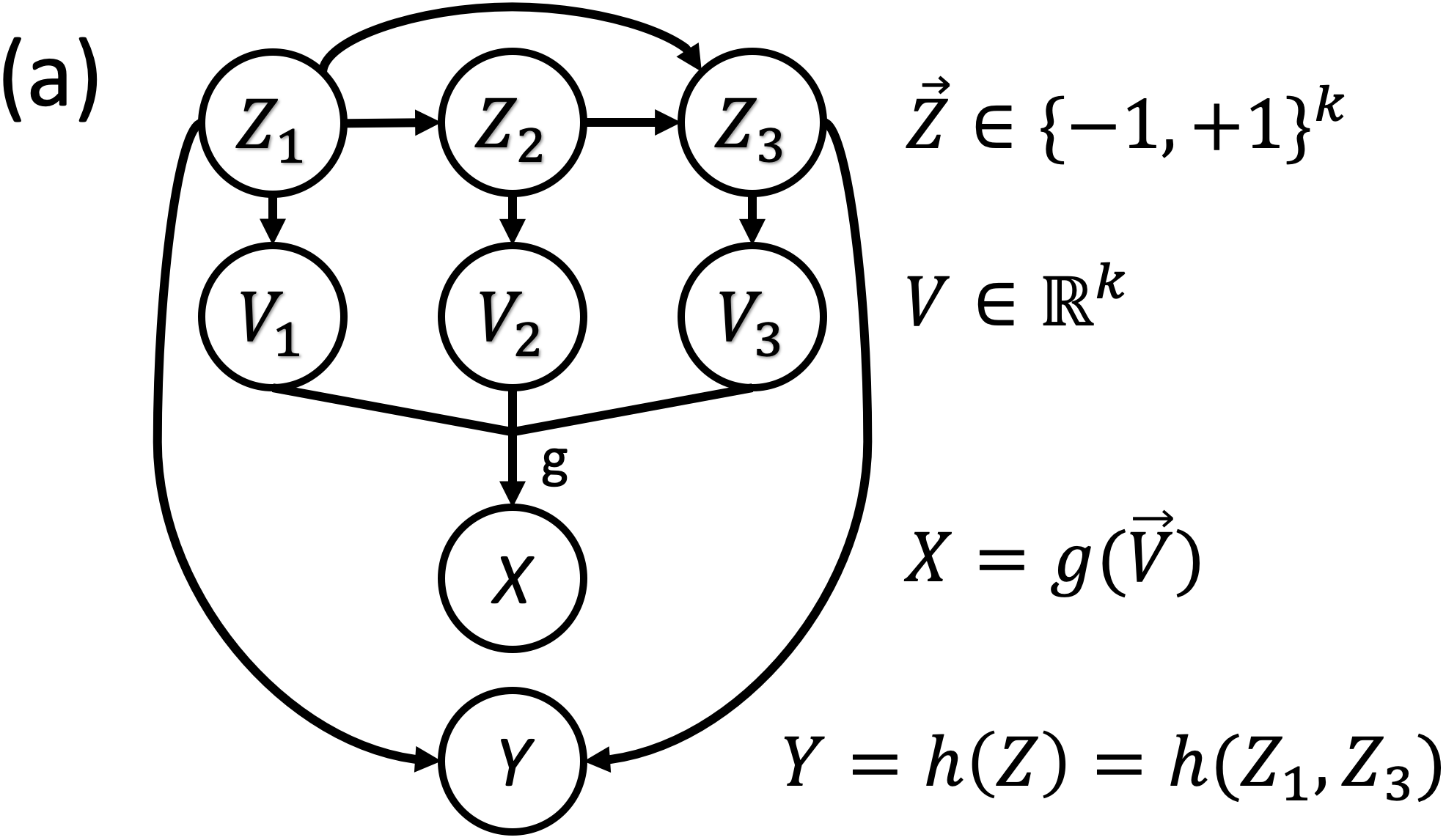}
\end{subfigure}%
\quad
\begin{subfigure}{0.65\linewidth}
\centering
\includegraphics[trim=0mm 5mm 0mm 0mm clip, height=0.95 in]{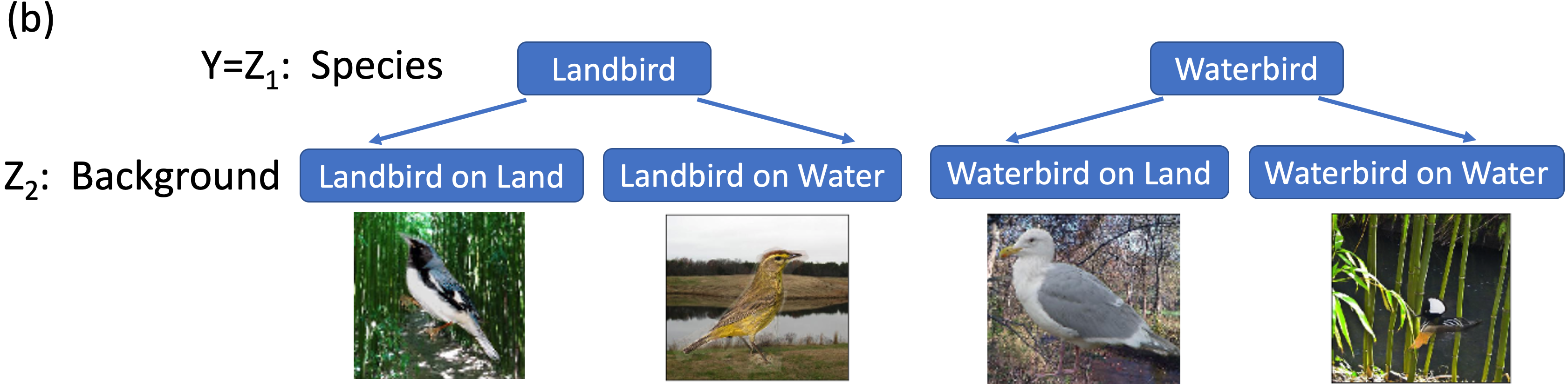}
\end{subfigure}%
\caption{(a) Generative model of hidden stratification: attributes $Z$ determine features $\vec{V}$ and labels $Y$; mapping $g$ transforms $\vec{V}$ to yield observed data $X$. (b) On the Waterbirds dataset \citen{sagawa2020distributionally},
attributes $(Z_1,Z_2)$ denote species and background type respectively; the label $Y$ is the species type.}
\label{fig:generative}
\end{figure*}

\subsection{What Causes Hidden Stratification, and When Can It Be Fixed?}
\label{sec:erm-failure}
We now use our generative model to help understand why hidden stratification
can occur, and present a simple example in which ERM is provably suboptimal
in terms of the robust risk.

\begin{wrapfigure}{R}{0.45\textwidth} 
\vspace{-\intextsep}
\begin{center}
  \includegraphics[height=1.4in]{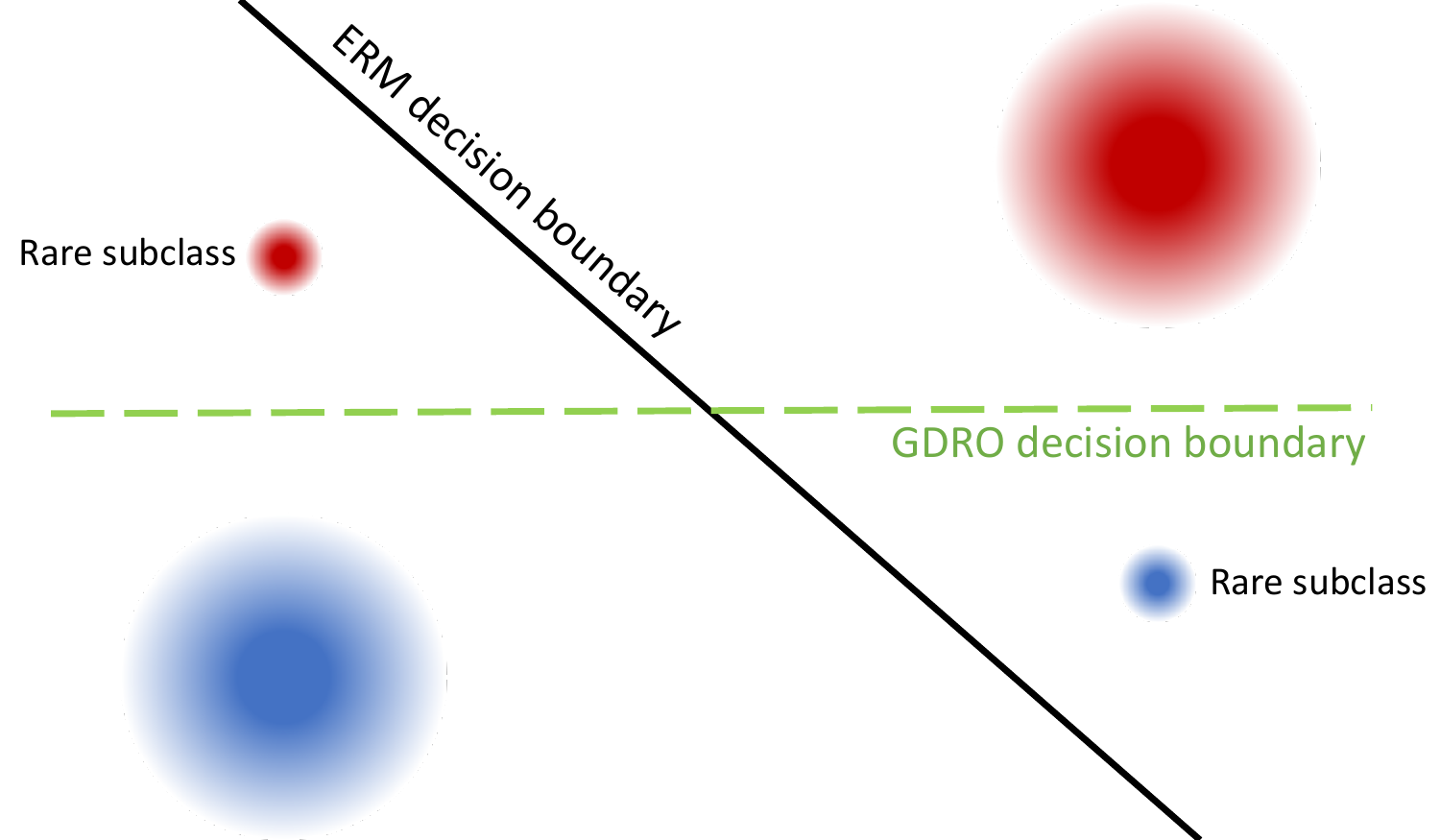}
\caption{
As $\A \downarrow 0$ in Example \ref{example:erm_fail},
top-left \& lower-right subclasses get rarer
and are misclassified by ERM (black boundary),
whereas GDRO learns the optimal robust boundary (green)
to classify red vs. blue superclasses.}
\label{fig:motivation}
\end{center}
\vspace{-3em}
\end{wrapfigure}

We distinguish between two main causes of hidden stratification: \emph{inherent hardness}
and \emph{dataset imbalance}. First,
certain subclasses are ``inherently harder'' to classify because they are more similar
 to other superclasses.
We define the inherent hardness of a task
as the minimum attainable robust error;
inherent hardness thus lower bounds the worst-case subclass error of any model.
See Appendix \ref{app:theory} for more discussion.

Second, imbalance in subclass sizes can cause ERM to underserve rare subclasses,
since it optimizes for average-case performance.
We provide a simple concrete example (\ref{example:erm_fail}) below.
Unlike inherent hardness, robust performance gaps arising from
dataset imbalances \emph{can be resolved} if subclass labels are known,
by using these labels to minimize the objective in Equation \eqref{eq:robust_obj} via GDRO.

\begin{example}
\label{example:erm_fail}
Figure \ref{fig:motivation} depicts an example distribution generated by the model in Section \ref{sec:generative-model}.
In this example, the binary attribute vector $\vec{Z}$ has dimension 2, i.e., $\vec{Z}= (Z_1,Z_2)$, while only $Z_2$ determines the superclass label $Y$, i.e., $Y = Z_2$.
The latent attribute $Z_1$ induces two subclasses in each superclass,
each distributed as a different Gaussian in feature space,
with mixture proportions $\A$ and $1-\A$ respectively.
For linear models with regularized logistic loss, there exists a family of distributions of this form
such that as the proportion $\A$ of the rare subclasses goes to $0$,
the worst-case subclass accuracy of ERM is only $O(\A)$, while that of GDRO is $1-O(\A)$.
(See Appendix \ref{app:prop1} for the specific parameters of the per-subclass distributions in this example
and a proof of the claim.)
\end{example}

Example \ref{example:erm_fail} illustrates that when the dataset is imbalanced---i.e., the distribution of the
underlying attributes $\vec{Z}$ is highly nonuniform---knowledge of subclass labels
 can improve robust performance. We thus ask:
\emph{how well can we estimate subclass labels if they are not provided}?
In the extreme, if two subclasses of a superclass have the same distribution in feature space,
we cannot distinguish them.
However, the model must then perform the same on each subclass, since its prediction
is a fixed function of the features!
Conversely, if one subclass has higher average error,
it must lie ``further across'' the decision boundary,
meaning that the two subclasses must be separable to some degree;
the larger the accuracy gap, the more separable the subclasses are.
We formalize this in Appendix~\ref{app:gap}.

\section{\name: A Framework for Mitigating Hidden Stratification}
\label{sec:alg-framework}
Inspired by the insights of Section \ref{sec:sec2},
we propose \name,
an algorithm to mitigate hidden stratification. A schematic overview of \name~is provided in Figure~\ref{fig:schematic}.

\begin{figure*}[!t!]
\makebox[\textwidth][c]{\includegraphics[height=0.5 in]{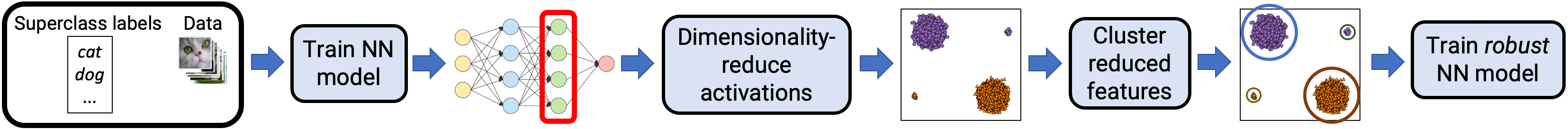}}
\caption{Schematic describing \name. The inputs are the datapoints and superclass labels. First, a model is trained with ERM on the superclass classification task. The activations of the penultimate layer are then dimensionality-reduced, and clustering is applied to the resulting features to obtain estimated subclasses. Finally, a new model is trained using these clusters as groups for GDRO.}
\label{fig:schematic}
\end{figure*}

Under the generative model of Section \ref{sec:generative-model}, each subclass is
described by a different Gaussian in latent feature space. This suggests that a natural approach to
\emph{identify} the subclasses is to transform the data into feature space, and then cluster
the data into estimated subclasses. However, this feature space is unknown.
To obtain a surrogate for this feature space, we leverage the
empirical observation that feature representations of deep neural networks trained on a \emph{superclass}
task can carry information about unlabeled \emph{subclasses} \citen{oakden2019hidden}.
Next, to \emph{improve performance} on these estimated subclasses, we minimize the maximum
\emph{per-cluster} average loss, by using the clusters as groups in the GDRO
objective~\citen{sagawa2020distributionally}. We provide more details below, and pseudocode in Appendix \ref{app:experiments} (Algorithm~\ref{alg:main}).

\subsection{Step 1: Estimating Approximate Subclass Labels}
\label{sec:generate-labels}
In the first step of \name, we train an ERM model on the superclass task
and cluster the feature representations of each superclass to generate proxy subclass labels.
Formally, we train a deep neural network $L \circ f_\theta$ to predict the superclass labels,
where $f_\theta : \mathcal{X} \to \R^d$ is a parametrized ``featurizer'' and $L: \R^d \to \Delta^B$ outputs classification logits.
We then cluster the features output by $f_\theta$ for the data of each
superclass into $k$ clusters, where $k$ is chosen automatically.
To each datapoint $x_i$ in the training and validation sets, we associate its cluster assignment $\tilde{z}_i \in \{1,\dots,k\}$.
We use the $\tilde{z}_i$'s as surrogates for the true subclass labels $z_i$.

\subsubsection{Clustering Details}
In practice, we apply UMAP dimensionality reduction \citep{mcinnes2018umap} before clustering,
as we find it improves results (Appendix \ref{app:experiments}).
Additionally, based on the insight of Section~\ref{sec:erm-failure}
that subclasses with high loss differences are more separable,
we also use the loss component (i.e., the component of the activation vector orthogonal to the decision boundary)
as an alternative representation.

We first tried using standard clustering methods (such as $k$-means and Gaussian mixture model clustering) in our work.
By visual inspection, we found that these methods often failed to capture smaller clusters, even if they were well-separated.
However, missing small clusters like this is problematic for \name, since these small clusters frequently correspond to
rare, low-performing subclasses. Additionally, these methods require specification of $k$.
We apply \emph{over-clustering} (clustering using a larger $k$) to remedy this problem in an efficient manner.
Naive overclustering also has drawbacks as it still requires manual specification of $k$, and if $k$ is set too large,
several clusters can be spurious and result in overly pessimistic and unstable
measurements of robust performance (as we explore in Appendix \ref{sec:fix-k}).
Thus, we develop a fully automated criterion based on the commonly used Silhouette (SIL) criterion \citep{silhouette} to
search for the number of clusters $k$, over-cluster to find smaller clusters that were missed, and filter out the
spurious overclusters. Empirically, our clustering approach significantly improves performance over ``vanilla'' clustering;
we hope that it may be of independent interest as well.
We describe our procedures in more detail in Appendix \ref{app:experiments}.

$k$ and other clustering and dimensionality reduction hyperparameters
are selected automatically based on an unsupervised SIL criterion \citep{silhouette} as described further in Appendix \ref{app:experiments}.

\subsection{Step 2: Exploiting Approximate Subclass Labels}
\label{sec:exploit-labels}
In the second step of \name, we use the GDRO algorithm from \cite{sagawa2020distributionally} and our estimated subclass labels $\tilde{z}_i$ 
to train a new classifier with better worst-case performance on the estimated subclasses.
Given data $\{(x_i, y_i, t_i)\}_{i=1}^n$ and loss function $\ell$, GDRO minimizes
$\max\limits_{t \in \mathcal{T}} \mathop{\E}\limits_{x,y \sim \hat{\P}_t} \Lt \ell((L \circ f_\T)(x), y) \Rt$
with respect to parameters $(L,\theta)$, where $\mathcal{T}$ is the discrete set of
groups and $\hat{\mathcal{P}}_t$ is the empirical distribution of examples from group $t$.
This
coincides with the true objective~\eqref{eq:robust_obj} when the true subclass labels $z_i$ are used as the group labels $t_i$.
In our case, we use the cluster assignments $\tilde{z}_i$ as the group labels instead,
i.e., minimize $\max\limits_{1 \le \tilde{z} \le k} \mathop{\E}\limits_{x,y \sim \hat{\mathcal{P}}_{\tilde{z}}} \Lt \ell(L \circ f_\T)(x), y) \Rt$.\footnote{In
Appendix \ref{app:theory}, we present an extension to the GDRO algorithm of
\citen{sagawa2020distributionally} to handle the case where the group assignments $\tilde{z}_i$ can be probabilistic labels in $\Delta^k$, instead of hard labels in $\{1,\dots,k\}$.}
Similarly, using the clusters fit on the training set, we generate cluster assignments for the validation set points and use these to compute the validation worst-case per-cluster performance. \name~uses this metric, rather than overall validation performance, to select the best model checkpoint over the training trajectory.

\section{Analysis of~\name}
\label{sec:analysis}

We now analyze a simple mixture model data distribution, based on the generative model presented in Section~\ref{sec:generative-model}.
 We show that in this setting, unlike ERM, \name~converges to the optimal robust risk at the same sample complexity rate as GDRO
 when it is able to recover the true latent features $\vec{Z}$ (and when ``soft'' group predictions are used).
Specifically, Example \ref{example:erm_fail}
shows that the robust risk of ERM can be arbitrarily worse than that of
GDRO, for data generated according to the generative model in
\Cref{sec:generative-model}. By contrast,
if the subclass labels estimated by \name~are sufficiently accurate,
then the objective minimized in Step 2 of \name~well approximates
the true GDRO objective \eqref{eq:robust_obj}.
In Theorem~\ref{thm:l3}, we use this to show that, when
each subclass is described by a different Gaussian in feature space,
\name~(with soft group predictions)
achieves the same optimal asymptotic sample complexity rates
as GDRO trained with true subclass labels.
We sketch the argument below; full proofs are deferred to Appendix \ref{app:theory}.

First, suppose we could compute the true data distribution $\mathcal{P}(x,y,z)$.
Our goal is to minimize the maximum per-subclass training loss by solving Eq.~\eqref{eq:robust_obj}.
Even with infinite data, we cannot estimate the \emph{individual} $z_i$'s to arbitrary accuracy,
so we cannot directly compute the objective in \eqref{eq:robust_obj}.
However, we can estimate the per-subclass losses as follows:
for each training example $(x_i, y_i)$, we use $\mathcal{P}$
to compute the probability that it comes from subclass $c$,
and use that to weight the loss corresponding to that example.
In Lemma~\ref{lem:unbiased}, we show that when the training data is randomly sampled from $\P$, this yields an unbiased estimate of the average per-subclass empirical risk.

\begin{restatable}{lem}{unbiasedness}
\label{lem:unbiased}
Let ${R}_c$ be the sample average loss of examples in subclass $c$. Let $w(x, c) \defeq \ff{p(x | z = c)}{{p}(x | y = S(c))}$. Let $\tilde{R}_c$ be the sample average of $w(x_i, c) \ell(f(x_i), y_i)$ over all examples $x_i$ with \emph{superclass} label $y_i = S(c)$.
Then $\tilde{R}_c$ is an unbiased estimate of ${R}_c$, and their difference converges to $0$ at the rate $O({1}/{\sqrt{n}})$.
\end{restatable}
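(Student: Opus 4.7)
The plan is to prove both parts via a standard importance sampling (change of measure) argument. The key identity is that, since $y$ is deterministic given $z$ (specifically $y = S(c)$ whenever $z = c$), we have $p(x \mid z = c)$ absolutely continuous with respect to $p(x \mid y = S(c))$ on the support of interest, and the ratio $w(x,c)$ is exactly the likelihood ratio relating the two conditional densities.

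First I would verify unbiasedness. The estimator $\tilde{R}_c$ averages $w(x_i,c)\ell(f(x_i),y_i)$ over the (random) set of training examples with $y_i = S(c)$; these examples are i.i.d.\ draws from $p(x \mid y = S(c))$. Taking the conditional expectation given the number of such examples, the summands have common mean
\begin{equation*}
\int w(x,c)\,\ell(f(x),S(c))\,p(x \mid y = S(c))\,dx
= \int \ell(f(x),S(c))\,p(x \mid z = c)\,dx,
\end{equation*}
where the second step cancels $p(x \mid y = S(c))$ against the denominator of $w(x,c)$. The right-hand side is exactly $\mathbb{E}[\ell(f(X),Y) \mid Z = c]$, which is the population quantity estimated (without bias) by $R_c$. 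Pulling the expectation through the sample average then gives $\mathbb{E}[\tilde{R}_c] = \mathbb{E}[R_c]$.

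For the $O(1/\sqrt{n})$ rate, I would decompose $\tilde{R}_c - R_c$ as the deviation of an i.i.d.\ sample average from its mean. Let $N_{S(c)} = \sum_i \mathbf{1}(y_i = S(c))$; conditioned on $N_{S(c)}$, the summands $W_i \defeq w(x_i,c)\,\ell(f(x_i),y_i)$ are i.i.d.\ with finite variance (assuming bounded loss and a bounded-second-moment likelihood ratio, both of which are mild and standard under the Gaussian generative model of Section~\ref{sec:generative-model}). A Chebyshev/Hoeffding-type bound yields $|\tilde{R}_c - R_c| = O_p(1/\sqrt{N_{S(c)}})$. Since the superclass proportion $\mathbb{P}(Y = S(c))$ is bounded away from $0$, a standard concentration argument gives $N_{S(c)} = \Theta(n)$ with high probability, upgrading the rate to $O(1/\sqrt{n})$.

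The main obstacle is the regularity needed to control the second moment of $W_i$: the likelihood ratio $w(x,c)$ need not be bounded even though subclass densities are Gaussian, since two Gaussians with different covariances can yield an unbounded ratio. I would handle this by explicitly assuming either (i) the loss $\ell$ is bounded and $\mathbb{E}[w(X,c)^2 \mid Y = S(c)] < \infty$, or (ii) the stronger structural condition used later in Theorem~\ref{thm:l3} (e.g., matching covariances across subclasses within a superclass), under which $w$ is bounded on the relevant support. Either assumption makes the variance of $W_i$ finite and reduces the rate claim to an invocation of the central limit theorem / Chebyshev inequality, completing the proof.
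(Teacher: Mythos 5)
Your argument follows the same route as the paper: a change-of-measure identity for unbiasedness, followed by concentration of the i.i.d.\ weighted summands. The unbiasedness half matches the paper's computation exactly. However, the ``main obstacle'' you identify --- that the likelihood ratio $w(x,c)$ may be unbounded because two Gaussians with different covariances can have an unbounded density ratio --- is not actually an obstacle here, and the extra assumptions you introduce to work around it are unnecessary. The denominator $p(x \mid y = S(c))$ is not an arbitrary density: it is the mixture $\sum_{c' \in S_{S(c)}} p(z = c' \mid y = S(c))\, p(x \mid z = c')$, which contains $p(x \mid z = c)$ as one of its components. Hence, pointwise in $x$,
\begin{equation*}
p(x \mid y = S(c)) \;\ge\; p(z = c \mid y = S(c))\, p(x \mid z = c),
\qquad\text{so}\qquad
w(x,c) \;\le\; \frac{1}{p(z = c \mid y = S(c))} \;\le\; \frac{1}{\pi_{\min}},
\end{equation*}
where $\pi_{\min}$ is the minimum subclass proportion. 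This bound requires no Gaussian structure and no second-moment assumption on $w$; it holds for any generative model in which the subclass determines the superclass. The paper derives exactly this bound and then applies Hoeffding's inequality to the (now uniformly bounded) summands $w(x_i,c)\,\ell(f(x_i), y_i)$, which is cleaner than a Chebyshev/CLT argument and gives the high-probability $O(1/\sqrt{n})$ rate directly. Your handling of the random count $N_{S(c)}$ (conditioning and then arguing $N_{S(c)} = \Theta(n)$ w.h.p.) is slightly more explicit than the paper's, which is fine; but you should replace your conditions (i)/(ii) with the pointwise bound above, after which your proof reduces to the paper's.
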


In practice, we do not have access to the true distribution $\P$, so we estimate it with $\hat{\P}$, computed from data.
Thus, the weights $w(x,c)$ are replaced by weights $\hat{w}(x, c)$ estimated from $\hat{\P}$, leading
to an estimate $\hat{R}_c$ of the quantity $\tilde{R}_c$ defined in Lemma~\ref{lem:unbiased}.
Nevertheless, if we can bound the total variation estimation error of $\hat{\P}$,
we can use this to bound the error in this loss estimate, as shown in Lemma~\ref{lem:l2} (Appendix \ref{app:theory}).\footnote{In general, estimation of the data distribution to within small total variation error is a strong requirement, and is difficult to ensure in practice. However, it can be achieved in certain special cases, such as the mixture-of-Gaussian setting in Theorem~\ref{thm:l3}.}
In Theorem~\ref{thm:l3}, we leverage Lemma~\ref{lem:l2} and recent results on learning
Gaussian mixtures~\citen{ashtiani2018near} to show that, when each subclass is described by a different Gaussian,
$\hat{\P}$ can be estimated well enough so that the minimizer of the perturbed robust loss converges to the minimizer of the true robust loss
at the optimal sample complexity rate.

\begin{restatable}{thm}{gaussian}
\label{thm:l3}
Let $\hat{R}_{robust} \defeq \max_c \hat{R}_c$. Suppose $\ell$ and $f$ are Lipschitz, $f$ has bounded parameters,
and $\P(x | z = c)$ is Gaussian and unique for each subclass $c$.
Then, if we estimate $\hat{\P}$ using the algorithm from~\citen{ashtiani2018near}, $\hat{f} \defeq \min\limits_{f \in \F} \hat{R}_{\text{robust}}(f)$
satisfies ${R}_{\text{robust}}(\hat{f}) - \min\limits_{f \in \mathcal{F}} {R}_{\text{robust}}(f) \le \tilde{O}(\sqrt{1/n})$ w.h.p.
\end{restatable}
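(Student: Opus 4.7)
The plan is to chain together three ingredients: the unbiasedness/concentration bound of Lemma~\ref{lem:unbiased}, the density-estimation-based perturbation bound (Lemma~\ref{lem:l2} in the appendix) that turns a total-variation error on $\hat{\P}$ into an error on $\hat{R}_c$, and the Gaussian mixture learning guarantee of \citen{ashtiani2018near}. The high-level idea is standard: show $\hat{R}_{\text{robust}}$ converges to ${R}_{\text{robust}}$ uniformly over $\F$ at rate $\tilde{O}(1/\sqrt{n})$, then apply the usual ERM comparison argument.

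First, I would recall Lemma~\ref{lem:unbiased}: the importance-weighted quantity $\tilde{R}_c$, built from the true $\P$, is an unbiased estimator of the per-subclass population risk and concentrates at rate $O(1/\sqrt{n})$. Second, I would invoke Lemma~\ref{lem:l2} to bound $|\hat{R}_c - \tilde{R}_c|$ in terms of $\TV(\hat{\P}, \P)$; since $\ell$ and $f$ are Lipschitz and $f$ has bounded parameters, the weighted losses are uniformly bounded (up to a tail cutoff on the Gaussian support, which contributes only logarithmic factors), so the gap scales linearly with the TV error. Third, since $\P(x \mid z=c)$ is Gaussian for each $c$ and the subclass Gaussians are unique, $\P(x,y)$ is a mixture of at most $C$ distinct Gaussians, and the result of \citen{ashtiani2018near} yields an estimator $\hat{\P}$ with $\TV(\hat{\P}, \P) \le \tilde{O}(1/\sqrt{n})$ with high probability. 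Combining these three gives $|\hat{R}_c(f) - R_c(f)| \le \tilde{O}(1/\sqrt{n})$ pointwise in $f$, and taking the max over $c$ gives the same bound for $\hat{R}_{\text{robust}}$ versus $R_{\text{robust}}$.

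The main obstacle is upgrading this pointwise control into uniform convergence over $\F$. For this I would build an $\epsilon$-cover of $\F$ using the bounded-parameter assumption, so that $\log \N(\epsilon, \F)$ scales only polynomially in the relevant dimensions. The Lipschitz continuity of $\ell \circ f$ in its parameters then transfers the pointwise bound to every $f \in \F$ at the cost of additional $\tilde{O}(1/\sqrt{n})$ terms, via a standard covering plus union-bound argument (or equivalently a Rademacher complexity bound on the weighted loss class). A small subtlety is that the weights $\hat{w}(x,c)$ depend on $\hat{\P}$, which is itself random, but since the Gaussian mixture estimator is computed once from the data and then held fixed, conditioning on its success event (which has the required high-probability guarantee) reduces this to a standard uniform convergence problem.

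Finally, let $f^\star \defeq \argmin_{f \in \F} R_{\text{robust}}(f)$, and write $\epsilon_n = \tilde{O}(1/\sqrt{n})$ for the uniform bound just obtained. Then
\[
R_{\text{robust}}(\hat{f}) \le \hat{R}_{\text{robust}}(\hat{f}) + \epsilon_n \le \hat{R}_{\text{robust}}(f^\star) + \epsilon_n \le R_{\text{robust}}(f^\star) + 2\epsilon_n,
\]
where the middle inequality uses the definition of $\hat{f}$ as the minimizer of $\hat{R}_{\text{robust}}$. This yields the claimed $\tilde{O}(\sqrt{1/n})$ excess robust risk bound with high probability. The only quantitatively delicate step is the first: ensuring that the TV-to-loss conversion in Lemma~\ref{lem:l2} does not blow up the weights $w(x,c) = p(x \mid z=c)/p(x \mid y=S(c))$ in regions where the denominator is tiny, which I expect can be handled by truncating to a high-probability ball of radius $\tilde{O}(1)$ around the subclass means and absorbing the truncation error into the $\tilde{O}$ notation.
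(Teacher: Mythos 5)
Your overall architecture matches the paper's: Lemma~\ref{lem:unbiased} for the unbiased importance-weighted estimate, Lemma~\ref{lem:l2} to convert total-variation error into loss-estimate error, the guarantee of \citen{ashtiani2018near} for the density estimation, a covering-based uniform convergence step, and the standard ERM comparison at the end. However, there is a genuine gap at the hinge of the argument. Lemma~\ref{lem:l2} requires total-variation bounds on the \emph{individual subclass-conditional distributions}, i.e., $\TV(\P(x \mid z=c), \hat{\P}(x \mid z=c)) \le \ep$ for every subclass $c$ (in addition to the superclass-conditional bounds), because the weights $\hat{w}(x,c) = \hat{p}(x\mid z=c)/\hat{p}(x\mid y=S(c))$ are built from the estimated \emph{components}. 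The algorithm of \citen{ashtiani2018near} only guarantees that the estimated \emph{mixture} is close to the true mixture in total variation; it says nothing directly about whether the estimated components are close to the true components, or even whether they can be matched to the true subclasses. Closeness of mixtures does not in general imply closeness of components (component recovery is strictly harder than density estimation), and bridging this is the most technically involved part of the paper's proof: Lemma~\ref{lem:tv_mixture} shows, via a Taylor expansion of lower bounds on the TV distance and a linear-independence argument on the resulting partial derivatives, that when the true components are distinct with nonzero weights, a TV bound of $\ep$ on the mixtures forces a TV bound of $O(\ep)$ on the components up to permutation. Your proposal jumps from the mixture-level guarantee straight to the per-subclass loss bound, which is exactly the step that needs the most work; note also that the uniqueness hypothesis on the subclass Gaussians in the theorem statement is there precisely to make this component-identification step possible.

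Two smaller points. First, your suggestion to handle the dependence of $\hat{w}$ on the data by ``conditioning on the success event'' does not by itself restore the independence needed for the Hoeffding/uniform-convergence step, since the weights would still be correlated with the same samples appearing in the empirical sums; the paper resolves this by sample splitting (half the data to fit $\hat{\P}$, half to form $\hat{R}_{\text{robust}}$), which does not change the asymptotic rate. Second, the paper applies the mixture-learning algorithm separately to each superclass-conditional distribution $\P(x \mid y = b)$ (a mixture of $|S_b|$ Gaussians) rather than to the full joint, which is what makes the matching of estimated components to subclasses within each superclass well posed; this is a cosmetic rather than substantive difference. Your final chaining inequality and the covering argument are fine and essentially identical to the paper's use of Lemma~\ref{lem:learning}.
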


Theorem~\ref{thm:l3} implies that if each subclass is Gaussian in feature space,
and we have access to this feature space (i.e., we can invert the mapping $g$ from features $\vec{Z}$ to data $X$), then we can cluster the features to estimate $\hat{\P}$,
and the robust generalization performance of the model
that minimizes the resulting perturbed training loss $\hat{R}_{\text{robust}}$ scales
the same as does that of the minimizer of the true robust training loss $R_{\text{robust}}$,
in terms of the amount of data required.
This underscores the importance of recovering a ``good'' feature space;
empirically, we show in
Appendix \ref{app:results} that the choice of model architecture
can indeed dramatically impact the model feature space and thus the ability to recover~subclasses.

\section{Experiments}
\label{sec:experiments}
We empirically validate that~\name~can mitigate hidden stratification across four datasets.
In \Cref{sec:e2e},
we show that
when subclass labels are unavailable, \name~{improves} robust performance
over standard training methods.
In \Cref{sec:clustering-exp}, we analyze the clusters
returned by \name~to understand the reasons for this improvement;
we confirm that \name~identifies clusters that correspond to poorly-performing subclasses,
which enables accurate measurement of robust performance.
In \Cref{sec:val-metric}, we ablate the contributions that \name's robust training objective
and \name's improved measurement of validation robust performance each make to the performance gains of \name.
Finally, in \Cref{sec:bit}, we evaluate the use of recent pretrained image embeddings \citen{koleskinov2020bit}
as a source of features for \name, and find that this further improves performance of \name~on some applications.
Additional details on datasets, model architectures, and experimental procedures are provided in Appendix~\ref{app:experiments}.

\subsection{Datasets}
\label{sec:datasets}
\textbf{Waterbirds.} Waterbirds, a robustness benchmark introduced to evaluate GDRO in \citen{sagawa2020distributionally},
contains images of land-bird and water-bird species on either land or water backgrounds.
The task is to classify images as ``land-bird''  or ``water-bird''; however,
95\% of land (water)-birds are on land (water) backgrounds,
causing ERM to frequently misclassify both land-birds on water and water-birds on land.

\textbf{Undersampled MNIST (U-MNIST).}
We design U-MNIST as a modified version of MNIST \citen{lecun2010mnist}, where the task is
to classify digits as `$<$5' and `$\ge$5' (digits 0-9 are the subclasses).
In addition, we remove 95\% of `8's; due to its rarity,
it is challenging for ERM to perform well on the `8' subclass.

\textbf{CelebA.} CelebA is a common face classification dataset also used as a robustness benchmark in \citen{sagawa2020distributionally}.
The task is to classify faces as ``blond'' or ``not blond.''
Because only 6\% of blond faces are male,
ERM performs poorly on this rare subclass.

\textbf{ISIC.} The ISIC skin cancer dataset \citen{codella2019skin} is a public real-world dataset for classifying skin lesions as ``malignant'' or ``benign.'' 48\% of benign images contain a colored patch.
Of the non-patch examples, 49\% required histopathology (a biopsy) to diagnose. We report AUROC for ISIC, as is standard \citen{rieger2019interpretations}.

\subsection{End-to-End Results}
\label{sec:e2e}
\setlength\tabcolsep{5pt}
\begin{table}[t]
	\centerfloat
  \small
	\begin{tabular}{lcc|cccccccccc}\\
		\toprule
       \multicolumn{1}{c}{\textbf{Method}} &
        \multicolumn{1}{c}{\textbf{Requires}} &
        \multicolumn{1}{c}{\textbf{Metric}} &
      \multicolumn{1}{c}{\textbf{Waterbirds}} &
      \multicolumn{1}{c}{\textbf{\textsc{U-MNIST}}} &
      \multicolumn{2}{c}{\textbf{{ISIC}}} &
       \multicolumn{1}{c}{\textbf{{CelebA}}} & 
      \\
      
      & \multicolumn{1}{c}{\textbf{Subclass Labels?}}  & \multicolumn{1}{c}{\textbf{Type}} & & & \textbf{Non-patch} & \textbf{Histopath.} &
      \\
    \midrule
		ERM & \xmark  & Robust & 63.3($\pm$1.6)  & 93.9($\pm$0.6) 
		& \textbf{.922}($\pm$.003)  & .875($\pm$.005)  & 40.3($\pm$2.3)
		   \\
		& & Overall & 97.3($\pm$0.1) & 98.7($\pm$0.1)
		& \multicolumn{2}{c}{.957($\pm$.002) } & 95.7($\pm$0.0)
		 \\
    \midrule
		\name~(ours) & \xmark & Robust& \textbf{76.2}($\pm$2.0) & \textbf{95.7}($\pm$0.6) 
		& .912($\pm$.005) & \textbf{.876}($\pm$.006) & \textbf{53.7}($\pm$1.3) 
		\\
		& & Overall & 95.7($\pm$0.5)  & 98.1($\pm$0.3)
		&\multicolumn{2}{c}{.927($\pm$.008)} & 94.6($\pm$0.2) 
		 \\
    \midrule
    \midrule
		Subclass-GDRO & \cmark & Robust & 90.7($\pm$0.4) & 96.8($\pm$0.4) 
		 & .923($\pm$.003) & .875($\pm$.004) & 89.3($\pm$0.9)
		 \\
		 & & Overall & 92.7($\pm$0.4)& 98.0($\pm$0.3)
		& \multicolumn{2}{c}{.933($\pm$.005) } & 92.8($\pm$0.1)
		 \\

    \bottomrule
  \end{tabular}
   \caption{
    Robust and overall performance for ERM, \name,
    and subclass-GDRO (i.e., GDRO with true subclass labels).
    Performance metric is accuracy for all datasets but ISIC, which uses AUROC.
     Bolded values are best between ERM and \name, which do not require subclass labels.
     Sub-columns for ISIC represent two different definitions of the ISIC subclasses; see \Cref{sec:clustering-exp}.
  }
  \label{tab:main_results}
\end{table}

We first show that \name~substantially improves the worst-case subclass accuracy, while modestly affecting overall accuracy.
(Recall that we refer to worst-case subclass accuracy as ``robust accuracy'' [Eq.~\eqref{eq:robust}].)
We train models on each dataset in Section \ref{sec:datasets} using (a) ERM, (b) \name, and (c) GDRO with true subclass labels (``Subclass-GDRO''),
and report both robust and overall performance metrics in Table \ref{tab:main_results}.
Compared to ERM, training with \name~improves robust accuracy by up to 22 points,
and substantially reduces the gap between the robust error of the ERM model and
that of the subclass-GDRO model---despite the fact that \name~does not require subclass labels. In
{Appendix \ref{app:results}}, we show that \name~also outperforms other subclass-agnostic baselines,
such as GDRO trained using the \emph{superclasses} as groups.

On Waterbirds, U-MNIST, and CelebA, \name~significantly improves worst-case subclass accuracy over ERM.
On ISIC, all methods perform similarly in terms of both AUROC on the non-patch subclass
and AUROC on the clinically meaningful histopathology subclass.
On CelebA, although \name~improves upon ERM, it substantially underperforms subclass-GDRO.
However, this gap can be closed when improved features are used:
if we cluster pretrained BiT embeddings \citen{koleskinov2020bit} rather than ERM features
and use the resulting cluster assignments for the second stage of \name, the robust accuracy improves to
nearly match that of subclass-GDRO. We describe this experiment in Section \ref{sec:bit}.

In terms of overall performance, ERM generally performs best (as it is designed to optimize for average-case performance),
followed by \name~and then subclass-GDRO. However, this difference is generally much smaller in magnitude than the increase in robust performance.

\subsection{Clustering Results}
\label{sec:clustering-exp}
Step 1 of \name~is to train an ERM model and cluster the data of each superclass in its feature space.
We analyze these clusters to better understand \name's behavior.
First, in \Cref{sec:subclass-recovery} we show that \name~finds clusters that align well with poorly-performing
human-labeled subclasses. This helps explain why the second step of \name,
running GDRO using the cluster assignments as groups, improves performance on these subclasses (as demonstrated in Section~\ref{sec:e2e}).
Next, in \Cref{sec:unlabeled-discovery} we
show that \name~can discover meaningful subclasses
that were not labeled by human annotators.
Finally,  in \Cref{sec:cluster-measure} we show that the worst-case performance measured on the clusters
returned by \name~is a good approximation
of the true robust performance.

\subsubsection{Subclass Recovery}
\label{sec:subclass-recovery}
We evaluate the ability of \name~to identify
clusters that correspond to the true subclasses.
We focus on identification of \emph{poorly-performing} subclasses,
as these determine robust performance.
In Table~\ref{tab:subclass-recovery}, we compute the precision and recall
of the cluster~returned by \name~that most closely aligns with each given subclass.
\emph{Precision} is the fraction of cluster examples with~that subclass label;
\emph{recall} is the fraction of subclass examples assigned to the cluster. 
For each poorly-performing subclass, \name~identifies a cluster with high recall and better-than-random precision. (Interestingly, while the precision and recall are substantially better than random in all cases, they are often still quite far from the optimal value of 1.0, but Step 2 of \name~nevertheless achieves substantial worst-group performance gains.)

We note that the lower recall on ISIC is because the no-patch subclass is often split into multiple clusters;
in fact, this subclass is actually composed of two semantically distinct groups as discussed below.
If these sub-clusters are combined into one, the precision and recall of the resulting cluster at identifying no-patch examples are $> 0.99$ and $> 0.97$ respectively.

\subsubsection{Unlabeled Subclass Discovery}
\label{sec:unlabeled-discovery}
In addition to yielding clusters aligned with human-annotated subclasses,
our procedure can identify semantically meaningful subclasses that were not specified in the human-provided schema.
On U-MNIST, 60\% of trials of \name~partition the ``7'' subclass into two
subclusters, each containing stylistically different images (Figure \ref{fig:cluster-viz}c, Appendix \ref{app:results}).
On ISIC, 70\% of \name~trials reveal distinct benign clusters within the no-patch subclass (see Figure \ref{fig:cluster-viz}g-i).
In these trials, at least 77\% of images in one of these no-patch clusters required histopathology
(biopsy \& pathologist referral),
while such images made up $<$7\% of each other cluster.
In other words, the no-patch subclass split into ``histopathology'' and ``non-histopathology'' clusters, where
the former datapoints were harder for clinicians to classify.
We comment on the real-world importance of the ISIC result in particular.
Naively, the overall AUROC on ISIC obtained using the ERM model is 0.957,
which suggests a high-performing model;
however, our clustering reveals that a large fraction of the benign images contain a ``spurious'' brightly
colored patch, which makes them very easy to classify. The model performs substantially worse on examples
without such a patch, and worse still on ``more difficult'' examples for which a clinician also utilized a
histopathology examination to make a diagnosis. Thus, if deployed in practice with a target sensitivity value
in mind, the appropriate way to set an operating point for this model is in fact cluster-dependent; if a single
operating point were set using the aggregate ROC curve, the true sensitivity on the histopathology subclass
would be substantially lower than intended. This means that even simply \emph{measuring}
hidden stratification via Step 1 of \name~can provide crucial information that would help avoid spurious false negatives at test
time---the worst type of error a medical screening application can make.

\setlength\tabcolsep{3pt}
\begin{table}[h]
	\centerfloat
	\scriptsize
	\begin{tabular*}{\textwidth}{p{0.2\textwidth}|@{\extracolsep{\fill}}c c c c c c}
	\toprule
	\textbf{Task} & \textbf{Subclass} &  \textbf{Subclass Prevalence} &  \textbf{\% of trials} & \textbf{Precision} & \textbf{Recall} \\
	\midrule
	U-MNIST & ``8'' digit & 0.012 & 80 & 0.54 & 0.74  \\ 
	Waterbirds & Water-birds on land & 0.05 & 100 & 0.19 & 0.91  \\
	Waterbirds & Land-birds on water & 0.05 & 100 & 0.33 & 0.93   \\
	ISIC & No-patch & 0.48 & 100 & 0.99 & 0.59  \\
	ISIC & Histopathology & 0.23 & 70 & 0.77 & 0.92  \\
	{CelebA} & blond males & 0.06 & 100 & {0.14} & {0.88}  \\
	CelebA (w/BiT) & blond males & 0.06 & 100 & {0.93} & {0.68} \\
    \bottomrule
  \end{tabular*}
  \caption{
		Alignment of clusters with poorly-performing subclasses on the train set.
		We run Step 1 of \name~over multiple random seeds (i.e., train multiple ERM models and cluster their activations).
		In col.~4, we report the percentage of these trials with a cluster above the given precision and recall thresholds (cols. 5, 6)
		for identifying the subclass in col. 2.
		We report the proportion of training examples from that subclass within its superclass in col. 3.
	  }
\label{tab:subclass-recovery}
\end{table}

\subsubsection{Estimating Robust Accuracy}
\label{sec:cluster-measure}
We show that the clusters returned by \name~enable improved measurement of worst-case subclass performance.
Specifically, we measure the worst-case performance across any cluster returned by \name~(which we call the
``cluster-robust'' performance)
and compare this to the true robust performance and the overall performance.
We present results for both ERM and \name~in Table~\ref{tab:measurement}.
In most cases, the cluster-robust performance is much closer to the true robust performance
than the overall performance is.
On ISIC, cluster-robust performance even yields a better estimate of robust performance on the histopathology subclass
than does performance on the patch/no-patch subclass. 
By comparing cluster-robust performance to overall performance, we
can detect hidden stratification (and estimate its magnitude) without requiring subclass labels.

\setlength\tabcolsep{3pt}\
\begin{table}[h]
  \vspace{-2em}
	\centerfloat
  \small
	\begin{tabular}{lc|cccccccccc}\\
		\toprule
       \multicolumn{1}{c}{\textbf{Method}} &
        \multicolumn{1}{c}{\textbf{Metric}} &
      \multicolumn{1}{c}{\textbf{Waterbirds}} &
      \multicolumn{1}{c}{\textbf{\textsc{U-MNIST}}} &
      \multicolumn{2}{c}{\textbf{{ISIC}}} &
       \multicolumn{1}{c}{\textbf{{CelebA}}} & 
       \\
       & \multicolumn{1}{c}{\textbf{Type}} & & & \textbf{Non-patch} & \textbf{Histopath.} &
       \\
    \midrule
		ERM &  Robust & 63.3($\pm$1.6)  & 93.9($\pm$0.6) 
		& .922($\pm$.003)  & .875($\pm$.005)  & 40.3($\pm$2.3)  
		\\
		& Cluster-Robust & 76.8($\pm$1.4) & 92.3($\pm$2.5)
		& \multicolumn{2}{c}{.893($\pm$.013) } & 56.7($\pm$2.5) 
		 \\
		& Overall & 97.3($\pm$0.1) & 98.2($\pm$0.1)
		& \multicolumn{2}{c}{.957($\pm$.002) } & 95.7($\pm$0.1) 
		 \\
    \midrule
		\name &  Robust& 76.2($\pm$2.0) & 95.7($\pm$0.6)  
		& .912($\pm$.005) &.876($\pm$.006) & 53.7($\pm$1.3) 
		\\
		& Cluster-Robust & 78.3($\pm$1.1)  & 93.5($\pm$1.9) 
		&\multicolumn{2}{c}{.897($\pm$.011)}  & 70.8($\pm$1.1)  
		 \\
		& Overall & 95.7($\pm$0.5)  & 97.9($\pm$0.2)
		&\multicolumn{2}{c}{.927($\pm$.008)} & 94.6($\pm$0.2) 
		\\
    \bottomrule
  \end{tabular}
  \caption{
        Comparison of overall, cluster-robust, and robust performance. (Conventions as in Table \ref{tab:main_results}.)
  }
    \label{tab:measurement}
\end{table}

In addition, improvements in robust performance from \name~compared to ERM
are accompanied by increases in cluster-robust performance;
by comparing the cluster-robust performance of ERM and \name,
we can estimate how much \name~improves hidden stratification.

\subsection{Effects of Validation Metric}
\label{sec:val-metric}
\name's improvement of robust performance has two potential explanations: (1) minimizing the cluster-robust training loss is a better surrogate objective for the true robust performance than minimizing the overall training loss, and (2) selecting the best model checkpoint based on validation cluster-robust performance is better than selecting based on overall validation performance. To decouple these two effects, in Table~\ref{tab:val_metric} we display the test robust performance for ERM and \name~when using the \emph{true} robust validation performance as a checkpointing metric. This change generally improves performance for both methods, but \name~still significantly outperforms ERM on all datasets except ISIC. This shows that, for the goal of maximizing robust performance, the GDRO objective with cluster labels indeed performs better than the ERM objective.

In Table~\ref{tab:val_metric}, we also display the effects of changing the frequencies of subclasses in the validation set. As described in Appendix \ref{sec:data-details}, by default we re-weight the validation and test sets of U-MNIST and Waterbirds so that the effective frequencies of each subclass are the same as they are in the training set.
If we turn off this reweighting, the cluster-robust validation performance is a more accurate measure of the true robust performance, since the frequency of the underperforming subclass increases in the validation set. Using this unreweighted metric to select the best model checkpoint increases the true robust performance of \name~to 83.3\%---an improvement of over \textbf{22} points compared to ERM checkpointed against average accuracy on the same unreweighted validation set.
We stress that the true validation subclass labels are still assumed to be unknown in this experiment.
Having training and validation sets with different distributions is realistic in many situations.

We remark that in typical supervised learning settings, model selection---encompassing both hyperparameter tuning and selection of a checkpoint from the training trajectory---is done with the help of a validation set on which the desired metric of interest can be computed.
By contrast, the setting we study in \name~is more challenging not only due to the absence of training group labels, but also because the absence of \emph{validation} group labels means that this selection metric (worst-group accuracy / AUROC) cannot even be computed exactly on the validation set. The results of this section are encouraging in that they suggest that \name's cluster-robust performance is an acceptable proxy metric for model selection.\footnote{Our hyperparameter tuning procedure is described in Section \ref{sec:methods}; we also do not use validation set group labels to tune any of \name's hyperparameters.} 

\setlength\tabcolsep{3pt}\
\begin{table}[h]
  \vspace{-2em}
	\centerfloat
  \small
	\begin{tabular}{lc|cccccccccc}\\
		\toprule
       \multicolumn{1}{c}{\textbf{Training}} &
        \multicolumn{1}{c}{\textbf{Validation}} &
      \multicolumn{1}{c}{\textbf{Waterbirds}} &
      \multicolumn{1}{c}{\textbf{\textsc{U-MNIST}}} &
      \multicolumn{2}{c}{\textbf{{ISIC}}} &
       \multicolumn{1}{c}{\textbf{{CelebA}}} & 
       \\
      \textbf{Method} & \textbf{Checkpoint Metric} & & & \textbf{Non-patch} & \textbf{Histopath.} &
       \\
    \midrule
		ERM &  Acc. & 63.3($\pm$1.6)  & 93.9($\pm$0.6) 
		& .922($\pm$.003)  & .875($\pm$.005)  & 40.3($\pm$2.3)  
		\\
		& Unw. Acc. & 60.7($\pm$0.7) & 94.2($\pm$0.8)
		& \multicolumn{2}{c}{-} & - 
		\\
		& Robust Acc. & 68.8($\pm$0.9) & 94.5($\pm$0.9)
		& .924($\pm$.003) & .880($\pm$.004) & 46.3($\pm$2.1)
		 \\
    \midrule
		\name & Cluster-Robust Acc. & 76.2($\pm$2.0) & 95.7($\pm$0.6)  
		& .912($\pm$.005) &.876($\pm$.006) & 53.7($\pm$1.3) 
		\\
		& Unw. Cluster-Robust Acc. & 83.3($\pm$1.3)  & 95.7($\pm$0.6) 
		&\multicolumn{2}{c}{ - }  & -  
		 \\
		& Robust Acc. & 83.8($\pm$1.0)  & 96.3($\pm$0.5)
		& .915($\pm$.004) & .877($\pm$.006)  & 54.9($\pm$1.9) 
		\\
    \bottomrule
  \end{tabular}
  \caption{Test robust performance for each method, where the ``best'' model checkpoint over the training trajectory
  is selected according to the listed metric on the validation set.\protect\footnotemark~``Unw.'' stands for unweighted, where we do not reweight the frequencies of different subclasses in the evaluation sets. This only applies to Waterbirds and U-MNIST, as the subclass proportions of ISIC and CelebA are roughly equal across splits. (For ISIC, the checkpoint metrics are AUROC rather than accuracy; in the two subcolumns we define the true robust performance as AUROC for the non-patch or histopathology subclass, respectively.)}
    \label{tab:val_metric}
\end{table}

\subsection{Extension: Leveraging Pretrained Embeddings} 
\label{sec:bit}
\footnotetext{Test \emph{average} performances for ERM/\name~models validated on robust accuracy are reported in Table \ref{tab:val_metric_avg}.}
As an alternative to training an ERM model,
we assess whether recent pretrained image embeddings (BiT \citen{koleskinov2020bit})
can provide better features for Step 1 of \name.
Specifically, we modify Step 1 of \name~to compute BiT embeddings for the datapoints,
cluster the embeddings, and use these cluster assignments as estimated subclass labels in Step 2 of \name.
This modification (\name-BiT) dramatically improves robust accuracy on CelebA to \underline{\textbf{87.3\%}} ($\pm 1.3\%$), nearly matching subclass-GDRO.\footnote{Overall accuracy drops somewhat to 91.5\%.}

The CelebA BiT clusters align much better with the true subclasses (cf. Table~\ref{tab:subclass-recovery}),
which helps explain this improvement. Similarly, cluster-robust accuracy measured using the BiT clusters
is much closer to the true robust accuracy: for the \name-BiT model,
average cluster-robust performance on the BiT clusters is
$83.3\pm 1.3\%$, and for ERM it is $33.9 \pm 2.5\%$ [compared to the true robust accuracy of 40.3\%].

Despite its excellent performance on CelebA, the default \name~implementation outperforms \name-BiT
on the other datasets, suggesting that BiT is not a panacea:
on these datasets, the task-specific information contained in the representation of the trained ERM model
seems to be important for identifying meaningful clusters.
See Appendix \ref{app:bit-details} for additional evaluations and discussion.
Extending Step 1 of \name~to enable automatically selecting between different representations
(e.g., BiT vs. ERM) is a compelling future topic.

\section{Conclusion}
We propose \name, a two-step approach for measuring and mitigating hidden stratification without requiring access to subclass labels.
\name's first step, clustering the features of an ERM model, identifies clusters that
provide useful approximations of worst-case subclass performance.
\name's second step, using these cluster assignments as groups in GDRO,
yields significant improvements in worst-case subclass performance.
We analyze \name~in the context of a simple generative model,
and show that under suitable assumptions \name~achieves the same asymptotic sample complexity rates
as if we had access to true subclass labels.
We empirically validate \name~on four datasets, and find evidence that it can reduce hidden stratification
on real-world machine learning tasks.
Interesting directions for future work include further exploring different ways to learn representations
for the first stage of \name, developing better unsupervised metrics to choose between representations and clustering methods,
and characterizing when ERM learns representations that enable separation of subclasses.

\clearpage

\section*{Acknowledgments}
We thank Arjun Desai, Pang Wei Koh, Shiori Sagawa, Zhaobin Kuang, Karan Goel, Avner May, Esther Rolf, and Yixuan Li for helpful discussions and feedback.

We gratefully acknowledge the support of DARPA under Nos. FA86501827865 (SDH) and FA86501827882 (ASED); NIH under No. U54EB020405 (Mobilize), NSF under Nos. CCF1763315 (Beyond Sparsity), CCF1563078 (Volume to Velocity), and 1937301 (RTML); ONR under No. N000141712266 (Unifying Weak Supervision); the Moore Foundation, NXP, Xilinx, LETI-CEA, Intel, IBM, Microsoft, NEC, Toshiba, TSMC, ARM, Hitachi, BASF, Accenture, Ericsson, Qualcomm, Analog Devices, the Okawa Foundation, American Family Insurance, Google Cloud, Swiss Re, Total, the HAI-AWS Cloud Credits for Research program, the Schlumberger Innovation Fellowship program, and members of the Stanford DAWN project: Facebook, Google, VMWare, and Ant Financial. The U.S. Government is authorized to reproduce and distribute reprints for Governmental purposes notwithstanding any copyright notation thereon. Any opinions, findings, and conclusions or recommendations expressed in this material are those of the authors and do not necessarily reflect the views, policies, or endorsements, either expressed or implied, of DARPA, NIH, ONR, or the U.S. Government.

\nocite{dong2015looking}
\nocite{hoai2013discriminative}
\bibliographystyle{plainnat}
\bibliography{no_subclass_left_behind}

\begin{thebibliography}{58}
\providecommand{\natexlab}[1]{#1}
\providecommand{\url}[1]{\texttt{#1}}
\expandafter\ifx\csname urlstyle\endcsname\relax
  \providecommand{\doi}[1]{doi: #1}\else
  \providecommand{\doi}{doi: \begingroup \urlstyle{rm}\Url}\fi

\bibitem[Arjovsky et~al.(2019)Arjovsky, Bottou, Gulrajani, and
  Lopez-Paz]{arjovsky2019invariant}
Martin Arjovsky, L{\'e}on Bottou, Ishaan Gulrajani, and David Lopez-Paz.
\newblock Invariant risk minimization.
\newblock \emph{arXiv preprint arXiv:1907.02893}, 2019.

\bibitem[Asano et~al.(2020)Asano, Rupprecht, and Vedaldi]{asano2020critical}
Yuki Asano, Christian Rupprecht, and Andrea Vedaldi.
\newblock A critical analysis of self-supervision, or what we can learn from a
  single image.
\newblock In \emph{International Conference on Learning Representations
  (ICLR)}, 2020.

\bibitem[Ashtiani et~al.(2018)Ashtiani, Ben-David, Harvey, Liaw, Mehrabian, and
  Plan]{ashtiani2018near}
Hassan Ashtiani, Shai Ben-David, Nick Harvey, Christopher Liaw, Abbas
  Mehrabian, and Yaniv Plan.
\newblock Near-optimal sample complexity bounds for robust learning of
  {Gaussian} mixtures via compression schemes.
\newblock In \emph{Advances in Neural Information Processing Systems
  (NeurIPS)}, 2018.

\bibitem[Balashankar et~al.(2019)Balashankar, Lees, Welty, and
  Subramanian]{balashankar2019what}
Ananth Balashankar, Alyssa Lees, Chris Welty, and Lakshminarayanan Subramanian.
\newblock What is fair? {Exploring} {Pareto}-efficiency for fairness
  constrained classifiers.
\newblock \emph{arXiv preprint arXiv:1910.14120}, 2019.

\bibitem[Barocas et~al.(2019)Barocas, Hardt, and
  Narayanan]{barocas-hardt-narayanan}
Solon Barocas, Moritz Hardt, and Arvind Narayanan.
\newblock \emph{Fairness and Machine Learning}.
\newblock fairmlbook.org, 2019.
\newblock \url{http://www.fairmlbook.org}.

\bibitem[Bojanowski et~al.(2018)Bojanowski, Joulin, Lopez-Paz, and
  Szlam]{bojanowski2018optimizing}
Piotr Bojanowski, Armand Joulin, David Lopez-Paz, and Arthur Szlam.
\newblock Optimizing the latent space of generative networks.
\newblock In \emph{International Conference on Machine Learning (ICML)}, 2018.

\bibitem[Caron et~al.(2018)Caron, Bojanowski, Joulin, and Douze]{caron2018deep}
Mathilde Caron, Piotr Bojanowski, Armand Joulin, and Matthijs Douze.
\newblock Deep clustering for unsupervised learning of visual features.
\newblock In \emph{European Conference on Computer Vision (ECCV)}, pages
  132--149, 2018.

\bibitem[Chalupka et~al.(2015)Chalupka, Perona, and
  Eberhardt]{chalupka2015visual}
Krzysztof Chalupka, Pietro Perona, and Frederick Eberhardt.
\newblock Visual causal feature learning.
\newblock In \emph{Uncertainty in Artificial Intelligence (UAI)}, 2015.

\bibitem[Chen et~al.(2020{\natexlab{a}})Chen, Liu, Yu, Kautz, Shrivastava,
  Garg, and Anandkumar]{chen2020angular}
Beidi Chen, Weiyang Liu, Zhiding Yu, Jan Kautz, Anshumali Shrivastava, Animesh
  Garg, and Anima Anandkumar.
\newblock Angular visual hardness.
\newblock In \emph{International Conference on Machine Learning (ICML)},
  2020{\natexlab{a}}.

\bibitem[Chen et~al.(2020{\natexlab{b}})Chen, Kornblith, Norouzi, and
  Hinton]{chen2020simple}
Ting Chen, Simon Kornblith, Mohammad Norouzi, and Geoffrey Hinton.
\newblock A simple framework for contrastive learning of visual
  representations.
\newblock In \emph{International Conference on Machine Learning (ICML)},
  2020{\natexlab{b}}.

\bibitem[Chen et~al.(2019)Chen, Wu, Ratner, Weng, and R{\'e}]{chen2019slice}
Vincent Chen, Sen Wu, Alexander Ratner, Jen Weng, and Christopher R{\'e}.
\newblock Slice-based learning: A programming model for residual learning in
  critical data slices.
\newblock In \emph{Advances in Neural Information Processing Systems
  (NeurIPS)}, pages 9392--9402, 2019.

\bibitem[Chilamkurthy et~al.(2018)Chilamkurthy, Ghosh, Tanamala, Biviji,
  Campeau, Venugopal, Mahajan, Rao, and Warier]{Chilamkurthy2018-op}
Sasank Chilamkurthy, Rohit Ghosh, Swetha Tanamala, Mustafa Biviji, Norbert
  Campeau, Vasantha~Kumar Venugopal, Vidur Mahajan, Pooja Rao, and Prashant
  Warier.
\newblock Deep learning algorithms for detection of critical findings in head
  {CT} scans: a retrospective study.
\newblock \emph{Lancet}, 392\penalty0 (10162):\penalty0 2388--2396, December
  2018.

\bibitem[Codella et~al.(2019)Codella, Rotemberg, Tschandl, Celebi, Dusza,
  Gutman, Helba, Kalloo, Liopyris, Marchetti, et~al.]{codella2019skin}
Noel Codella, Veronica Rotemberg, Philipp Tschandl, M.~Emre Celebi, Stephen
  Dusza, David Gutman, Brian Helba, Aadi Kalloo, Konstantinos Liopyris, Michael
  Marchetti, et~al.
\newblock Skin lesion analysis toward melanoma detection 2018: A challenge
  hosted by the international skin imaging collaboration ({ISIC}).
\newblock \emph{arXiv preprint arXiv:1902.03368}, 2019.

\bibitem[Devroye et~al.(2018)Devroye, Mehrabian, and Reddad]{devroye2018total}
Luc Devroye, Abbas Mehrabian, and Tommy Reddad.
\newblock The total variation distance between high-dimensional {Gaussians}.
\newblock \emph{arXiv preprint arXiv:1810.08693}, 2018.

\bibitem[Dong et~al.(2014)Dong, Chen, Feng, Jia, Huang, and
  Yan]{dong2014looking}
Jian Dong, Qiang Chen, Jiashi Feng, Kui Jia, Zhongyang Huang, and Shuicheng
  Yan.
\newblock Looking inside category: subcategory-aware object recognition.
\newblock \emph{IEEE Transactions on Circuits and Systems for Video
  Technology}, 25\penalty0 (8):\penalty0 1322--1334, 2014.

\bibitem[Dong et~al.(2015)Dong, Chen, Feng, Jia, Huang, and
  Yan]{dong2015looking}
Jian Dong, Qiang Chen, Jiashi Feng, Kui Jia, Zhongyang Huang, and Shuicheng
  Yan.
\newblock Looking inside category: Subcategory-aware object recognition.
\newblock \emph{Circuits and Systems for Video Technology, IEEE Transactions
  on}, 25:\penalty0 1322--1334, 08 2015.
\newblock \doi{10.1109/TCSVT.2014.2355697}.

\bibitem[Duchi et~al.(2020)Duchi, Hashimoto, and
  Namkoong]{duchidistributionally}
John Duchi, Tatsunori Hashimoto, and Hongseok Namkoong.
\newblock Distributionally robust losses for latent covariate mixtures.
\newblock \emph{arXiv preprint arXiv:2007.13982}, 2020.

\bibitem[Dunnmon et~al.(2019)Dunnmon, Yi, Langlotz, R{\'e}, Rubin, and
  Lungren]{Dunnmon2019-rr}
Jared Dunnmon, Darvin Yi, Curtis Langlotz, Christopher R{\'e}, Daniel Rubin,
  and Matthew Lungren.
\newblock Assessment of convolutional neural networks for automated
  classification of chest radiographs.
\newblock \emph{Radiology}, 290\penalty0 (2):\penalty0 537--544, February 2019.

\bibitem[Gansbeke et~al.(2020)Gansbeke, Vandenhende, Georgoulis, Proesmans, and
  Gool]{gansbeke2020learning}
Wouter~Van Gansbeke, Simon Vandenhende, Stamatios Georgoulis, Marc Proesmans,
  and Luc~Van Gool.
\newblock Learning to classify images without labels.
\newblock \emph{arXiv preprint arXiv:2005.12320}, 2020.

\bibitem[Gulshan et~al.(2016)Gulshan, Peng, Coram, Stumpe, Wu, Narayanaswamy,
  Venugopalan, Widner, Madams, Cuadros, Kim, Raman, Nelson, Mega, and
  Webster]{Gulshan2016-we}
Varun Gulshan, Lily Peng, Marc Coram, Martin Stumpe, Derek Wu, Arunachalam
  Narayanaswamy, Subhashini Venugopalan, Kasumi Widner, Tom Madams, Jorge
  Cuadros, Ramasamy Kim, Rajiv Raman, Philip Nelson, Jessica Mega, and Dale
  Webster.
\newblock Development and validation of a deep learning algorithm for detection
  of diabetic retinopathy in retinal fundus photographs.
\newblock \emph{Journal of the American Medical Association}, 316\penalty0
  (22):\penalty0 2402--2410, December 2016.

\bibitem[Han et~al.(2019)Han, Vedaldi, and Zisserman]{han2019learning}
Kai Han, Andrea Vedaldi, and Andrew Zisserman.
\newblock Learning to discover novel visual categories via deep transfer
  clustering.
\newblock In \emph{IEEE International Conference on Computer Vision (ICCV)},
  pages 8401--8409, 2019.

\bibitem[Hardt et~al.(2016)Hardt, Price, and Srebro]{hardt2016equality}
Moritz Hardt, Eric Price, and Nati Srebro.
\newblock Equality of opportunity in supervised learning.
\newblock In \emph{Advances in Neural Information Processing Systems
  (NeurIPS)}, pages 3315--3323, 2016.

\bibitem[Hoai and Zisserman(2013)]{hoai2013discriminative}
Minh Hoai and Andrew Zisserman.
\newblock Discriminative sub-categorization.
\newblock In \emph{IEEE Conference on Computer Vision and Pattern Recognition
  (CVPR)}, pages 1666--1673, 2013.

\bibitem[Hu et~al.(2018)Hu, Niu, Sato, and Sugiyama]{hu2018does}
Weihua Hu, Gang Niu, Issei Sato, and Masashi Sugiyama.
\newblock Does distributionally robust supervised learning give robust
  classifiers?
\newblock In \emph{International Conference on Machine Learning (ICML)}, 2018.

\bibitem[Jacobs et~al.(1991)Jacobs, Jordan, Nowlan, and
  Hinton]{jacobs1991adaptive}
Robert Jacobs, Michael~I. Jordan, Steven Nowlan, and Geoffrey Hinton.
\newblock Adaptive mixtures of local experts.
\newblock \emph{Neural Computation}, 3\penalty0 (1):\penalty0 79--87, 1991.

\bibitem[Ji et~al.(2019)Ji, Henriques, and Vedaldi]{ji2019invariant}
Xu~Ji, Jo{\~a}o Henriques, and Andrea Vedaldi.
\newblock Invariant information clustering for unsupervised image
  classification and segmentation.
\newblock In \emph{IEEE International Conference on Computer Vision (ICCV)},
  pages 9865--9874, 2019.

\bibitem[Kearns et~al.(2019)Kearns, Roth, and
  Sharifi-Malvajerdi]{kearns2019average}
Michael Kearns, Aaron Roth, and Saeed Sharifi-Malvajerdi.
\newblock Average individual fairness: Algorithms, generalization and
  experiments.
\newblock In \emph{Advances in Neural Information Processing Systems
  (NeurIPS)}, 2019.

\bibitem[Kolesnikov et~al.(2020)Kolesnikov, Beyer, Zhai, Puigcerver, Yung,
  Gelly, and Houlsby]{koleskinov2020bit}
Alexander Kolesnikov, Lucas Beyer, Xiaohua Zhai, Joan Puigcerver, Jessica Yung,
  Sylvain Gelly, and Neil Houlsby.
\newblock Big transfer ({BiT}): General visual representation learning.
\newblock In \emph{European Conference on Computer Vision (ECCV}, 2020.

\bibitem[Lahoti et~al.(2020)Lahoti, Beutel, Chen, Lee, Prost, Thain, Wang, and
  Chi]{lahoti2020fairness}
Preeti Lahoti, Alex Beutel, Jilin Chen, Kang Lee, Flavien Prost, Nithum Thain,
  Xuezhi Wang, and Ed~Chi.
\newblock Fairness without demographics through adversarially reweighted
  learning.
\newblock In \emph{Advances in Neural Information Processing Systems
  (NeurIPS)}, 2020.

\bibitem[LeCun et~al.(2010)LeCun, Cortes, and Burges]{lecun2010mnist}
Yann LeCun, Corinna Cortes, and C.J. Burges.
\newblock {MNIST} handwritten digit database, 2010.
\newblock URL \url{http://yann.lecun.com/exdb/mnist}.

\bibitem[Levy et~al.(2020)Levy, Carmon, Duchi, and Sidford]{levy2020large}
Daniel Levy, Yair Carmon, John Duchi, and Aaron Sidford.
\newblock Large-scale methods for distributionally robust optimization.
\newblock In \emph{Advances in Neural Information Processing Systems
  (NeurIPS)}, volume~33, pages 8847--8860, 2020.

\bibitem[Liang and Ma(2019)]{229t}
Percy Liang and Tengyu Ma.
\newblock Stanford {CS 229T} course notes, 2019.
\newblock URL \url{http://web.stanford.edu/class/cs229t/}.

\bibitem[Lipton et~al.(2018)Lipton, McAuley, and Chouldechova]{lipton2018does}
Zachary Lipton, Julian McAuley, and Alexandra Chouldechova.
\newblock Does mitigating {ML}'s impact disparity require treatment disparity?
\newblock In \emph{Advances in Neural Information Processing Systems
  (NeurIPS)}, pages 8125--8135, 2018.

\bibitem[Liu et~al.(2015)Liu, Luo, Wang, and Tang]{liu2015faceattributes}
Ziwei Liu, Ping Luo, Xiaogang Wang, and Xiaoou Tang.
\newblock Deep learning face attributes in the wild.
\newblock In \emph{IEEE International Conference on Computer Vision (ICCV)},
  2015.

\bibitem[Martinez et~al.(2020)Martinez, Bertran, and
  Sapiro]{martinez2020minimax}
Natalia Martinez, Martin Bertran, and Guillermo Sapiro.
\newblock Minimax pareto fairness: A multi objective perspective.
\newblock In \emph{International Conference on Machine Learning (ICML)}, 2020.

\bibitem[Martinez et~al.(2021)Martinez, Bertran, Papadaki, Rodrigues, and
  Sapiro]{martinez2021blind}
Natalia Martinez, Martin Bertran, Afroditi Papadaki, Miguel Rodrigues, and
  Guillermo Sapiro.
\newblock Blind {Pareto} fairness and subgroup robustness.
\newblock In \emph{International Conference on Machine Learning (ICML)}, 2021.

\bibitem[McConville et~al.(2020)McConville, Santos-Rodr{\'i}guez, Piechocki,
  and Craddock]{mcconville2019n2d}
Ryan McConville, Ra{\'u}l Santos-Rodr{\'i}guez, Robert~J Piechocki, and Ian
  Craddock.
\newblock {N2D}: (not too) deep clustering via clustering the local manifold of
  an autoencoded embedding.
\newblock In \emph{International Conference on Pattern Recognition (ICPR)},
  2020.

\bibitem[McInnes et~al.(2018)McInnes, Healy, and Melville]{mcinnes2018umap}
Leland McInnes, John Healy, and James Melville.
\newblock {UMAP}: Uniform manifold approximation and projection for dimension
  reduction.
\newblock \emph{arXiv preprint arXiv:1802.03426}, 2018.

\bibitem[Mintz et~al.(2009)Mintz, Bills, Snow, and Jurafsky]{mintz2009distant}
Mike Mintz, Steven Bills, Rion Snow, and Dan Jurafsky.
\newblock Distant supervision for relation extraction without labeled data.
\newblock In \emph{Joint Conference of the Annual Meeting of the Association
  for Computational Linguistics (ACL) and the International Joint Conference on
  Natural Language Processing (IJCNLP)}, volume~2, pages 1003--1011, 2009.

\bibitem[Muller et~al.(2020)Muller, Kornblith, and Hinton]{muller2020subclass}
Rafael Muller, Simon Kornblith, and Geoffrey Hinton.
\newblock Subclass distillation.
\newblock \emph{arXiv preprint arXiv:2002.03936}, 2020.

\bibitem[Oakden-Rayner et~al.(2020)Oakden-Rayner, Dunnmon, Carneiro, and
  R{\'e}]{oakden2019hidden}
Luke Oakden-Rayner, Jared Dunnmon, Gustavo Carneiro, and Christopher R{\'e}.
\newblock Hidden stratification causes clinically meaningful failures in
  machine learning for medical imaging.
\newblock In \emph{ACM Conference on Health, Inference, and Learning (CHIL)},
  2020.

\bibitem[Polyzotis et~al.(2019)Polyzotis, Whang, Kraska, and
  Chung]{polyzotis2019slice}
Neoklis Polyzotis, Steven Whang, Tim Kraska, and Yeounoh Chung.
\newblock Slice finder: Automated data slicing for model validation.
\newblock In \emph{IEEE International Conference on Data Engineering (ICDE)},
  2019.

\bibitem[Ratner et~al.(2019)Ratner, Bach, Ehrenberg, Fries, Wu, and
  R{\'e}]{ratner2019snorkel}
Alexander Ratner, Stephen Bach, Henry Ehrenberg, Jason Fries, Sen Wu, and
  Christopher R{\'e}.
\newblock Snorkel: Rapid training data creation with weak supervision.
\newblock \emph{The VLDB Journal}, 29:\penalty0 709--730, 2019.

\bibitem[Recht et~al.(2018)Recht, Roelofs, Schmidt, and
  Shankar]{recht2018cifar}
Benjamin Recht, Rebecca Roelofs, Ludwig Schmidt, and Vaishaal Shankar.
\newblock Do {CIFAR-10} classifiers generalize to {CIFAR-10}?
\newblock \emph{arXiv preprint arXiv:1806.00451}, 2018.

\bibitem[Recht et~al.(2019)Recht, Roelofs, Schmidt, and
  Shankar]{recht2019imagenet}
Benjamin Recht, Rebecca Roelofs, Ludwig Schmidt, and Vaishaal Shankar.
\newblock Do {ImageNet} classifiers generalize to {ImageNet}?
\newblock In \emph{International Conference on Machine Learning (ICML)}, 2019.

\bibitem[Rieger et~al.(2020)Rieger, Singh, Murdoch, and
  Yu]{rieger2019interpretations}
Laura Rieger, Chandan Singh, William Murdoch, and Bin Yu.
\newblock Interpretations are useful: penalizing explanations to align neural
  networks with prior knowledge.
\newblock In \emph{International Conference on Machine Learning (ICML)}, 2020.

\bibitem[Rousseeuw(1987)]{silhouette}
Peter Rousseeuw.
\newblock Silhouettes: a graphical aid to the interpretation and validation of
  cluster analysis.
\newblock \emph{Journal of Computational and Applied Mathematics}, 20:\penalty0
  53--65, 1987.

\bibitem[Sagawa et~al.(2020)Sagawa, Koh, Hashimoto, and
  Liang]{sagawa2020distributionally}
Shiori Sagawa, Pang~Wei Koh, Tatsunori Hashimoto, and Percy Liang.
\newblock Distributionally robust neural networks for group shifts: On the
  importance of regularization for worst-case generalization.
\newblock In \emph{International Conference on Learning Representations
  (ICLR)}, 2020.

\bibitem[Sch{\"o}lkopf(2019)]{scholkopf2019causality}
Bernhard Sch{\"o}lkopf.
\newblock Causality for machine learning.
\newblock \emph{arXiv preprint arXiv:1911.10500}, 2019.

\bibitem[Scope et~al.(2016)Scope, Marchetti, Marghoob, Dusza, Geller,
  Satagopan, Weinstock, Berwick, and Halpern]{SCOPE2016813}
Alon Scope, Michael Marchetti, Ashfaq Marghoob, Stephen Dusza, Alan Geller,
  Jaya Satagopan, Martin Weinstock, Marianne Berwick, and Allan Halpern.
\newblock The study of nevi in children: Principles learned and implications
  for melanoma diagnosis.
\newblock \emph{Journal of the American Academy of Dermatology}, 75\penalty0
  (4):\penalty0 813 -- 823, 2016.
\newblock ISSN 0190-9622.
\newblock \doi{https://doi.org/10.1016/j.jaad.2016.03.027}.
\newblock URL
  \url{http://www.sciencedirect.com/science/article/pii/S019096221630010X}.

\bibitem[Shukla et~al.(2018)Shukla, Cheema, and Anand]{shukla2018semi}
Ankita Shukla, Gullal~Singh Cheema, and Saket Anand.
\newblock Semi-supervised clustering with neural networks.
\newblock \emph{arXiv preprint arXiv:1806.01547}, 2018.

\bibitem[Sinha et~al.(2018)Sinha, Namkoong, Volpi, and
  Duchi]{sinha2018certifying}
Aman Sinha, Hongseok Namkoong, Riccardo Volpi, and John Duchi.
\newblock Certifying some distributional robustness with principled adversarial
  training.
\newblock In \emph{International Conference on Learning Representations
  (ICLR)}, 2018.

\bibitem[Snell et~al.(2017)Snell, Swersky, and Zemel]{snell2017prototypical}
Jake Snell, Kevin Swersky, and Richard Zemel.
\newblock Prototypical networks for few-shot learning.
\newblock In \emph{Advances in Neural Information Processing Systems
  (NeurIPS)}, pages 4077--4087, 2017.

\bibitem[Staib and Jegelka(2017)]{staib2017distributionally}
Matthew Staib and Stefanie Jegelka.
\newblock Distributionally robust deep learning as a generalization of
  adversarial training.
\newblock In \emph{NeurIPS Workshop on Machine Learning and Computer Security},
  2017.

\bibitem[Wang et~al.(2019)Wang, Ge, Lipton, and Xing]{wang2019learning}
Haohan Wang, Songwei Ge, Zachary Lipton, and Eric Xing.
\newblock Learning robust global representations by penalizing local predictive
  power.
\newblock In \emph{Advances in Neural Information Processing Systems
  (NeurIPS)}, pages 10506--10518, 2019.

\bibitem[Wang et~al.(2020)Wang, Guo, Narasimhan, Cotter, Gupta, and
  Jordan]{wang2020robust}
Serena Wang, Wenshuo Guo, Harikrishna Narasimhan, Andrew Cotter, Maya Gupta,
  and Michael Jordan.
\newblock Robust optimization for fairness with noisy protected groups.
\newblock In \emph{Advances in Neural Information Processing Systems
  (NeurIPS)}, volume~33, pages 5190--5203. Curran Associates, Inc., 2020.

\bibitem[Xie et~al.(2017)Xie, Singh, and Xing]{xie2017uncorrelation}
Pengtao Xie, Aarti Singh, and Eric Xing.
\newblock Uncorrelation and evenness: a new diversity-promoting regularizer.
\newblock In \emph{International Conference on Machine Learning (ICML)}, 2017.

\bibitem[Yao et~al.(2011)Yao, Khosla, and Fei-Fei]{yao2011combining}
Bangpeng Yao, Aditya Khosla, and Li~Fei-Fei.
\newblock Combining randomization and discrimination for fine-grained image
  categorization.
\newblock In \emph{IEEE Conference on Computer Vision and Pattern Recognition
  (CVPR)}, pages 1577--1584. IEEE, 2011.

\end{thebibliography}

\newpage
\section*{Appendix}
\appendix
\crefalias{section}{appsec}

\section{Extended Related Work}
\label{app:related}
Our work builds on several active threads in the machine learning literature.

\paragraph{Hidden Stratification.}
Our motivating problem is that of hidden stratification, wherein models trained on superclass labels exhibit highly variable performance on unlabeled subclasses \citen{oakden2019hidden}.
This behavior has been observed in a variety of studies spanning both traditional computer vision \citen{yao2011combining, recht2018cifar, dong2014looking, hoai2013discriminative}  and medical machine learning \citen{oakden2019hidden, Dunnmon2019-rr, Gulshan2016-we, Chilamkurthy2018-op}.
Of note is the work of \citen{recht2018cifar}, who propose that the existence of ``distribution shift'' at the subclass level may substantially affect measures of test set performance for image classification models on CIFAR-10.
They use a simple mixture model between an ``easy'' and a ``hard'' subclass to demonstrate how changes that would not be detectable at the superclass level could affect aggregate performance metrics.
\cite{chen2020angular} extend these ideas by developing notions of visual hardness, and suggest that better loss function design would be useful for improving the performance of machine learning models on harder examples. \cite{polyzotis2019slice} studies how to automatically find large, interpretable underperforming data slices in structured datasets.

Our approach is also inspired by the literature on causality and machine learning, and in particular by the common assumption that the data provided for both training and evaluation are independent and identically distributed (IID) \citen{scholkopf2019causality}.
This is often untrue in real-world settings; in particular, classes in real-world datasets are often composed of multiple subclasses,
and the proportions of these subclasses may change between training and evaluation settings---even if the overall class compositions are
the same.
Many of the guarantees from statistical learning theory break down in the presence of such non-IID data \citen{chalupka2015visual}, suggesting that models trained using traditional Empirical Risk Minimization (ERM) are likely to be vulnerable to hidden stratification.
This motivates the use of the maximum (worst-case) per-subclass risk, rather than the overall average risk, as the objective to be optimized.

\paragraph{Neural Representation Clustering.}
The first stage of the technique we propose for addressing hidden stratification (\name) relies heavily on our ability to identify latent subclasses via unsupervised clustering of neural representations learned via ERM.
This has been an area of substantial recent activity in machine learning, and has provided several important conclusions upon which we build in our work.
The work of \citen{mcconville2019n2d} and \citen{shukla2018semi}, for instance, demonstrate the utility of a simple autoencoded representation for performing unsupervised clustering in the feature space of a trained model.
While the purpose of these works is often to show that deep clustering can be competitive with semi-supervised learning techniques, the mechanics of clustering in model feature space explored by these works are important for our present study.
Indeed, we directly leverage the conclusion of \citen{mcconville2019n2d} that Uniform Manifold Approximation and Projection (UMAP) \citen{mcinnes2018umap} works well as a dimensionality reduction technique for deep clustering in the current study.

Further, the fact that work such as \citen{han2019learning} directly uses neural representation clustering to estimate the presence of novel classes in a given dataset provides an empirical basis for our approach, which uses a model trained with ERM to approximately identify unlabeled subclasses within each superclass.
Similarly, \citen{ji2019invariant} demonstrate excellent semi-supervised image classification performance by maximizing mutual information between the class assignments of each pair of images. 
Their work demonstrates not only the utility of a clustering-style objective in image classification, but also suggests that overclustering -- using more clusters than naturally exist in the data -- can be beneficial for clustering deep feature representations in a manner that is helpful for semi-supervised classification.

A related, but different, approach is that of \citep{dong2014looking}, who explicitly attempt to identify subcategories of classes via a graph and SVM-based ``subcategory mining'' framework in order to improve overall task performance. The subcategory mining algorithm is quite complicated and uses manually extracted features (rather than automatically learned features, e.g., from CNNs); in addition, this work is geared towards improving overall performance, rather than ensuring good performance on \emph{all} subcategories. Nevertheless, it is an important piece of prior literature.

\paragraph{Distributionally Robust Optimization.}
The second stage of \name depends on our ability to optimize the worst-case classification loss over existing subgroups.
This formulation draws a clear connection between our work and the literature on fairness in machine learning \citen{barocas-hardt-narayanan}, which is at least partially concerned with ensuring that trained models do not disadvantage a particular group in practice.
While there exist a wide variety of definitions for algorithmic fairness \citen{hardt2016equality, lipton2018does, kearns2019average}, the common idea that models should be optimized such that they respect various notions of fairness is similar to the motivation behind our work.

\emph{Distributionally robust optimization} studies the problem of optimizing for worst-case performance with respect to some ``uncertainty set'' of distributions. A multitude of recent papers have explored optimizing distributionally robust objectives in slightly different contexts.
Most relevant to our work is the study of \citen{sagawa2020distributionally}, who propose the group DRO algorithm for training classifiers with best worst-case subgroup performance (in other words, the ``uncertainty sets'' in this case are the per-group distributions).
Crucially, this algorithm demonstrates improved worst-case subclass performance in cases where triplets ($x,y,g$) are known for every data point, with $x$ is the input data, $y$ is the true label, and $g$ is a true subgroup label.
While \citen{sagawa2020distributionally} present preliminary evidence that group DRO can work well in the presence of noisy $g$, the efficacy of the algorithm in this setting remains functionally unexplored.
We leverage the group DRO algorithm as an optimizer for minimizing the worst-case loss with respect to our approximately identified subclasses.

We also discuss other works on DRO. \citen{duchidistributionally}, for instance, considers the general problem of optimizing the worst-case loss over any possible subdistribution of the data above a specified size; while conceptually important, the goal of optimizing over arbitrary subdistributions of a minimum size is rather pessimistic compared to assuming more structure on these subdistributions (such as in GDRO).
\citep{levy2020large} built upon these ideas to design efficient methods for large-scale DRO, but the uncertainty sets considered are also less structured compared to group DRO and therefore generally give poorer results when the goal is in fact to optimize worst-group performance on a specific set of groups. Concurrently with our work, \cite{wang2020robust} studies how to train models with respect to group-level fairness constraints, given only noisy versions of the groups; they show how to solve this problem efficiently via a DRO formulation and a reweighting approach based on soft group assignments.
While their formulation encompasses several fairness objectives, such as ``equal opportunity'' (equal true positive rates across groups) and ``equal outcome'' (equal positive prediction rates across groups), it does not directly optimize for worst-group performance. Additionally, they assume that the noisy group labels are provided and their marginal distribution is the same as that of the true groups, 
whereas we do not assume any foreknowledge of the (noisy or true) groups.

Similar to the goal of optimizing worst-group performance is optimizing for \emph{group Pareto fairness}, i.e., seeking solutions that are Pareto-efficient in terms of the performances on each group. \cite{balashankar2019what} and \cite{martinez2020minimax} both study this (more general) problem in the case where the group labels are known. In addition, follow-ups to these papers that were concurrent to our work explored this problem in the setting where the group labels are unknown \citep{lahoti2020fairness, martinez2021blind}; however, both of these works focus on simpler structured datasets rather than more challenging settings such as image classification.

Other relevant techniques include invariant risk minimization, which attempts to train classifiers that are optimal across data drawn from a mixture of distributions (i.e., a non-IID setting) \citep{arjovsky2019invariant}; methods from slice-based learning that learn feature representations optimized for ensuring high performance on specific subsets, or ``slices'' of the data \citep{chen2019slice}; mixture-of-experts models, which explicitly handle learning models for multiple different subsets of data \citep{jacobs1991adaptive}; and techniques for building robust classifiers via domain adaptation \citep{wang2019learning}.
While our work is closely related to these directions, a major difference is that we handle the setting where the different subclasses (i.e., groups, environments, slices, etc.) are unidentified.

\paragraph{Representation Learning with Limited or Noisy Labels.} A final research thread that is closely related to the work presented here focuses on deep representation learning in the absence of ground truth labels.
Our methods are similar in spirit to those from weak supervision \citen{mintz2009distant, ratner2019snorkel}, which focuses on training models using noisy labels that are often provided programmatically.
Our work can be seen as analyzing a new form of fine-grained weak supervision for DRO-style objectives, which is drawn from unsupervised clustering of an ERM representation.
Another related line of work is representation learning for few-shot learning \citen{snell2017prototypical}; however, our work fundamentally differs in the sense that we assume no access to ground truth subclass labels.

Other methods aim to automatically learn classes via an iterative approach. An early work of this type is \citen{caron2018deep}, which uses iterative clustering and ERM training to learn highly effective feature representations for image classification. More recently, \citen{gansbeke2020learning} used a self-supervised task to learn semantically meaningful features, and then generate labels using an iteratively refining approach.
Our work differs from these in that we do assume access to ground truth \textit{superclass} labels---which provide much more information than in the fully-unlabeled setting---and use clustering \textit{within each superclass} to generate approximate labels. In addition, our primary end goal is not accurate identification of the subclasses, but ensuring good worst-case performance among all subclasses.

Finally, other works aim to promote a notion of ``diversity'' among feature representations by adding different regularizers. In \citen{xie2017uncorrelation}, such a regularizer was introduced in the context of latent space models, to better capture infrequently observed
patterns and improve model expressiveness for a given size. More recently, \citen{muller2020subclass} introduced a regularizer that aims to promote diversity of the predicted logits. They showed that this method could also lead to estimation of subclasses within a superclass, without requiring subclass labels. However, this work focused on improving \emph{overall} performance, and specifically improvement of knowledge distillation; by contrast, our goal is to improve \emph{robust} performance. Nevertheless, integrating these recent ideas into our work is an interesting avenue for future work, to potentially further improve the feature learning stage.

\section{Experimental Details}
\label{app:experiments}

\subsection{\name~Pseudocode}
We provide pseudocode for \name~in Algorithm \ref{alg:main}, to complement the detailed description of our methodology in \Cref{sec:alg-framework}.\footnote{We note that the final step of \name---training a model to minimize the maximum per-cluster risk---can also be done
when ``soft'' (probabilistic) cluster labels are given instead of hard assignments; see Appendix~\ref{app:gdro-soft}.}
Note that our model class $\mathcal{F}$ (as per the notation in \Cref{sec:setup}) is a class of neural networks, composed of a ``featurizer'' module $f_\theta$ and a ``linear classification head'' $L$ that takes the feature representation to a prediction.

\begin{algorithm}[h]
   \caption{``\name''}
   \label{alg:main}
\SetAlgoLined

   \textbf{Input}: Data and superclass labels $(x,y) = \{(x_i, y_i)\}_{i=1}^n$; loss function $\ell(\cdot, \cdot)$; featurizer class $\mathcal{F}(\theta)$ parameterized by $\theta \in \R^p$, dimensionality reducer $g$ (e.g. UMAP; default: identity) \\
   {\bfseries Optional input:} Pretrained featurizer $f_\theta$ \\
   \BlankLine
   \uIf{featurizer $f_\theta$ provided}{
   \nl \textbf{pass}
   }
   \Else{
   \# train model [featurizer $f_\theta$ \& linear classification head $L$] to minimize empirical risk, and save featurizer \\
   \nl $f_\theta, L \leftarrow \argmin\limits_{\theta' \in \R^p, L'} \left\{ \tfrac{1}{n}\textstyle\sum\limits_{i=1}^n \ell(L' \cdot f_{\theta'}(x_i), y_i) \right\}$
   }
   \# compute feature vectors \\
   \nl $\{v_i\}_{i=1}^n = g(f_\theta(x_i))$\\
   \For{$b=1$ {\bfseries to} $B$}{
   \# cluster features of each superclass \\
   \nl {$\{\hat{z}_i\} \leftarrow$ \sc{Get\_Cluster\_Labels}}($\{v_i : y_i = b\}$)
   }
   \# train final model to minimize maximum per-cluster risk \\
   \nl $f_{\hat{\theta}}, \hat{L} \leftarrow \argmin\limits_{\theta'\in\R^p, L'} \left\{ \max\limits_{c \in \{1,\dots,C\}} \ff{1}{n_c} \textstyle\sum\limits_{i=1}^n \mathbf{1}(\hat{z}_i = c) \ell(L' \circ f_{\theta'}(x_i), y_i) \right\}$\\
   \nl \Return{($f_{\hat{\theta}}, \hat{L}$)}
\end{algorithm}

Note that our framework is not constrained by the specific choice of clustering algorithm, dimensionality reduction algorithm, or robust optimization algorithm. While we use GDRO throughout this work, other training techniques that encourage good robust performance could be swapped in for GDRO during ``Step 2'' of \name.

\begin{figure*}[]
\centering
\begin{subfigure}{0.3\linewidth}
\centering
\includegraphics[trim=0mm 0mm 0mm 0mm, clip, scale=0.3]{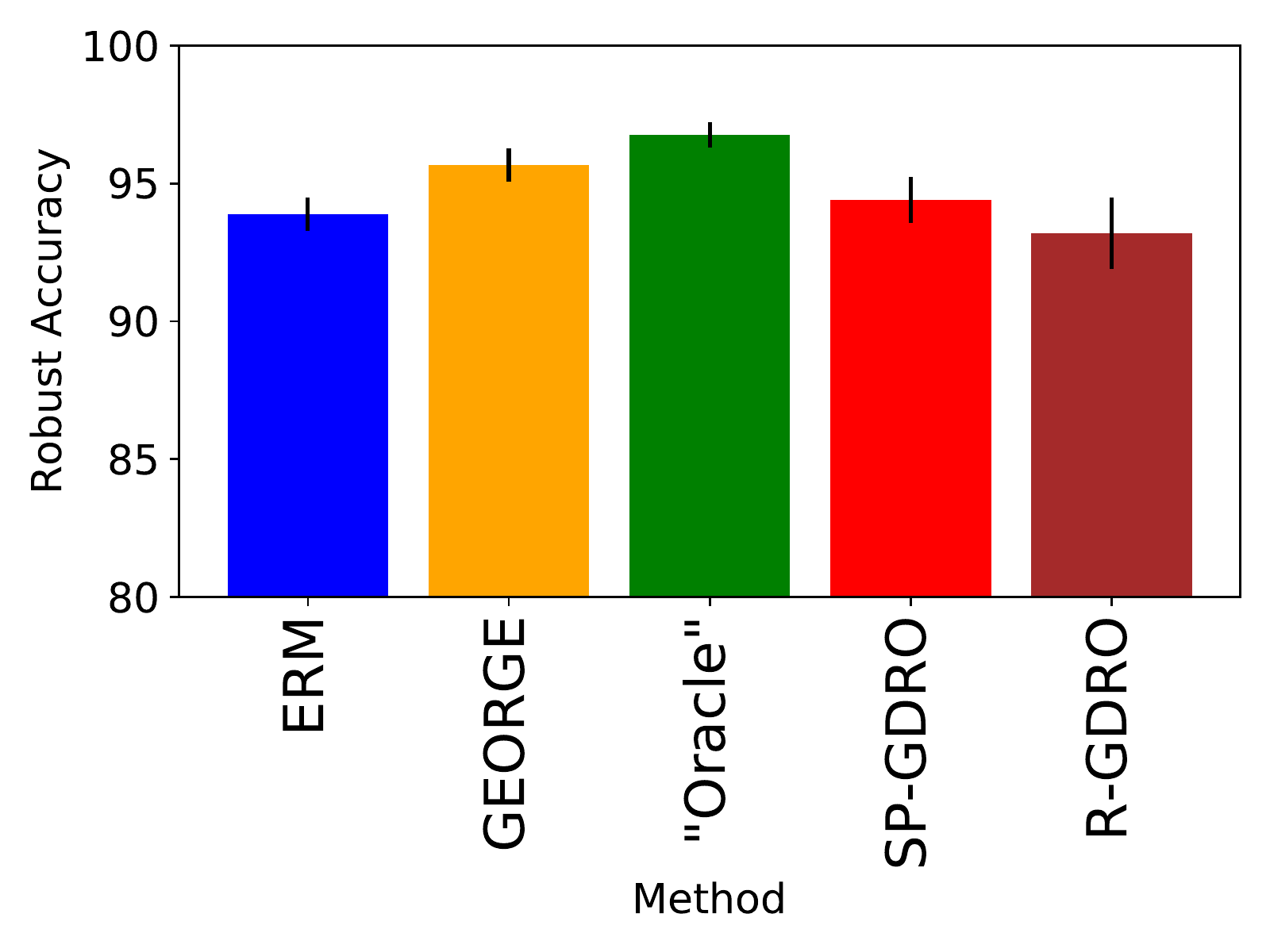}
\label{fig:mnist}
\caption{U-MNIST}
\end{subfigure}\qquad
\begin{subfigure}{0.3\linewidth}
\centering
\includegraphics[trim=0mm 0mm 0mm 0mm, clip, scale=0.3]{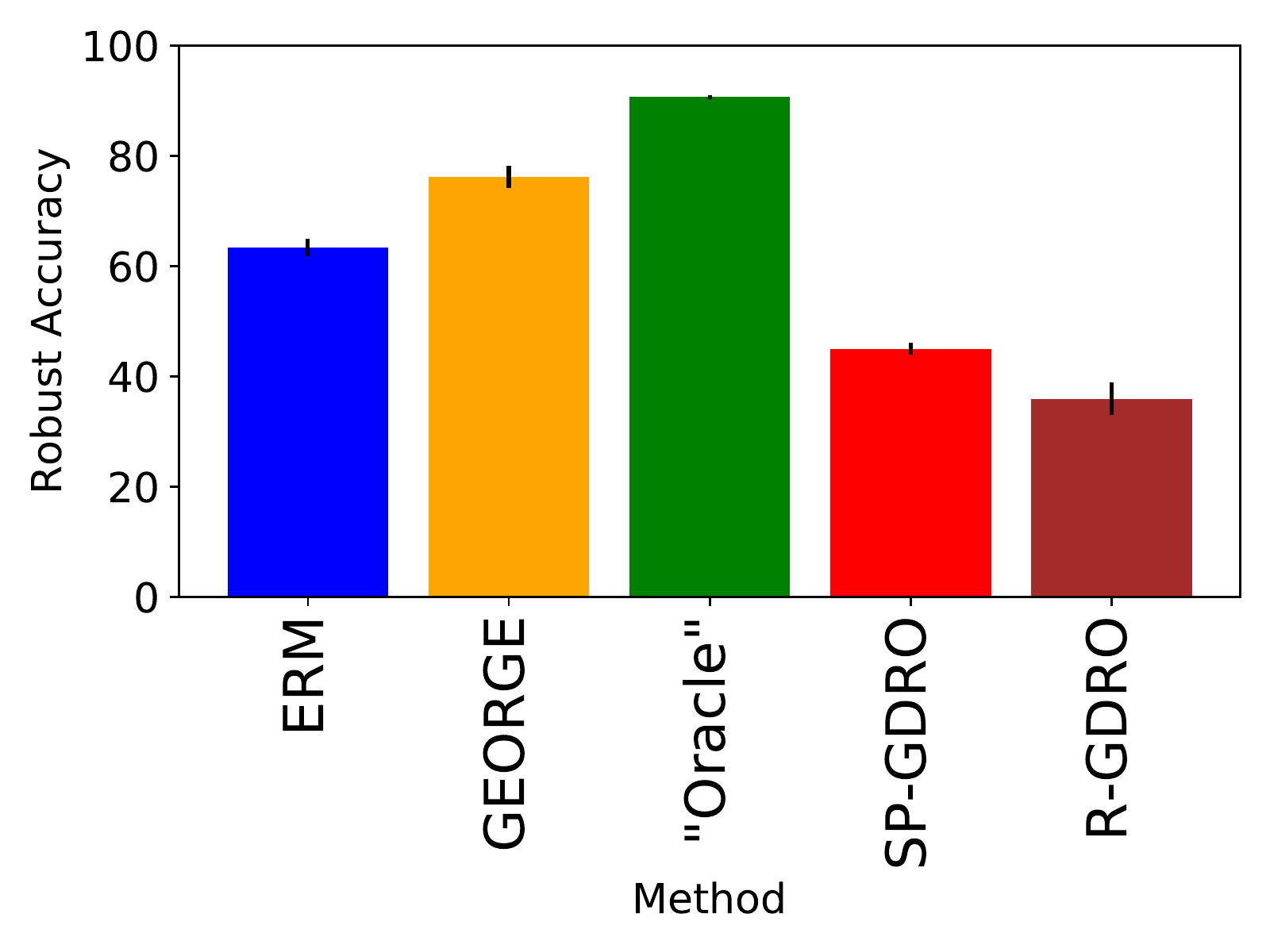}
\label{fig:wb}
\caption{Waterbirds} 
\end{subfigure} \\
\begin{subfigure}{0.3\linewidth}
\centering
\includegraphics[trim=0mm 0mm 0mm 0mm, clip, scale=0.3]{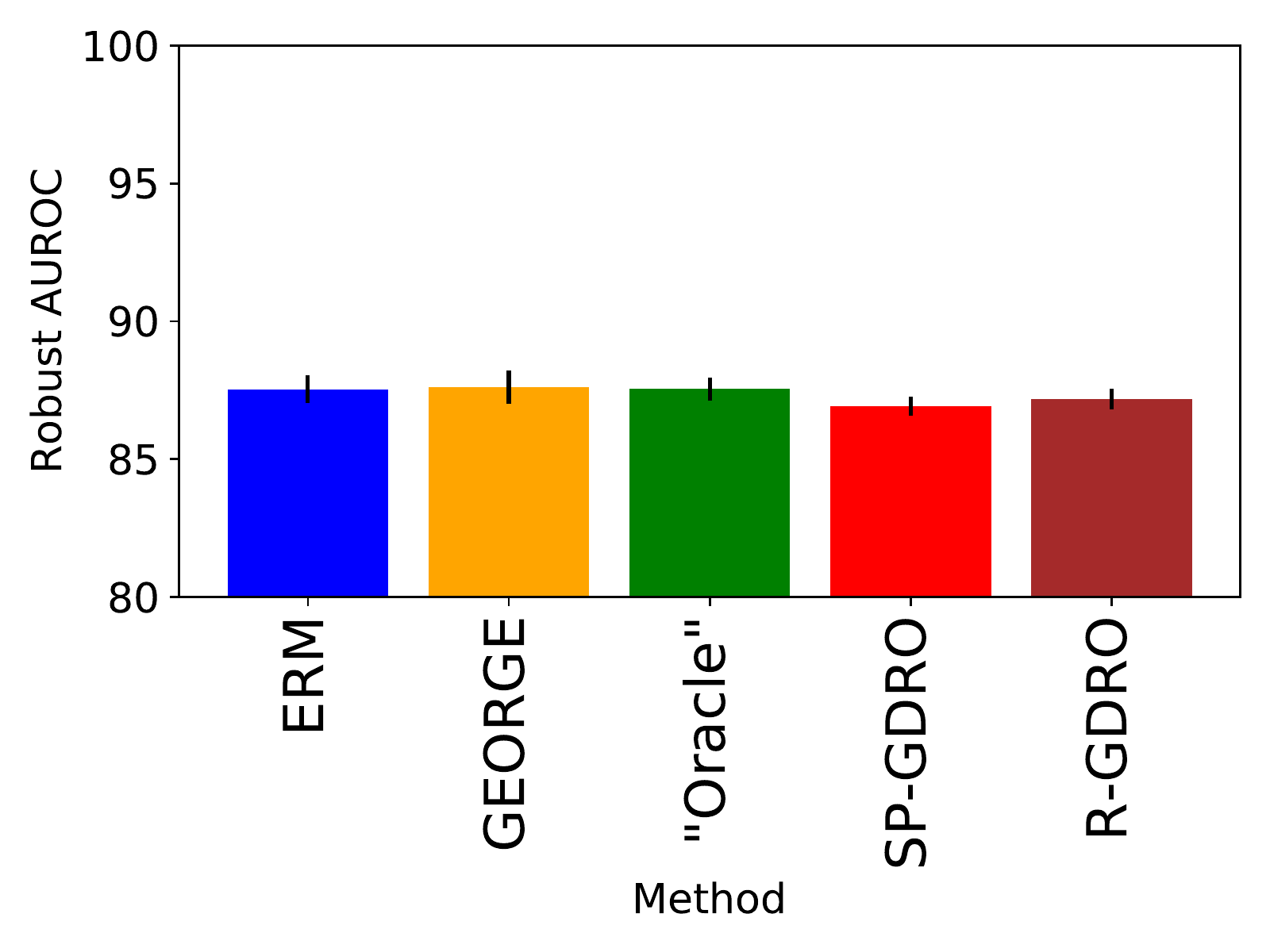}
\label{fig:celeba}
\caption{ISIC (Histopathology)}
\end{subfigure}\qquad
\begin{subfigure}{0.3\linewidth}
\centering
\includegraphics[trim=0mm 0mm 0mm 0mm, clip, scale=0.3]{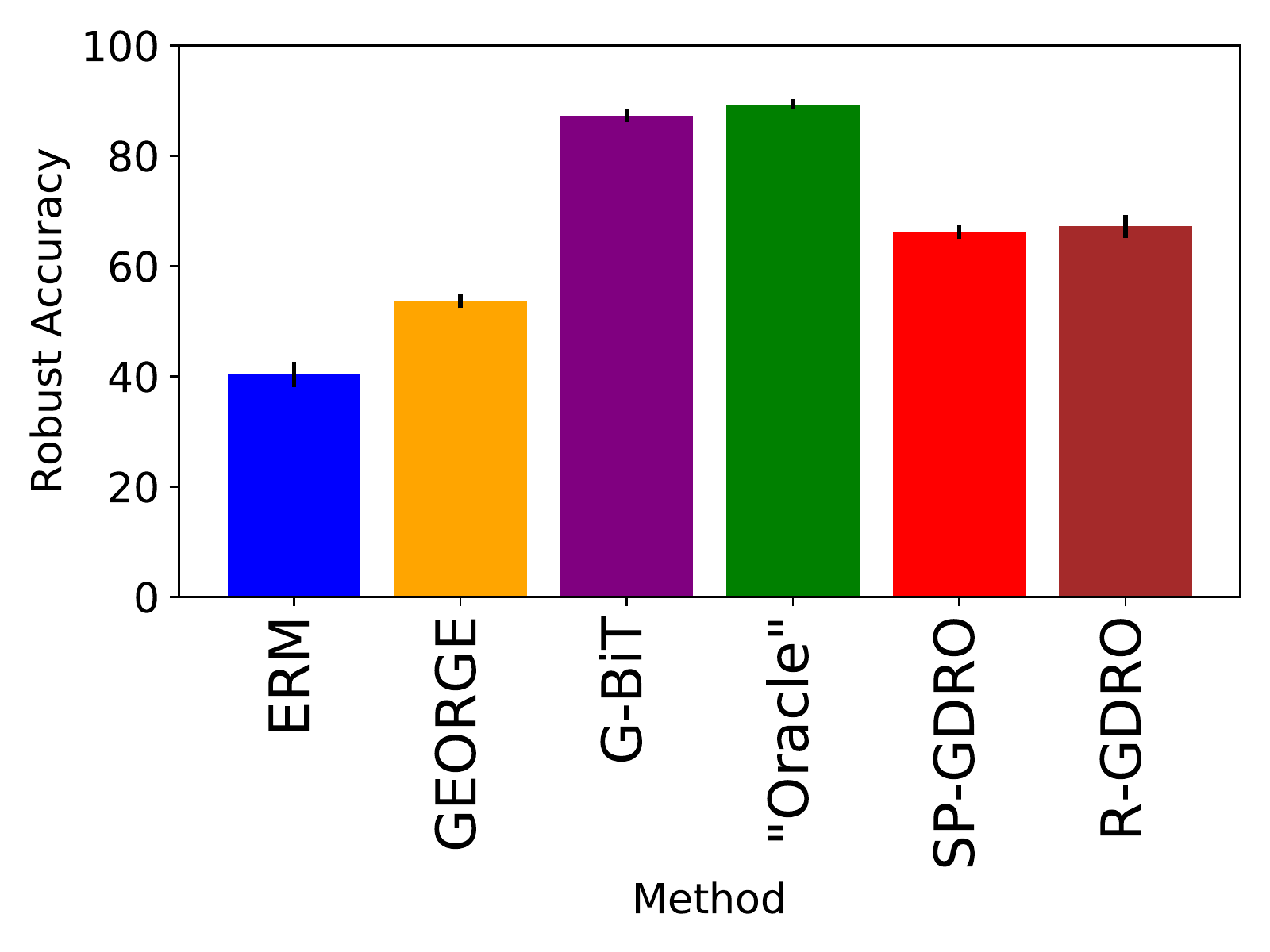}
\label{fig:isic}
\caption{CelebA}
\end{subfigure}
\caption{Worst-group performance of ERM, Superclass-GDRO (SP-GDRO), Random-GDRO (R-GDRO), \name, and Subclass-GDRO (SBC-GDRO).  We also show
\name~with BiT embeddings (G-BiT) for CelebA; however, G-BiT performed significantly worse on the other datasets.}
\label{fig:end-to-end}
\end{figure*}

\subsection{Dataset Details}
\label{sec:data-details}
Below, we describe the datasets used for evaluation in more detail. [We provide PyTorch dataloaders to support each one in our code.]

Each dataset contains labeled subclasses;
although the $\name$ procedure does not use the subclass labels at any point,
we use them to assess how well \name~(a)
can estimate the subclass labels and
(b) can estimate and improve worst-case subclass performance.

We remark that while we evaluate on binary classification tasks in this work, \name~can readily be applied in principle to tasks with any amount of superclasses and subclasses.

\subsubsection{U-MNIST}
Undersampled MNIST (U-MNIST) is a binary dataset that divides data from the standard MNIST dataset (which has 60,000 training points) into two superclasses: numbers less than five, and numbers greater than or equal to five.  Crucially, the ``8'' subclass is subsampled at 5\% of its usual frequency in MNIST. The rarity of the ``8'' subclass makes this task much more challenging than the default MNIST task, in terms of robust performance. We use data drawn from the original MNIST validation set as our test set; we create a separate validation set of 12,000 points by sampling from the MNIST training set, and use the remainder for training.

On the validation (and test) sets, we do not actually undersample the 8's, as this would leave only 50-60 ``8'' examples; instead, we downweight these examples when computing validation/test accuracies and losses, to mimic the rarity that would be induced by actually undersampling but still allow for more stable accuracy measurements.

\subsubsection{Waterbirds}
\label{sec:data-wb}
The Waterbirds dataset (4,795 training points) used in this work was introduced in \cite{sagawa2020distributionally}. Similar to our approach for U-MNIST, Sagawa et al. \citep{sagawa2020distributionally} create more balanced validation and test sets to allow for stable measurements, and downweight the examples from rare subclasses during evaluation (as we do for U-MNIST as well); we follow the same procedure.

\subsubsection{ISIC}
The dataset from the International Skin Imaging Collaboration (ISIC) website is, at time of writing, comprised of 23,906 images and their corresponding metadata \citep{codella2019skin}.  We extract the ISIC dataset directly from the site's image archive, which is accessible through a public API.\footnote{https://isic-archive.com/api/v1/} We only use images whose metadata explicitly describe them as ``benign'' or ``malignant.'' We use these descriptors in order to formulate the problem as a binary classification task that classifies images as either normal or abnormal. Other possible descriptors that exist in the image metadata (which we filter out) include ``indeterminate,'' ``indeterminate/benign,'' ``indeterminate/malignant,'' or no description. We created pre-set training, validation, and test splits from these images by randomly assigning 80\% of examples to the training set, 10\% to the validation set, and 10\% to the test set. 

We derive true subclass information from the image metadata. In particular, we observed that an image belongs in the benign patch subclass if and only if it is an image from the SONIC data repository \citen{SCOPE2016813}. As detailed in Section \ref{sec:clustering-exp}, we are retroactively able to identify the histopathology subclass through analysis of the diagnosis confirmation type of each image. Images in the histopathology subclass were explicitly mentioned as such---other possible diagnosis confirmation types include ``single image expert consensus,'' ``serial imaging showing no change,'' ``confocal microscopy with consensus dermoscopy,'' or no confirmation type.

\subsubsection{CelebA} The CelebA dataset \citen{liu2015faceattributes} is a standard face classification dataset containing over 200,000 examples ($\approx 163,000$ train) of celebrity faces, each annotated with 40 different attributes.  The images contain a wide variety of poses, backgrounds, and other variations.  The task is to classify faces as ``blond'' or ``not blond,'' as in \citen{sagawa2020distributionally}. We use the standard (pre-set) train/validation/test splits for this task.

\subsection{Methods}
\label{sec:methods}
\subsubsection{Result Reporting}
For each dataset, we perform ten separate trials of each method with different random seeds. The exception is CelebA, on which we perform five trials instead due to the larger dataset size.
In all result tables, `$X \pm Y$' intervals represents a 95\% confidence interval, where $X$ is the mean of the per-trial results and $Y$ is the interval half-width, calculated as standard deviation times 1.96 divided by the square root of the number of trials. Similarly, in all plots, error bars denote 95\% confidence intervals computed the same way.

\subsubsection{Baselines}
In addition to ERM, we run two additional baseline methods: superclass-GDRO and random-GDRO. Superclass-GDRO minimizes the maximum loss over each \emph{superclass}, i.e., runs GDRO using the superclasses as groups. Since we assume knowledge of the training superclass labels, this does not require additional information at training time. Random-GDRO runs GDRO using \emph{randomly chosen} groups within each superclass, where the groups are chosen to have the same sizes as the true subclasses. Since we do not assume the subclass sizes are known, this is not a method that would be useful in practice; rather, it helps highlight the difference between running GDRO with labels that do not align well with the true subclasses, and running GDRO with labels that do. Results on each dataset are presented in Figure~\ref{fig:end-to-end}.

\subsubsection{ERM Training Details}
The first stage of \name~is to train a model for each application using ERM. The activations of the resulting model are clustered and used in the second stage of our procedure. Inspired by the results of \citen{sagawa2020distributionally}, we explored using either a standard ERM model or an ERM model with high regularization for this stage, selecting between the two based on the quality of the resulting clustering as measured by the Silhouette score (an unsupervised metric). Below, we detail the ERM hyperparameter settings for each dataset.
\paragraph{U-MNIST.}
Our U-MNIST model is a simple 4-layer CNN, based on a publicly available LeNet5 implementation;\footnote{\url{https://github.com/activatedgeek/LeNet-5}} based on this implementation, we fix the learning rate at 2e-3 and use the Adam optimizer. Each model is trained for 100 epochs. Because the original implementation does not specify a weight decay, we search over weight decay values of $[10^{-3}, 10^{-4}, 10^{-5}]$, and choose the setting with highest average validation accuracy over three trials with different random seeds. Our final hyperparameters are recorded in Table \ref{table:hps}.
\paragraph{Waterbirds.}
Our Waterbirds model uses the \texttt{torchvision} implementation of a 50-layer Residual Network (ResNet-50), initialized with pretrained weights from ImageNet (as done in \citep{sagawa2020distributionally}). For the standard ERM model we use hyperparameters reported by \citep{sagawa2020distributionally}: weight decay of 1e-4, learning rate of 1e-3, SGD with momentum 0.9, and 300 epochs.
For the high-regularization ERM model (which ends up being the one used in Stage 1 of \name), the weight decay and learning rate are 1.0 and 1e-5 respectively (as done in \citep{sagawa2020distributionally} for the high-regularization model).
\paragraph{CelebA.}
Our CelebA model also uses a \texttt{torchvision} pretrained ResNet-50, as done in \citep{sagawa2020distributionally}.
We use the hyperparameters reported by \citep{sagawa2020distributionally}: weight decay of 1e-4, learning rate of 1e-4, SGD with momentum 0.9, and 50 epochs.
For the high-regularization ERM model (which ends up being the one used in Stage 1 of \name), the weight decay and learning rate are 0.1 and 1e-5 respectively (as done in \citep{sagawa2020distributionally} for the high-regularization model).
However, we train on 4 GPUs instead of 1.
(This change does not substantially affect the results; our ERM and subclass-GDRO results are similar to those reported in \citep{sagawa2020distributionally}.)
\paragraph{ISIC.}
Our ISIC model also uses a \texttt{torchvision} pretrained ResNet-50.
Models were trained for 20 epochs using SGD with momentum 0.9 (as done in~\citen{rieger2019interpretations}).
Because these hyperparameters were unavailable in the literature for this architecture and task, we grid searched over weight decay values in [0.01, 0.001, 0.0001] and learning rates in [0.0005, 0.001, 0.005], selecting the values that maximize the overall AUROC on the validation set, averaged over three trials per hyperparameter setting.

Rather than measuring accuracy for ISIC, we use the AUROC (area under the receiver operating characteristic curve), as is standard on this task \citep{rieger2019interpretations} and other medical imaging tasks~\citep{oakden2019hidden}. The specific metric of interest is the worst \emph{per-benign-subclass} AUROC for classifying between that subclass and the malignant superclass (e.g., benign no-patch vs. malignant AUROC). Typically, models designed to attain high AUROC are trained by minimizing the empirical risk as usual. For our \emph{robust} models, we instead minimize $\max\limits_{c \in \text{benign}} \left\{ \f{1}{n_c + n_{\text{malignant}}} \su{x}{} \1(z_i = c \text{ OR } y_i = \text{malignant}) \ell(x_i, y_i; \theta) \right\}$ - in other words, the maximum over all benign subclasses of the ``modified'' empirical risk where all other benign subclasses are ignored. We do this because the worst-case loss over any \emph{benign or malignant} subclass is not necessarily a good proxy for the worst-case per-\emph{benign}-subclass AUROC. Due to the dataset imbalance (many fewer malignant than benign images), standard ERM models attain 100\% accuracy on the benign superclass and much lower accuracy (and higher loss) on the malignant superclass. By contrast, in practice a classification threshold is typically selected corresponding to a target \emph{sensitivity} value.

\subsubsection{Clustering Details}
\label{sec:clustering-details}
We apply a consistent clustering procedure to each dataset, which is designed to encourage discovery of clusters of varied sizes, while still being computationally efficient. We emphasize that while the clustering procedure outlined below yields adequate end-to-end results on our datasets, optimizing this part of the \name~procedure represents a clear avenue for future work. In particular, we use the Silhouette score as a metric to select between feature representations and number of clusters; while this is a serviceable heuristic, it has several flaws (and in the case of BiT embeddings, misleadingly suggests that they are not a suitable representation due to their low Silhouette score).
\begin{enumerate}
\item 
\emph{Dimensionality Reduction}: As recommended by \cite{mcconville2019n2d}, we use UMAP for dimensionality reduction before clustering;
clustering is faster when the data is low-dimensional, and we find that UMAP also typically improves the results. As an alternative to UMAP, we
also use the component of the representation that is orthogonal to the decision boundary, which we refer to as the ``loss component,'' as a single-dimensional representation; this can improve clustering on datasets, especially when performance on certain subclasses is particularly poor (as discussed further in Appendix~\ref{app:gap}).\footnote{We experimented with concatenating the UMAP and loss representations, but found this to reduce performance.} When the loss component is used to identify clusters, we find that applying higher regularization to the initial ERM model further improves clustering quality, as this regularization ``pushes examples further apart'' along the loss direction, and adopt this convention in our experiments.\footnote{The loss component is used for Waterbirds and CelebA (non-BiT version). For both datasets, the weight decay and learning rate used for the high-regularization ERM model are the same as the ones used for the GDRO models on that dataset.}

In each experiment, we select the representation and the number of clusters $k$ based on the parameter setting that achieves the highest average per-cluster Silhouette score. (For all experiments, we set the number of UMAP neighbors to 10 and the minimum distance to 0; further information about these hyperparameters can be found in \citep{mcinnes2018umap}.)

The fact that simply using the ``loss component'' can yield reasonable results is arguably surprising, as this essentially amounts to just picking the examples that the original network got wrong (or closer to wrong than others). Nevertheless, especially on tasks with severe data imbalances and ``spurious features'' (e.g., Waterbirds and CelebA), the rare subclasses do tend to be misclassified at far higher rates, so simply picking the misclassified examples can be a crude but effective heuristic.

\item 
\emph{Global Clustering}: For each superclass, we search over $k \in 2,\dots,10$ to find the clustering that yields the highest average Silhouette score, using the dimensionality reduction procedure identified above.  We similarly perform a search over clustering techniques ($k$-means, GMM, etc.), and find that GMM models achieve high average Silhouette scores most often in our applications. Given that GMM clustering also aligns with our theoretical analysis, we use this approach for all datasets. We refer to this global clustering as $f_{C,G}$.

\item
\emph{Overclustering}: For each superclass, we take the clustering $f_{C,G}$ achieving the highest average Silhouette score, and then split each cluster $c_i$ into $F$ sub-clusters $c_{i1}, \dots, c_{iF}$, where $F$ denotes the ``overclustering factor'' (fixed to 5 for all experiments). For each sub-cluster $c_{ij}$ whose Silhouette score exceeds the Silhouette score of the corresponding points in the original clustering, and which contains at least $s_{min}$ points (for a small threshold value $s_{min}$), the global clustering $f_{C,G}$ is updated to include $c_{ij}$ as a new cluster (and its points are removed from the base cluster $c_i$). The overclustering factor $F$ was coarsely tuned via visual inspection of clustering outputs (without referencing the true subclass labels); the threshold value $s_{min}$ is used to prevent extremely small clusters, as these can lead to instability when training with GDRO and/or highly variable estimates of validation cluster-robust accuracy. (Note: We do not apply overclustering to 1-dimensional representations, as it tends to create strange within-interval splits.)
\end{enumerate}

\subsubsection{Dimensionality Reduction and Clustering: Further Details}
\paragraph{U-MNIST.}
Dimensionality reduction for this dataset used 2 UMAP components and no loss component, as UMAP achieved higher SIL scores.
Our clustering procedure consistently identifies a cluster with a high proportion of the low-frequency ``8'' subclass. As detailed in the main body, we also often observe a small additional cluster with a high concentration of ``7''s written with crosses through the main vertical bar (see Figure \ref{fig:cluster-viz}); performance on this subset is low (below 90\%), which explains why cluster-robust performance actually underestimates the true subclass performance on U-MNIST.
 
\paragraph{Waterbirds.}
Dimensionality reduction for this dataset used only 1 component (the loss component); this significantly outperformed UMAP both in terms of SIL score and final robust performance. We observe that while our procedure does not yield clusters with absolutely high frequencies of the minority classes (as shown in Table~\ref{tab:subclass-recovery}), \name~still identifies clusters with high enough precision (i.e., high enough proportions of the poorly-performing subclasses) such that the second stage of \name~can substantially improve performance on these subclasses.

\paragraph{ISIC.}
Dimensionality reduction for ISIC used 2 UMAP components and no loss component. Patch and non-patch examples lie in different clusters over 99\% of the time. Within the non-patch subclass, on most trials, histopathology examples mostly lie in a different cluster from non-histopathology examples. Similarly, the patch examples often further separate into clusters based on the color of the patch (Figure~\ref{fig:cluster-viz}).

Despite the fact that clustering reveals the non-patch and histopathology subclasses with fairly high fidelity (as also shown in Table~\ref{tab:subclass-recovery}), we do not observe significant improvements in performance on either subset. We hypothesize that this is due to these subsets being ``inherently harder.'' For example, we find that even Subclass-GDRO, which uses the true patch vs. non-patch subclass labels, fails to significantly improve performance on the non-patch subclass compared to ERM, and in fact fails to significantly reduce the training loss on it compared to ERM despite being explicitly trained to do so. This suggests that the issue causing underperformance on these subsets may be due to other factors than the training optimization algorithm (such as model capacity).
 
\paragraph{CelebA.}
Dimensionality reduction for CelebA (without BiT) used only 1 component (the loss component). We observe that clustering does not do a good job of identifying the subclasses of either superclass; thus, it is not surprising that the default version \name~(i.e., without BiT) performs poorly. In fact, \name~performs poorly even compared to the non-ERM baselines. By contrast, \name-BiT does significantly better; the clustering on the (nearly balanced) non-blond superclass attains approximately 95\% accuracy at distinguishing between men and women, and the clustering on the blond superclass also significantly improves over the default version of \name.

\subsubsection{BiT Details}
\label{app:bit-details}
As an alternative to representations from a trained ERM model, we explore the use of BiT embeddings \citen{koleskinov2020bit}, as discussed in Section~\ref{sec:bit}. We use the ResNet-50 version of BiT embeddings; specifically, BiT embeddings are the activations of the penultimate layer of a network pretrained on massive quantities of image data (see \citen{koleskinov2020bit} for more details). The remainder of \name~proceeds the same as usual: the embeddings are clustered and then the cluster assignments are used in the GDRO objective.

For BiT, we experimented with both clustering the BiT embeddings directly (under the hypothesis that the BiT embedding space itself is a good representation), and clustering after dimensionality reduction with UMAP. We found clustering raw embeddings generally performed somewhat better; thus, we show results for clustering the raw embeddings. Due to the high dimensionality of these embeddings (2048-d), we use $k$-means clustering when clustering the BiT embeddings, although the rest of our procedure remains the same.

We find that BiT embeddings significantly improve the end-to-end robust performance results on CelebA; however, they perform worse than the standard version of \name~on all other datasets, indicating that the task-specific information is important for these other tasks to learn a ``good'' representation that can be clustered to find superclasses. Indeed, we find that on these other tasks, the BiT clustering is worse than clustering the activations of the ERM model, in terms of precision and recall at identifying poorly-performing subclasses. [For example, when BiT embeddings are used on MNIST, the ``8''s are never identified as their own cluster.]

Surprisingly, the clustered BiT embeddings uniformly have a much lower Silhouette score than the clustered ERM embeddings, even for CelebA. Thus, our current unsupervised representation and clustering selection technique would not have identified the BiT embeddings as better for CelebA. Improving the representation and clustering selection metric to do a better job at automatically choosing among different representations is an interesting avenue for future work. We note that if a small validation set with \emph{subclass} labels is available, such a set could be used to select between different clusterings by measuring the degree of overlap of the clusters with the true subclasses, as well as used to measure which representation and clustering technique eventually leads to the best validation robust accuracy; however, in general we do not assume any prior knowledge about the subclasses in this work.

\subsubsection{GDRO Training Details}
In the final step of \name, we train a new model (with the same architecture) using the group DRO approach of \cite{sagawa2020distributionally} with weak subclass labels provided by our cluster assignments, and compare to GDRO models trained using (a) superclass labels only (b) random subclass labels and (c) human-annotated subclass labels. Below, we describe the hyperparameter search procedure for each such model and dataset. Unless otherwise stated, all other hyperparameters (batch size, momentum, \# epochs, etc.) are the same as those for ERM.

\paragraph{U-MNIST.}
In the case of U-MNIST, we ran a hyperparameter search over weight decay in [1e-3, 1e-4, 1e-5], and $C$ (the group size adjustment parameter from \cite{sagawa2020distributionally}) in [0, 1, 2].
We find performance to be fairly insensitive to the hyperparameters, so choose weight decay of 1e-5 and $C = 0$ for simplicity and consistency with ERM.

\paragraph{Waterbirds.}
For Waterbirds, we use hyperparameters provided by \cite{sagawa2020distributionally}, so no additional hyperparameter tuning is required.  These hyperparameters are  presented in Table \ref{table:hps}.

\paragraph{CelebA.}
For CelebA, we again use hyperparameters provided by \cite{sagawa2020distributionally}, so no additional tuning is required.  These are presented in Table \ref{table:hps}.

\paragraph{ISIC.}
Each type of ISIC model is hyperparameter searched over the same space as the original ERM model, in addition to searching over group size adjustment parameter $C$ in [0, 1, 2]. We found performance to be fairly insensitive to both. Hyperparameters with highest validation performance were used in the final runs, and are reported in Table \ref{table:hps}.

\subsection{Hyperparameters}
In Table \ref{table:hps}, we present the selected hyperparameters for the final runs of each dataset and method.
{
\begin{table*}[!t]
\small
	\begin{tabular}{|c|c|c|c|c|c|c|}
		\hline
		Dataset & Training Procedure & Epochs & Learning Rate & Batch Size & Weight Decay & Group Adj. Parameter \\
		\hline
		U-MNIST & ERM & 100 & 2e-3 & 128 & 1e-5 & - \\
		U-MNIST & Random-GDRO & 100 & 2e-3& 128 & 1e-5 & 0\\
		U-MNIST & Superclass-GDRO & 100 & 2e-3& 128& 1e-5  & 0\\
		U-MNIST & \name & 100 & 2e-3& 128  & 1e-5 & 0\\
		U-MNIST & Subclass-GDRO & 100 & 2e-3& 128  & 1e-5 & 0\\
		\hline
		Waterbirds & ERM & 300 & 1e-3 & 128  & 1e-4  & -\\
		Waterbirds & Random-GDRO& 300 & 1e-5 & 128  & 1 & 2 \\
		Waterbirds & Superclass-GDRO& 300 & 1e-5  & 128 & 1 & 2 \\
		Waterbirds & \name& 300 & 1e-5  & 128 & 1 & 2 \\
		Waterbirds & Subclass-GDRO& 300 & 1e-5  & 128 & 1  & 2 \\
		\hline
		ISIC & ERM& 20  & 1e-3 & 16& 1e-3  & -\\
		ISIC & Random-GDRO& 20  & 1e-3 & 16& 1e-3  & 2 \\
		ISIC & Superclass-GDRO& 20 & 1e-3 & 16& 1e-3  & 1 \\
		ISIC & \name & 20  & 5e-4 & 16& 1e-3 & 1 \\
		ISIC & Subclass-GDRO& 20  & 5e-4 & 16& 1e-3  & 2 \\
		\hline
		CelebA & ERM& 50 & 1e-4 & 128 & 1e-4  & -\\
		CelebA & Random-GDRO& 50 & 1e-5& 128  & 0.1 & 3 \\
		CelebA & Superclass-GDRO& 50 & 1e-5& 128  & 0.1 & 3 \\
		CelebA & \name& 50 & 1e-5 & 128 & 0.1  & 3 \\
		CelebA & Subclass-GDRO& 50 & 1e-5 & 128 & 0.1 & 3 \\
		\hline
	\end{tabular}
	\caption{Final hyperparameters used in experiments. (Note that for each dataset,
	all \name~runs use the same hyperparameters regardless of whether they use BiT or ERM
	embeddings.)}
	\label{table:hps}
	
\end{table*}
}

\section{Additional Experimental Results}
\label{app:results}
In this section, we provide additional ablation experiments.

\subsection{Visualizing Clusters}
In Figures \ref{fig:cluster-viz} and \ref{fig:celeba-viz}, we visualize the representations returned by $\name$, as well as the clusters it finds and representative examples from each cluster.

\begin{figure*}[!t!]
\centering
\includegraphics[width=\linewidth]{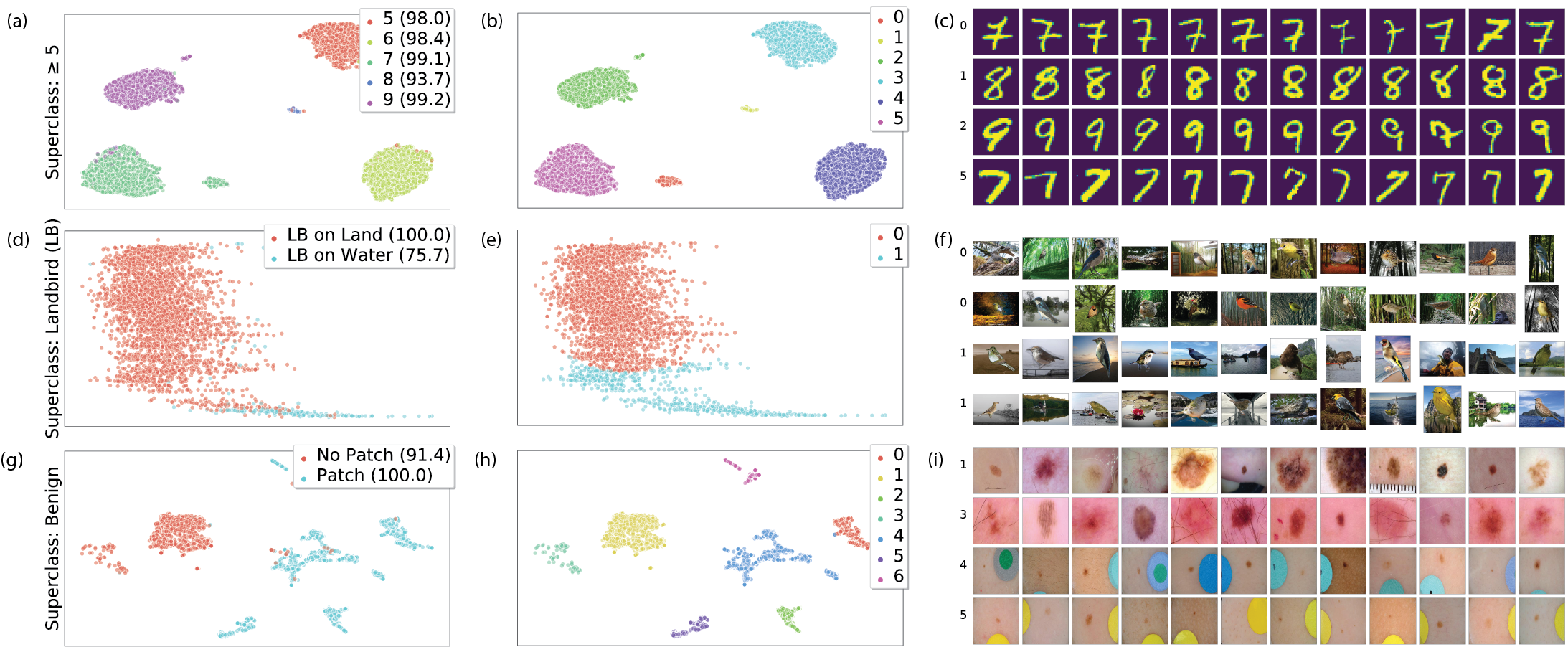}
\caption{True subclasses in the ``feature space'' of a trained ERM model. Left panel
legend colors points by their true subclass, and displays the validation accuracies that the model attains on each subclass.
Middle panel colors points by the cluster index that \name~assigns them. Right panel displays randomly selected examples from each cluster.
Datasets: U-MNIST (row 1), Waterbirds (row 2), and ISIC (row 3). Note that for Waterbirds, the vertical axis is the ``loss component'' and the horizontal axis is the UMAP component.
}
\label{fig:cluster-viz}
\end{figure*}

\begin{figure*}[!t!]
\centering
\includegraphics[width=\linewidth]{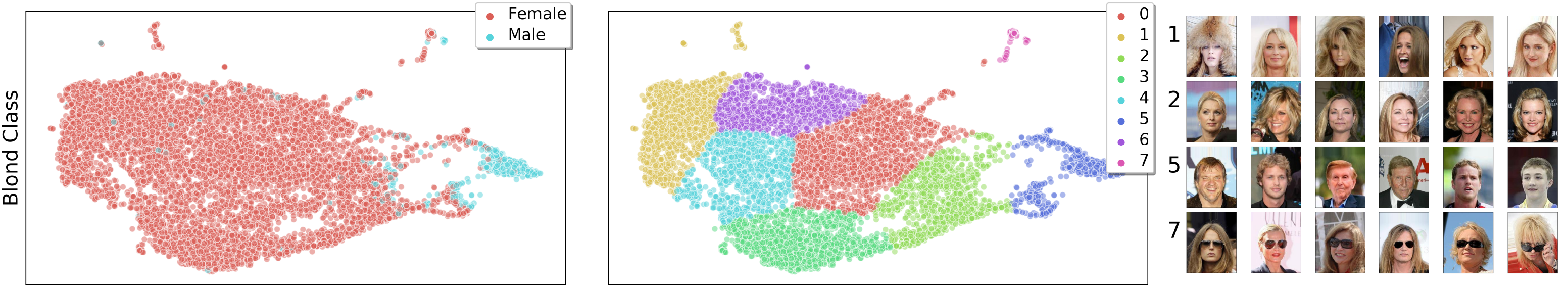}
\caption{True subclasses in \textbf{BiT} embedding space for CelebA,
clusters (middle), and examples from selected clusters (right). 
}
\label{fig:celeba-viz}
\end{figure*}

\subsection{Comparing Cluster-Robust Performance and True Robust Performance}
In addition to the results of Table \ref{tab:measurement} which show that the cluster-robust performance
is a good approximation for the true robust performance, we find that the cluster-robust performance
typically tracks closely with the true robust performance \emph{throughout training}
(with the exception of CelebA without BiT clusters). For example,
Figure~\ref{fig:wb-train-plot} plots the validation cluster-robust accuracy and validation true robust accuracy
from a randomly selected training run on Waterbirds. Both metrics are quite close to each other throughout training
(while the overall accuracy is significantly higher).
\begin{figure}[t]
\centering
\begin{subfigure}{0.45\linewidth}
\centering
\includegraphics[trim=0mm 0mm 0mm 0mm, clip, height=1.75in]{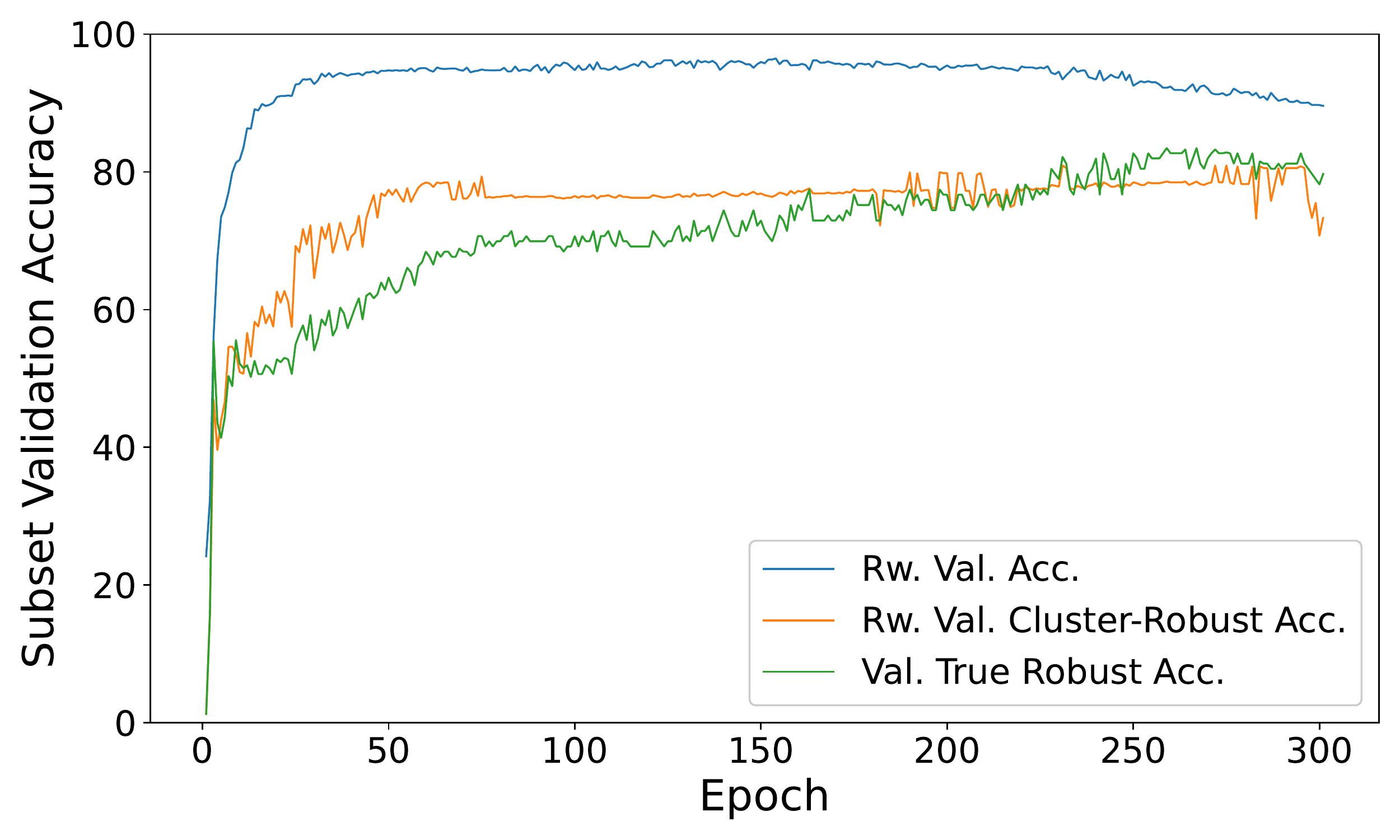}
\label{fig:mnist}
\end{subfigure}\qquad
\begin{subfigure}{0.45\linewidth}
\centering
\includegraphics[trim=0mm 0mm 0mm 0mm, clip, height=1.75in]{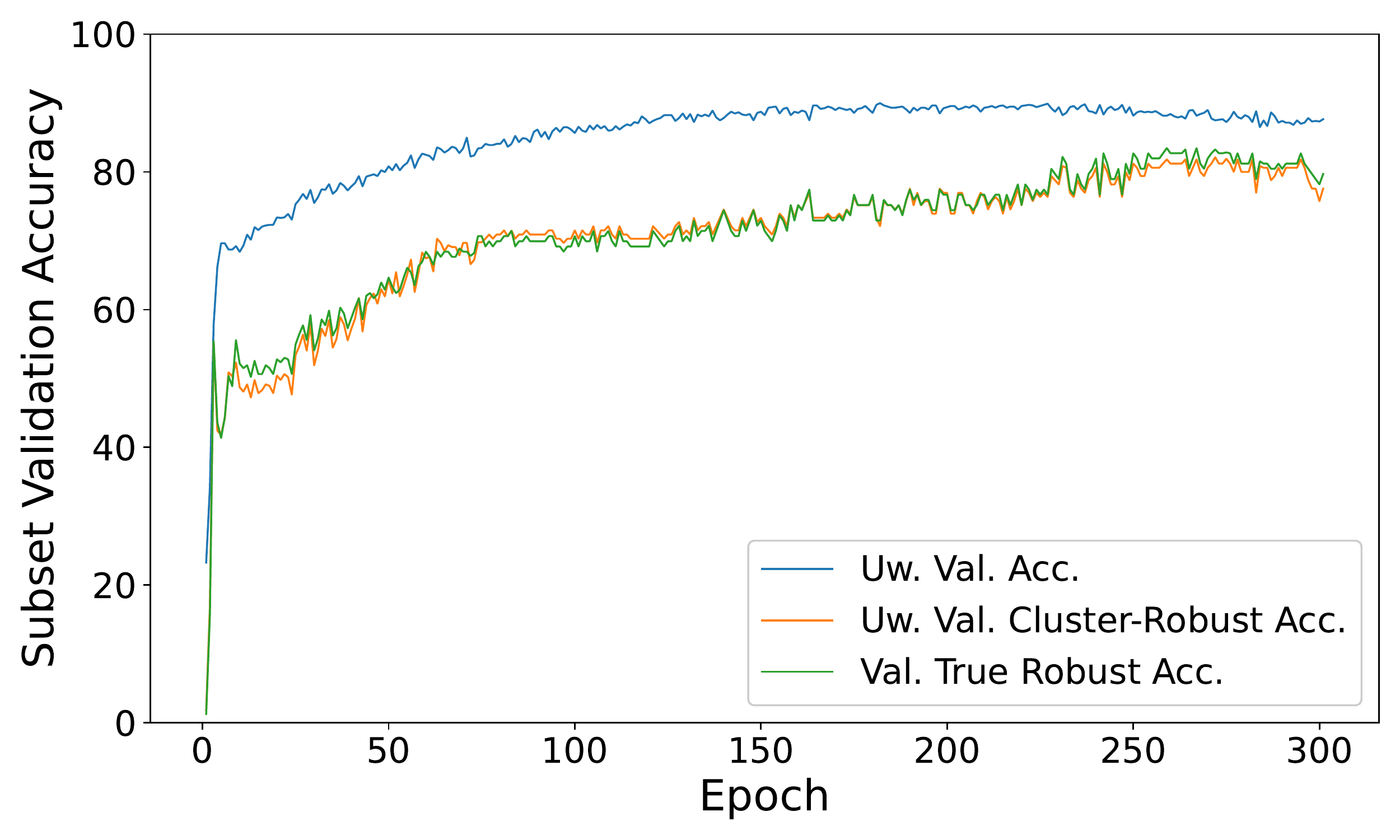}
\label{fig:wb}
\end{subfigure}
\caption{Overall accuracy, worst-case cluster accuracy, and worst-case true subclass accuracy on the validation set
during a randomly selected training run of \name~on Waterbirds [(a) using reweighting for validation set, and (b) not using reweighting]. The worst-case cluster accuracy closely tracks the worst-case true subclass accuracy (especially when reweighting is not applied, whereas the overall accuracy is significantly higher due to the overrepresentation of the ``easier'' subclasses.}
\label{fig:wb-train-plot}
\end{figure}

\subsection{Runtime}
In Table \ref{table:runtime}, we present runtimes for the standard version of \name~broken down by stage.

As the default implementation of \name~involves first training an ERM model, dimensionality-reducing and then clustering its activations,
and then training a ``robust'' model, the total runtime is roughly 2-3$\times$ long as that of simply training an ERM model.
For \name-BiT, no ERM model is trained (and we do not apply dimensionality reduction), so the runtime is just the runtime of the
clustering stage plus the runtime of training the GDRO model (``Step 2''). On our datasets, the total runtime of \name-BiT is less
than 1.5x times the runtime of \name. For instance, on CelebA with BiT embeddings (the largest and most expensive dataset), the
entire clustering stage (including the time taken to compute the BiT embeddings of the datapoints) takes 46 minutes, while the 
time taken to train the ERM model is roughly 2.5 hours. (Clustering the BiT embeddings is more expensive because they are 2048-dimensional.)

Note that the runtime of typical clustering algorithms
scales superlinearly in the number of datapoints;
while the clustering runtime is usually less than the training time for the datasets
we evaluate on, a remedy for larger datasets could be to only use a random subset of the data for clustering (which typically does not
significantly worsen the cluster quality). In addition, we did not attempt to optimize the dimensionality reduction and clustering routines themselves.
As we search over $k$ from 2 to 10 for each superclass, and then overcluster, this is 20 different clusterings in total, along with computing the Silhouette score for each one (which is also expensive as it involves computing pairwise distances). If we instead fixed $k$ (for instance), the total clustering runtime would be less than 7 minutes even for CelebA with BiT embeddings.

The runtime of \name~can be substantially reduced by training the second (robust) model for fewer epochs. On Waterbirds and CelebA,
we can recover over 70\% of the worst-case performance improvement of \name~even
when we limit the total runtime to 1.3$\times$ that of ERM, simply by training for fewer epochs in the second stage. On U-MNIST, if we 
additionally adjust the LR decay schedule so that decay occurs before the end of the shortened training, and fix $k$ to 5 to avoid
the expensive search over $k$ as described above, we can achieve this as well. (On ISIC, the ERM model itself already attains
nearly the same robust AUROC on the histopathology subclass, and higher on the non-patch subclass, than the \name~model.)

\begin{table*}[h]
\centering
	\begin{tabular}{cccc}
		\hline
               Dataset & ERM total runtime & Clustering total runtime & GDRO total runtime \\
                \hline
               U-MNIST & 9m & 6m & 10m \\
               ISIC & 31m & 2m  & 32m \\
               Waterbirds & 90m & 1m & 91m \\
               CelebA & 177m & 17m & 179m \\
                \hline
        \end{tabular}
       \caption{Average runtimes for different stages of \name~(standard implementation, without BiT).
        All runtimes are reported on a machine with 8 CPUs and a single NVIDIA V100 GPU,
        except for CelebA which was run on a machine with 32 CPUs and four NVIDIA V100 GPUs.
       (\name-BiT differs only in the clustering runtime.)}
        \label{table:runtime}
	
\end{table*}

\subsection{Label Noise}
We ran experiments in which a fixed percentage of the data of each subclass was randomly given an incorrect superclass label.
With a minor modification (discarding small clusters), \name~empirically works well in the presence of label noise when the total number of corrupted labels in each superclass is less than the size of the smallest subclass.
Up to this noise threshold, \name~attains +3 points robust accuracy on MNIST and +4 points robust AUROC on ISIC compared to ERM. However, ensuring subgroup-level robustness if there is a larger group of ``wrong'' examples is difficult because differentiating ``real'' subclasses from noise becomes challenging. Thus, we do not consider applying label noise to Waterbirds as the smallest subclass (water-birds on land) is only 1\% of the data; similarly, the smallest subclass on CelebA (blond males) is only 3\% of the data.

In fact, our clustering approach can even be used to help identify incorrectly labeled training examples. First, if a small ``gold'' set of correctly labeled examples is available, the clustering found on the training data could be evaluated on this gold set; clusters consisting of mostly incorrectly labeled training examples should have very few members in the gold set. If such a ``gold'' set is not available, the clusters still allow for much more rapid inspection of the data for incorrect labels, since a few representative examples from each cluster can be inspected instead of a brute-force search through all the training images for incorrectly labeled images. Finally, if one has prior knowledge of the frequency of the rarest subclass in the training data, one can simply discard training examples belonging to poorly-performing clusters smaller than this threshold, treating them as incorrectly labeled.

\subsection{Fixing $k$}
\label{sec:fix-k}
 If the number of clusters $k$ is held fixed (rather than automatically chosen based on Silhouette score), robust performance tends to initially improve with $k$, before decreasing as large values of $k$ cause fragmented clusters that are less meaningful. For example, robust accuracies on U-MNIST using 2, 5, 10, 25, and 100 clusters per superclass are 95.0\%, 96.3\%, 95.9\%, 94.4\%, 90.8\% respectively. We also observe similar trends on the other datasets.

\subsection{Effect of Model Choice on Subclass Recovery}
\label{sec:architecture}
\begin{figure}
\centering
 \includegraphics[height=1.5in]{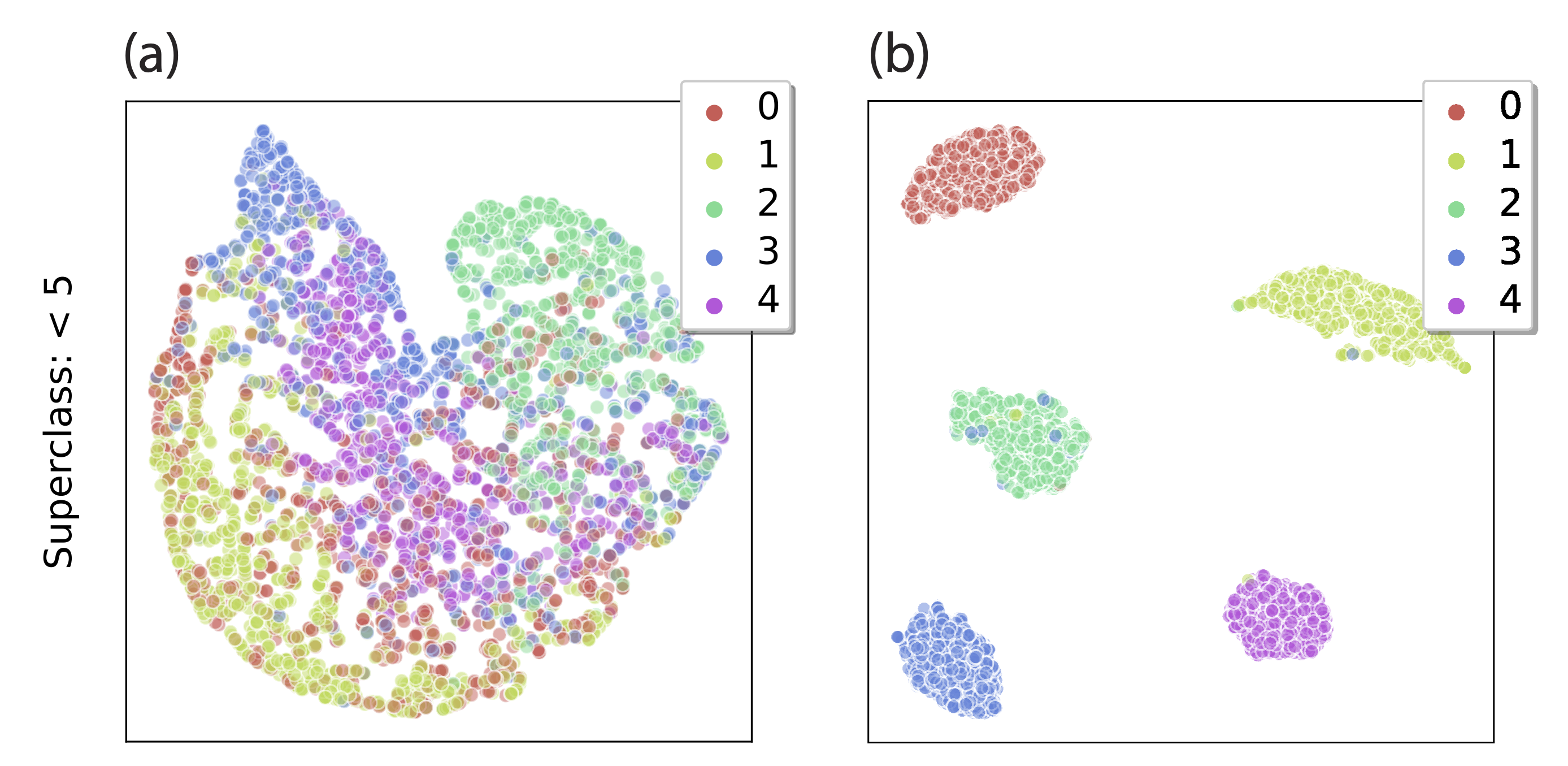}
\caption{LeNet5 vs. LeNet 300-100 activations on U-MNIST with true subclass labels (superclass ``$<$ 5'').
A simple convolutional network (LeNet5, right) separates the true subclasses well in feature space,
while a network consisting only of fully-connected layers (LeNet 300-100, left) does not.}
\label{fig:mnist-architecture}
\end{figure}
As suggested in \Cref{sec:alg-framework}, choosing an appropriate model class $\mathcal{F}$ for the featurizer $f_\theta$ is important.
In particular, $\mathcal{F}$ should ideally contain the inverse of the true generative function $g$,
in order to recover the latent features $\vec{V}$ from the data $X$.
 We demonstrate the importance of model architecture on the ability to separate subclasses
in the model feature space
by comparing the feature representations of two simple networks on a superclass of the U-MNIST dataset
(described in Section~\ref{sec:datasets}).
Figure \ref{fig:mnist-architecture} shows that the choice of model family can strongly affect the learned feature
representation of the initial model and its ability to provide useful information about the subclass.
On this dataset, the feature space of a simple fully connected network (Figure \ref{fig:mnist-architecture}a)
yields substantially less separation between the known subclasses than 
does that of a simple convolutional network (Figure \ref{fig:mnist-architecture}b),
which displays clusters that clearly correspond to semantically meaningful subclasses.

\subsection{Additional Classification Metrics}
In Table~\ref{tab:addl-metrics}, we compare \name~and ERM in terms of both per-subclass averaged accuracy (SCAA) and average precision on the test set.
As expected, \name~slightly decreases average precision, as it trades off some average-case performance for better worst-case performance,
and \name~typically increases per-subclass averaged accuracy (except on U-MNIST, where there is a very slight decrease), due to the fact that it significantly improves performance on poorly-performing subclasses
while only slightly decreasing performance on other subclasses.
\begin{table}[h]
\centering
\small
	\begin{tabular}{lcccccccc}\\
	\toprule 
       \textbf{Method} & \textbf{Metric} & \textbf{Waterbirds} & \textbf{U-MNIST}
       & \textbf{CelebA (BiT)} \\
    \midrule
        \name & SCAA &  86.7 & 97.9 & 90.6
    \\
    & AP & .967 & .998 & .835
		 \\
    \midrule
    ERM & SCAA & 83.8 & 98.2 & 80.5
		\\
		& AP & .984 & .999 & .912
    \\
    \bottomrule
  \end{tabular}
  \caption{Per-subclass averaged accuracy (SCAA) is the mean of the accuracies on each subclass.
  AP denotes the average precision score (which has a maximum of 1).}
    \label{tab:addl-metrics}

\end{table}

In Table \ref{tab:val_metric_avg}, to complement Table \ref{tab:val_metric} we report the robust test accuracies and average test accuracies for models trained with ERM or \name, but where the model checkpoint is selected using the true validation robust accuracy.
\setlength\tabcolsep{3pt}\
\begin{table}[h]
  \vspace{-1em}
	\centerfloat
  \small
	\begin{tabular}{lc|cccccccccc}\\
		\toprule
       \multicolumn{1}{c}{\textbf{Training Method\,\,}} &
        \multicolumn{1}{c}{\textbf{\,Test Metric\,\,}} &
      \multicolumn{1}{c}{\textbf{Waterbirds}} &
      \multicolumn{1}{c}{\textbf{\textsc{U-MNIST}}} &
       \multicolumn{1}{c}{\textbf{{CelebA}}} & 
       \\
    \midrule
		ERM & Robust Acc. & 68.8($\pm$0.9) & 94.5($\pm$0.9) & 46.3($\pm$2.1)
		\\
		& Overall  Acc. & 97.2($\pm$0.1)  & 98.0($\pm$0.2) & 95.1($\pm$0.3)
		 \\
    \midrule
		\name & Robust Acc. & 83.8($\pm$1.0)  & 96.3($\pm$0.5) & 54.9($\pm$1.9)
		\\
		& Overall Acc. & 93.9($\pm$0.8) & 97.9($\pm$0.2)   & 94.5($\pm 0.2$)	
			\\
    \bottomrule
  \end{tabular}
  \caption{Test average performance for each method, where the ``best'' model checkpoint over the training trajectory
  is selected according to the validation robust accuracy (or AUROC in the case of ISIC).}
    \label{tab:val_metric_avg}
\end{table}

\subsection{Empirical Validation of Lemma \ref{lem:unbiased}}
Recall that Lemma \ref{lem:unbiased} says that if we know the true data distribution, we can estimate the per-subclass loss $R_c$ by the quantity $\tilde{R}_c$, a reweighted average of the losses in superclass $S(c)$ based on the ratio of the posterior likelihood of a point given subclass $c$ to its posterior likelihood given superclass $S(c)$; Lemma \ref{lem:unbiased} bounds their difference in terms of the number of datapoints $n$.
In Figure~\ref{fig:lemma1}, we empirically validate Lemma \ref{lem:unbiased} on a synthetic mixture-of-Gaussian example (in which the true data distribution is indeed known). We generate data in dimension $d = 3$ with two subclasses each containing six subclasses each, and compute $R_c$ and $\tilde{R}_c$ for varying numbers of samples $n$ in order to observe the scaling with $n$. We average results over 20 trials; in each trial, new per-subclass distributions are randomly sampled and then new datapoints are sampled. Results are shown in Figure~\ref{fig:lemma1}. As can be observed from the log-log plot, the slope of the line corresponding to the simulated $|\tilde{R}_c - R_c|$ value is very close to that of the predicted rate; the best fit line has a coefficient of $-0.5065$, corresponding to a $O(n^{-0.5065})$ rate, essentially matching Lemma \ref{lem:unbiased}'s predicted $O(n^{-0.5})$ rate.
\begin{figure*}[!t!]
\centering
\includegraphics[height=2in]{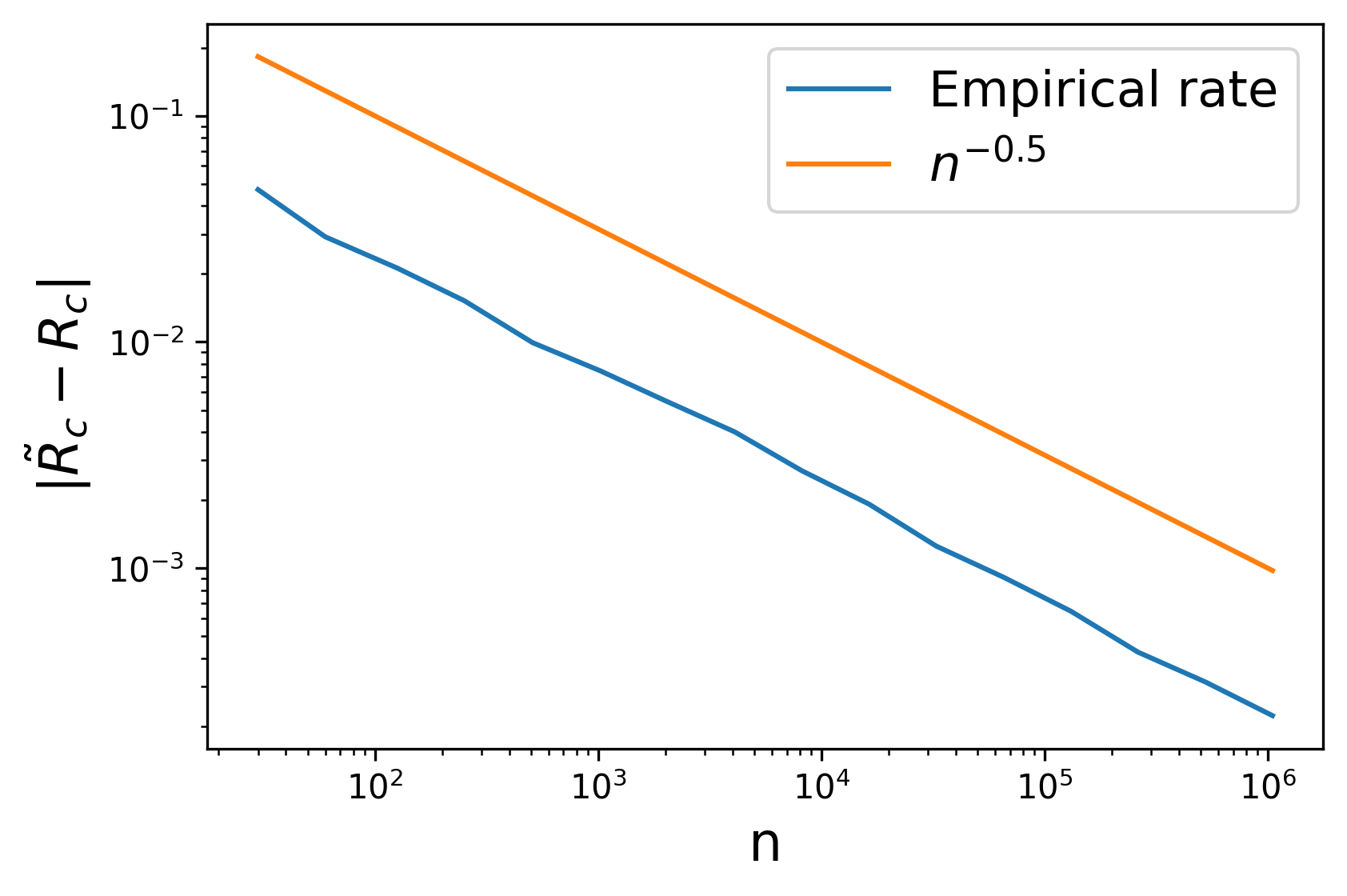}
\caption{Comparison of theoretical and simulated convergence of $\tilde{R}_c - R_c$.}
\label{fig:lemma1}
\end{figure*}

\section{Derivations and Proofs}
\label{app:theory}
\subsection{Analysis of Example \ref{example:erm_fail}}
\label{app:prop1}
We restate Example \ref{example:erm_fail} from \Cref{sec:erm-failure} below:
\paragraph{Example \ref{example:erm_fail}.}
The binary attribute vector $\vec{Z}$ has dimension 2, i.e., $\vec{Z}= (Z_1,Z_2)$, while only $Z_2$ determines the superclass label $Y$, i.e., $Y = Z_2$.
The latent attribute $Z_1$ induces two subclasses in each superclass,
each distributed as a different Gaussian in feature space,
with mixture proportions $\A$ and $1-\A$ respectively.
For linear models with regularized logistic loss, as the proportion $\A$ of the rare subclasses goes to $0$,
the worst-case subclass accuracy of ERM is only $O(\A)$, while that of GDRO is $1-O(\A)$. 

\begin{proof}

Specifically, we consider the following distribution setup: $\vec{Z} \,{\in}\, \{\unaryminus 1, +1\}^2$, 
with ${p(\vec{Z} = (\unaryminus 1, \unaryminus1))} = {p(\vec{Z} = (+1,+1))} = \ff{1-\A}{2}$,
$p(\vec{Z} = (\unaryminus 1,+1)) = p(\vec{Z} = (+1, \unaryminus 1)) = \A/2$,
and $p(V_1 | Z_1) = \N(4Z_1, \A^2)$, $p(V_2 | Z_1,Z_2) = \N(Z_1+3Z_2, \A^2)$,
and the label $Y = h(Z_1,Z_2)$ simply equals $Z_2$. We assume the observed data
$X = (V_1, V_2)$, i.e., the observed data is the same as the ``underlying features'' $\vec{V}$.

Thus, the superclass $Y = -1$ is made up of a ``big'' subclass with distribution $\N((-4, -4), \A^2 \textbf{I})$ and relative
mixture weight $1-\A$ [corresponding to $\vec{Z} = (-1, -1)$], and a ``small'' subclass with distribution $\N((+4, -2), \A^2 \textbf{I})$ and relative mixture weight
$\A$ [corresponding to $\vec{Z} = (+1, -1)$], where $\textbf{I}$ denotes the $2 \times 2$ identity matrix.
The superclass $Y = +1$ is made up of a ``big'' subclass with distribution $\N((+4, +4), \A^2 \textbf{I})$ and relative
mixture weight $1-\A$ [corresponding to $\vec{Z} = (+1,  +1)$], and a ``small'' subclass with distribution $\N((-4, +2), \A^2 \textbf{I})$ and relative mixture weight
$\A$ [corresponding to $\vec{Z} = (-1, +1)$].

For notational simplicity in the following analysis, we will henceforth rename the label $Y = -1$ as $Y = 0$.
The prediction of the logistic regression model on a given sample $(x_1,x_2)$ is $\sigma(w_1 x_1 + w_2 x_2)= \sigma(w^Tx)$, where $\sigma(x) \defeq \log(\ff{1}{1+e^{-x}})$ denotes the sigmoid function and $w_1,w_2$ are the weights of the model. The decision boundary is the line $w^Tx = 0$; examples with $w^Tx < 0$ are classified as $Y = 0$, else they are classified as $Y = 1$.
[For simplicity of exposition, we assume there is no bias term,
and assume that we regularize the norm of the classifier so that $\norm{w}_2 \le R$ for some constant $R$, as changing the parameter norm does not change the decision boundary. Note that neither assumption is necessary, but they serve to simplify the analysis.]

\noindent The logistic loss is the negative log-likelihood, which is $-\su{i}{} \big(y_i \log(\ff{1}{1 + e^{-w^Tx}}) + {(1-y_i)} \log(\ff{1}{1 - e^{-w^Tx}})\big)
= \su{i}{} \big(y_i \log(1 + e^{-w^Tx}) + {(1-y_i)} \log(1 - e^{-w^Tx})\big)$.
Note that by symmetry, the loss on the two subclasses with $Z_1 = Z_2$ is the same, as is the loss on the two subclasses with $Z_1 \ne Z_2$. Therefore, we focus on the class $Y = 1$. The expected average loss on the $Y = 1$ superclass is $\E_{x | y = 1}[-\log(\ff{1}{1 + e^{-w^Tx}})] =
p(Z_1 = 1 | Z_2 = 1)\cdot \\\E_{x | (z_1,z_2) = (1,1)}[\log(1 + e^{-w^Tx})] + p(Z_1 = -1 | Z_2 = 1) \cdot\E_{x | (z_1,z_2) = (-1,1)}[\log(1 + e^{-w^Tx})]$.

\noindent By 1-Lipschitz continuity of the logistic loss and Jensen's inequality,
\begin{align*}
\left| \E_{x | y = 1, z = 1}[\log(1 + e^{-w^Tx})] - \E_{x | y = 1, z = 1}[\log(1 + e^{-w^T(4, 4)})]\right| &\le \\
\E_{x | y = 1, z = 1}[|w^Tx - w^T(4,4)|] &= \\
{\E_{x | y = 1, z = 1}[|w_1 (x_1-4)| + |w_2(x_2-4)|]} &= 
(|w_1| + |w_2|) \E[|b|],\end{align*}
where $b \sim N(0, \A^2)$.
$\E[|b|] = \A\sqrt{2/\pi}$; so, the loss on the $Z_1 = 1$ subclass is bounded in the range
$\log(1 + e^{-w^T(4, 4)}) \pm \A\sqrt{2/\pi}\cdot \norm{w}_2$.

Similarly, the loss on the $Z_1 = -1$ subclass is bounded in the range $\log(1 + e^{-w^T(-4, 2)}) \pm \A\sqrt{2/\pi}\cdot \norm{w}_2$. So, the total loss is bounded in $(1-\A)\log(1 + e^{-w^T(4, 4)}) + \A\log(1 + e^{-w^T(-4, 2)}) \pm  \A\sqrt{2/\pi}\cdot \norm{w}_2$. When $\A$ is sufficiently small, the first term is $\Theta(1)$, while the latter two are $O(\A)$ (under the assumption that $\norm{w}_2$ is bounded). For a fixed value of $\norm{w}_2$, the first term is minimized when $w / \norm{w}_2 = (\ff{1}{\sqrt{2}}, \ff{1}{\sqrt{2}})$, so that $w^T(4, 4)$ is as large as possible. A $\Theta(\A)$-scale perturbation to the direction $w /\norm{w}_2$ results in an increase of $\Theta(\A)$ to the quantity $(1-\A)\log(1 + e^{-w^T(4, 4)})$. Thus, whenever $\A$ is sufficiently small, $w /\norm{w}_2$ must be $ (\ff{1}{\sqrt{2}}, \ff{1}{\sqrt{2}}) + O(\A)$ in order to minimize the loss subject to the $\norm{w}_2 \le R$ constraint. In other words, the regularized ERM solution converges to $(w_1, w_2) = (\ff{1}{\sqrt{2}}, \ff{1}{\sqrt{2}})$ as $\A \downarrow 0$.

For the $Z_1 = -1$ subclass, $w^Tx$ is a normal random variable with mean $-4w_1 + 2w_2$ and variance $\A \norm{w}_2^2$. When $\A$ is sufficiently small and $w / \norm{w}_2 = (\ff{1}{\sqrt{2}}, \ff{1}{\sqrt{2}}) + O(\A)$, the quantity $-4w_1 + 2w_2$ is negative with magnitude $O(1)$---and thus, since examples with $w^Tx < 0$ are classified as $Y = 0$, this means that for sufficiently small $\A$ the fraction of the subclass $Z_1 = -1$ classified correctly as $Y = 1$ is only $O(\A)$.

By contrast, the GDRO solution minimizes the maximum per-subclass loss. Since each subclass has the same covariance
$\A^2 \textbf{I}$, the GDRO decision boundary is the line that separates the superclass means and has maximum distance to any subclass mean. After normalization to have $\norm{w}_2 = 1$, this is the line $(-\ff{1}{\sqrt{5}}, \ff{4}{\sqrt{5}})$; the true solution will be some multiple of this (depending on $\A$ and $R$), giving rise to the same boundary. As $\A \downarrow 0$, the accuracy of this decision boundary is $1-O(\A)$, since the variance of each subclass is $O(\A^2 \textbf{I})$.
\end{proof}

\subsection{Proofs from Section~\ref{sec:analysis}}
\label{app:analysis-proofs}
\subsubsection{Proof of Lemma \ref{lem:unbiased}}
\unbiasedness*
\begin{proof}
Define $\#_{y=k} \defeq \su{i=1}{n} \1(y_i = k)$ and $\#_{z=c} \defeq \su{i=1}{n} \1(z_i = c)$.
Using this notation, \\$R_c = \f{1}{\#_{z=c}} \su{i=1}{n} \1(z_i = c) \ell(f(x_i), S(c))$,
and $\tilde{R}_c = \f{1}{\#_{y=S(c)}} \su{i=1}{n}  \f{p(x_i | z_i = c)}{p(x_i | y_i = S(c))}  \1(y_i = S(c))\ell(f(x_i), S(c))$.
\\ First, observe that $\E[\tilde{R}_c] = \E[R_c]$:
the expectation of each term in the summation defining $\E[\tilde{R}_c]$ is
\\
$\E_{x \sim \mathcal{P}(\cdot | y = S(c))} \Lt \f{p(x | z = c)}{p(x | y = S(c))} \ell(f(x), S(c)) \Rt = {\I{\R^d}{}\f{p(x | z = c)}{p(x | y = S(c))} \ell(f(x), S(c)) p(x | y = S(c)) \, \dx} =
\\
\I{\R^d}{} p(x | z = c)\ell(f(x), S(c)) \, \dx = \E_{x \sim \mathcal{P}(\cdot | z = c)}[\ell(f(x), S(c))] = \E[R_c]$, and so $\E[\tilde{R}_c] = \E[R_c]$.

Now, note that
\begin{align*}
0 &\le \f{p(x | z = c)}{p(x | y = S(c))} = \f{\ff{p(x, z=c)}{p(z=c)}}{\ff{p(x, y=S(c))}{p(y=S(c))}} = \f{\ff{p(x, y=S(c), z=c)}{p(y=S(c), z=c)}}{\ff{p(x, y=S(c))}{p(y=S(c))}} \\
&=
\f{p(x, y=S(c), z=c)}{p(x, y=S(c))} \cdot \f{p(y=S(c))}{p(y=S(c), z=c)}
\\
&\le
\f{p(y=S(c))}{p(y=S(c), z=c)} 
= \f{1}{p(z = c | y = S(c))} \le \f{1}{\pi_{\min}},
\end{align*}
where $\pi_{\min}$ denotes the minimum true subclass proportion (i.e., $\pi_{\min} = \min_c p(z = c)$).

Thus, assuming $\ell$ is bounded, each term in the summation defining $\tilde{R}_c$ is bounded, and has mean $\E[R_c]$ as argued above; applying Hoeffding's inequality (to bound the probability that a sum of bounded random variables deviates from its mean by more than a specified amount) yields that $\left|\tilde{R}_c - \E[R_c] \right| \le O\lt \ff{1}{\sqrt{n}}\rt$ with high probability. Similarly, applying Hoeffding's inequality to $R_c$ yields that $\left|R_c - \E[R_c] \right| \le O\lt \ff{1}{\sqrt{n}}\rt$ with high probability.
\end{proof}

\subsubsection{Proof of Theorem~\ref{thm:l3}}
We restate Theorem~\ref{thm:l3} below:
\gaussian*

Recall that we define $\hat{R}_c$ to be the same as $\tilde{R}_c$ except with weights $\hat{w}(x, c)$ computed from $\hat{\P}$.
More precisely, $\hat{R}_c \defeq \f{1}{\#_{y=S(c)}} \su{i=1, y_i = S(c)}{n} \hat{w}(x,c) \ell(f(x_i), S(c))$, where
$\hat{w}(x, c) \defeq \f{\hat{p}(x | z = c)}{\hat{p}(x | y = S(c))}$.
[In Appendix~\ref{app:gdro-soft}, we provide an efficient algorithm to minimize $\hat{R}_{\text{robust}} = \max_c \hat{R}_c$.]

Our strategy to prove Theorem~\ref{thm:l3} will be as follows.
First, we prove a general statement (that holds regardless of the form of the true data distribution)
that relates the total variation (TV) between the true
per-subclass distributions and the estimated per-subclass distributions
to the difference between $\tilde{R}_c$ and the ``perturbed'' loss $\hat{R}_c$
(Lemma~\ref{lem:l2}).
Next, we bound the total variation between the true and estimated per-subclass distributions
in the mixture-of-Gaussians case, using the Gaussian mixture learning algorithm from \citen{ashtiani2018near}.
Finally, we use standard uniform convergence-type results to yield the final high probability bound on the robust risk of the
returned model $\hat{f}$.

\setcounter{secnumdepth}{4}

\titleformat{\paragraph}
{\normalfont\normalsize\bfseries}{\theparagraph}{1em}{}
\titlespacing*{\paragraph}
{0pt}{3.25ex plus 1ex minus .2ex}{1.5ex plus .2ex}

\paragraph{Total variation estimation error to error in loss}
First, we show that within each superclass, if the per-subclass distributions are estimated well, then the per-subclass estimated risks will be close to the true per-subclass risks.
\begin{restatable}{lem}{tv}
\label{lem:l2}
Let $\pi_{\min}$ be the minimum true subclass proportion and $\hat{\pi}_{\min}$ be the minimum estimated subclass proportion. Suppose $\ell$ is globally bounded by $M$.
Suppose that we have estimated superclass-conditional distributions $\hat{\P}(x | y)$ such that
for all superclasses $b \in [B]$,
$\TV({\P}(x | y = b), \hat{\P}(x | y = b)) \le \ep$.
Additionally suppose we have estimated subclass-conditional distributions $\hat{\P}(x | z)$
such that for all subclasses $c \in [C]$,
$\TV({\P}(x | z = c), \hat{\P}(x | z = c)) \le \ep$.
Then, $|\hat{R}_{c} - \tilde{R}_c| \le \ff{3M}{\hat{\pi}_{\min}} \cdot \ep + O(\ff{1}{\sqrt{n}})$ with high probability.
\end{restatable}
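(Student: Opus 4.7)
\textbf{Proof proposal for Lemma~\ref{lem:l2}.}
My plan is to reduce the bound on $|\hat R_c - \tilde R_c|$ to a deterministic integral inequality via concentration, and then bound that integral by a clean add-subtract decomposition that isolates the two total-variation errors. First I would write
\[
\hat R_c - \tilde R_c \;=\; \frac{1}{\#_{y=S(c)}} \sum_{i:\, y_i = S(c)} \bigl(\hat w(x_i, c) - w(x_i, c)\bigr)\, \ell(f(x_i), S(c)),
\]
and observe that both weight functions are uniformly bounded: since $p(x \mid y = b) = \sum_{c' \in S_b} p(z = c' \mid y = b)\, p(x \mid z = c') \ge p(z = c \mid y = b)\, p(x \mid z = c)$, we get $w(x,c) \le 1/\pi_{\min}$ and, by the same argument applied to $\hat{\P}$, $\hat w(x,c) \le 1/\hat\pi_{\min}$. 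Conditional on the set of indices with $y_i = S(c)$, the points are i.i.d.\ from $\P(\cdot \mid y = S(c))$, so each summand is bounded by $2M/\hat\pi_{\min}$ and Hoeffding gives, with high probability,
\[
\Bigl| \hat R_c - \tilde R_c - \mathbb{E}\bigl[(\hat w(x,c) - w(x,c))\,\ell(f(x),S(c)) \,\bigm|\, y = S(c)\bigr] \Bigr| \;=\; O\!\bigl(1/\sqrt n\bigr).
\]
(A standard Chernoff bound on $\#_{y=S(c)} = \Omega(n)$ w.h.p.\ absorbs the random denominator.)

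The heart of the proof is then to bound the conditional expectation, i.e.\ the integral $\int (\hat w(x,c) - w(x,c))\,\ell(f(x),S(c))\,p(x \mid y = S(c))\,\mathrm d x$. I would use the exact identities $w(x,c)\, p(x \mid y = S(c)) = p(x \mid z = c)$ and $\hat w(x,c)\, \hat p(x \mid y = S(c)) = \hat p(x \mid z = c)$. Adding and subtracting $\hat w(x,c)\, \hat p(x \mid y = S(c))$ yields the key pointwise identity
\[
\bigl(\hat w(x,c) - w(x,c)\bigr)\, p(x \mid y = S(c)) \;=\; \hat w(x,c)\, \bigl(p(x \mid y = S(c)) - \hat p(x \mid y = S(c))\bigr) \;+\; \bigl(\hat p(x \mid z = c) - p(x \mid z = c)\bigr).
\]
Integrating against $\ell(f(\cdot), S(c))$ and using $\|\ell\|_\infty \le M$, $\|\hat w(\cdot,c)\|_\infty \le 1/\hat\pi_{\min}$, together with the TV hypotheses, bounds the first piece by $2M\epsilon/\hat\pi_{\min}$ and the second by $2M\epsilon$, so the total is at most $3M\epsilon/\hat\pi_{\min}$ (absorbing constants via $\hat\pi_{\min} \le 1$). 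Combining with the concentration bound gives the claim.

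The main obstacle is really Step~2: finding the right add-subtract so that both resulting terms are genuine TV-type errors that can be bounded by the hypotheses on $\TV(\P(\cdot \mid y), \hat\P(\cdot \mid y))$ and $\TV(\P(\cdot \mid z), \hat\P(\cdot \mid z))$; naively, the ratio $\hat w - w = \hat p(x \mid z)/\hat p(x \mid y) - p(x \mid z)/p(x \mid y)$ can blow up where $p(x \mid y)$ is small, so the trick of multiplying through by $p(x \mid y = S(c))$ before decomposing is essential. The other minor bookkeeping item is tracking the randomness of $\#_{y=S(c)}$ and ensuring the Hoeffding step really gives an $O(1/\sqrt n)$ tail rather than an $O(1/\sqrt{\#_{y=S(c)}})$ one, which follows from a routine high-probability lower bound $\#_{y=S(c)} \gtrsim n\,p(y = S(c))$.
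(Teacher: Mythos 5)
Your proposal is correct and follows essentially the same route as the paper: the exact identity you obtain by multiplying $\hat w - w$ through by $p(x \mid y = S(c))$ is precisely the paper's triangle-inequality decomposition of $|\hat w(x,c) - w(x,c)|$ (whose $1/p(x \mid y = S(c))$ factor cancels upon taking the conditional expectation), followed by the same bound $\hat w(x,c) \le 1/\hat{\pi}_{\min}$, the same reduction of the two integrals to $2\,\mathrm{TV}$ terms, and the same Hoeffding step. The only nit is the final constant: collapsing $2M\ep/\hat{\pi}_{\min} + 2M\ep$ into $3M\ep/\hat{\pi}_{\min}$ requires $\hat{\pi}_{\min} \le 1/2$ (which holds because $C \ge 2$), not merely $\hat{\pi}_{\min} \le 1$ as you wrote.
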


\begin{proof}
First, we bound $|\hat{w}(x, c) - w(x,c)|$ using the triangle inequality:
\begin{align*}
& \,|\hat{w}(x, c) - w(x,c)| \\
\le &\left| \ff{\hat{p}(x | z = c)}{\hat{p}(x | y = S(c))} - \ff{{p}(x | z = c)}{{p}(x | y = S(c))} \right| \\
= &\,\ff{1}{{p}(x | y = S(c))} \Big| \ff{p(x | y = S(c)) }{\hat{p}(x | y = S(c))} \cdot {\hat{p}(x | z = c)} - {p(x | z = c)} \Big| \\
\le &\,\ff{1}{{p}(x | y = S(c))} \lt \left| \ff{p(x | y = S(c)) }{\hat{p}(x | y = S(c))} \cdot \hat{p}(x | z = c) - \hat{p}(x | z = c) \right| + \left|\hat{p}(x | z = c) - p(x | z = c)\right| \rt \\
= &\,\ff{1}{{p}(x | y = S(c))} \lt \hat{p}(x | z = c) \left| \ff{p(x | y = S(c)) }{\hat{p}(x | y = S(c))} - 1 \right| + \left|\hat{p}(x | z = c) - p(x | z = c)\right| \rt
\end{align*}
\noindent By definition, $\hat{p}(x | y = b) = \su{c \in S_b}{} \hat{p}(z = c | y = b) \hat{p}(x | z = c)$, so $\hat{p}(x | z = c) \le \f{\hat{p}(x | y = S(c))}{\hat{p}(z = c | y = S(c))}$. Thus,
\begin{align*}
\hat{p}(x | z = c) \left| \ff{p(x | y = S(c)) }{\hat{p}(x | y = S(c))} - 1 \right|  &\le
\ff{\hat{p}(x | y = S(c))}{\hat{p}(z = c | y = S(c))} \left| \ff{p(x | y = S(c)) }{\hat{p}(x | y = S(c))} - 1 \right| \\
&=
\ff{1}{\hat{p}(z = c | y = S(c))} \left|\hat{p}(x | y = S(c)) - p(x | y = S(c))  \right| \\
&\le \ff{1}{\hat{\pi}_{\text{min}}} \left|\hat{p}(x | y = S(c)) - p(x | y = S(c))  \right|,
\end{align*}
as by definition $\hat{p}(z=c | y = S(c)) \ge \hat{p}(z=c) \ge \hat{\pi}_{\min}$.
So,
\begin{flalign*}
&\E_{x \sim \P(\cdot | y = S(c))} \Lt \left|\hat{w}(x, c) - w(x,c)\right| \Rt \le \\
& \E_{x \sim \P(\cdot | y = S(c))} \left[\ff{1}{{p}(x | y = S(c))} \cdot \lt \ff{1}{\hat{\pi}_{\min}} |\hat{p}(x | y = S(c)) - p(x | y = S(c))| + |\hat{p}(x | z = c) - p(x | z = c)| \rt \right] = \\
&
{
\I{\R^d}{} \f{\ff{1}{\hat{\pi}_{\min}} \lt |\hat{p}(x | y = S(c)) - p(x | y = S(c)) | +
|\hat{p}(x | z = c) - p(x | z = c)| \rt}{{p}(x | y = S(c))} \cdot {p}(x | y = S(c)) \, \dx \le
}
\\
&\f{1}{\hat{\pi}_{\min}}
\I{\R^d}{} |\hat{p}(x | y = S(c)) - p(x | y = S(c)) | \, \dx +
\I{\R^d}{} |\hat{p}(x | z = c) - p(x | z = c)| \, \dx \le \\
&\f{2\ep}{\hat{\pi}_{\min}} + 2\ep \le \f{3\ep}{\hat{\pi}_{\min}},
\end{flalign*}
since $\I{\R^d}{}  \left|\hat{p}(x | y = S(c)) - p(x | y = S(c))\right| \, \dx = {2TV(\hat{\P}(x | y = (c)), \hat{\P}(x | y = S(c)))} \le 2\ep$ by assumption, and similarly $\I{\R^d}{}  \left|\hat{p}(x | z = c) - p(x | z = c)\right| \, \dx = {2TV(\hat{\P}(x | z = c), \hat{\P}(x | z = c))} \le 2\ep$. (The final inequality above uses the fact that there are at least  two subclasses (i.e., $C \ge 2$), so $\hat{\pi}_{\min} \le 1/2$.)

Now, $|\hat{R}_{c} - R_c| \le \ff{1}{\#_{y= S(c)}} \su{i: \, y_i = S(c)}{} |\hat{w}(x_i, c_i)-w(x_i, c_i)| \ell(f(x_i), S(c))$. The expectation of each term in the summation is $\le \ff{3M}{\hat{\pi}_{\text{min}}} \ep$, since the loss is globally bounded by $M$.
Finally, applying Hoeffding's inequality yields that $|\hat{R}_{c} - \tilde{R}_c| \le \ff{3M}{\hat{\pi}_{\text{min}}} \ep + O\lt \ff{1}{\sqrt{n}}\rt$ with high probability. (Note that this result holds for a \emph{fixed} prediction function $f$.)

\end{proof}

\subsubsection*{Total variation in estimated per-subclass distributions: Gaussian case}
\cite{ashtiani2018near} provides an algorithm for estimating mixtures of Gaussians;
they show that $\tilde{O}(kd^2 / \ep^2)$ samples are sufficient to learn a mixture of $k$ $d$-dimensional Gaussians to within error $\ep$ in
total variation. Concretely, given $n$ samples from a mixture-of-Gaussian distribution $\P$ and given the true number of mixture components $k$,
the algorithm in \citep{ashtiani2018near} returns a $k$-component mixture-of-Gaussian distribution $\hat{\P}$
such that $TV(\P, \hat{\P}) \le \tilde{O}(\sqrt{1 / n})$.
To prove Theorem \ref{thm:l3}, we use this result and bound the overall total variation error in terms of the maximum per-component total variation error. The proof depends on the key lemma (Lemma \ref{lem:tv_mixture}) stated below (whose proof appears at the end of this section).
We then apply Lemma~\ref{lem:l2} to translate this total variation error bound to a bound on the robust loss of the end model.

In order to apply Lemma~\ref{lem:l2}, we first need to relate the total variation error $\ep$ between the mixtures to the
total variation error between the individual mixture components; we show that
when $\ep$ is small enough, then the total variation error between
corresponding mixture components is ${O}(\ep)$ as well.
We state this formally in Lemma \ref{lem:tv_mixture} (proved later in this section).

\begin{restatable}{lem}{tvmixture}
\label{lem:tv_mixture}
Let $\P$ and $\hat{\P}$ be two $k$-component Gaussian mixtures, and suppose the $k$ components of $\P$, denoted by $p_1, \dots, p_k$, are \emph{distinct} Gaussian distributions and all have nonzero mixture weights $m_1, \dots, m_k$. Similarly denote the $k$ components of $\hat{\P}$ by $\hat{p}_1, \dots, \hat{p}_k$, with mixture weights $\hat{m}_1, \dots, \hat{m}_k$.
There exists a constant $c(\P)$ depending only on the parameters of $\P$ such that for all sufficiently small $\ep > 0$, whenever $TV(\P, \hat{\P}) \le \ep$ there exists some permutation $P : [k] \ra [k]$ such that $\max\limits_{c \in [k]} \, TV(p_c, \hat{p}_{P(c)}) \le c(\P)\cdot \ep$.
\end{restatable}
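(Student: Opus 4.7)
} My plan is to reduce the lemma to a \emph{local bi-Lipschitz} equivalence between Gaussian mixture parameters and the mixture density in $L^1$ (equivalently, TV), combined with a compactness argument ensuring that any $\hat{\P}$ close to $\P$ lives in the regime where this equivalence applies. Parametrize $k$-component Gaussian mixtures by $\theta = ((\mu_i, \Sigma_i, m_i))_{i=1}^k$ modulo permutations, and let $f(\theta) \in L^1(\R^d)$ denote the resulting mixture density. Writing $\theta^*$ for the parameters of $\P$, the lemma is equivalent to: for all sufficiently small $\ep$, any $k$-component Gaussian mixture $\hat{\P}$ with $\TV(\P, \hat{\P}) \le \ep$ has parameters $\hat\theta$ satisfying $\|\hat\theta - \theta^*\|_{\mathrm{match}} \le C(\P) \cdot \ep$, where $\|\cdot\|_{\mathrm{match}}$ denotes the min-over-permutations of the max coordinatewise parameter gap.

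First I would carry out a compactness step: for sufficiently small $\ep$, any such $\hat\theta$ must lie in a fixed neighborhood of $\theta^*$. If not, there would exist a sequence $\hat{\P}^{(n)}$ with $\TV(\P, \hat{\P}^{(n)}) \to 0$ but $\hat\theta^{(n)}$ bounded away from $\theta^*$ modulo permutations. Since the weights $\hat m_i^{(n)}$ live in $[0,1]$, and TV-smallness against a fixed mixture $\P$ rules out subsequences where $\|\hat\mu_i^{(n)}\| \to \infty$ or $\hat\Sigma_i^{(n)}$ has eigenvalues escaping to $0$ or $\infty$ (otherwise the corresponding mass would move to regions where $\P$ puts negligible probability, inflating TV), one extracts a convergent subsequence with some limit $\hat{\P}^\infty$ satisfying $\TV(\P, \hat{\P}^\infty) = 0$. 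Identifiability of $k$-component Gaussian mixtures under distinctness and positivity of weights then forces the limiting parameters to equal $\theta^*$ up to permutation, contradicting the assumption.

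Next I would establish a local Lipschitz-inverse bound at $\theta^*$: for $\hat\theta$ in a neighborhood of $\theta^*$,
\[
\|\hat\theta - \theta^*\|_{\mathrm{match}} \;\le\; C(\P) \cdot \|f(\hat\theta) - f(\theta^*)\|_{L^1} \;=\; 2\, C(\P)\, \TV(\P, \hat{\P}).
\]
This is the inverse function theorem applied to $f$ at $\theta^*$, provided the Fréchet derivative $Df(\theta^*)$ is injective on the tangent space of the quotient by permutations. Injectivity reduces to linear independence in $L^1(\R^d)$ of the collection $\{p_i,\,\partial_{\mu_i} p_i,\,\partial_{\Sigma_i} p_i\}_{i=1}^k$, which is a classical strong identifiability property of Gaussian mixtures and holds precisely because the $p_i$ are distinct. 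Converting the parameter bound back into the component-wise TV bound is then routine: on any compact parameter region with $\Sigma$ eigenvalues bounded away from $0$ and $\infty$, the map $(\mu, \Sigma) \mapsto N(\mu, \Sigma)$ is Lipschitz into $L^1$, so a matched parameter gap of size $O(\ep)$ yields $\TV(p_c, \hat p_{P(c)}) \le c(\P)\,\ep$ with a constant depending only on $\P$.

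The hardest step is verifying injectivity of $Df(\theta^*)$, i.e., the strong identifiability statement that no nontrivial first-order perturbation of $\theta^*$ (modulo permutations) leaves $f$ unchanged to first order. This is essential --- if two components of $\P$ were to coincide, their $\mu$-derivatives would as well and identifiability of the linearization would fail, so the distinctness hypothesis is used crucially here. A secondary subtlety in the compactness step is ruling out degenerate limits where some mixture weight $\hat m_i$ collapses to zero or two estimated components merge to approximate one true component; both are avoided by exploiting that $\P$ has strictly positive weights and pairwise-distinct components, so for $\ep$ small enough any matching remains non-degenerate.
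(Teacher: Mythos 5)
Your proposal is correct and follows the same three-stage architecture as the paper's proof: (i) a compactness-plus-identifiability argument showing that small $\TV(\P,\hat\P)$ forces $\hat\theta$ into a shrinking neighborhood of $\theta^*$ modulo permutations, (ii) a first-order expansion upgrading this $o(1)$ closeness to the linear rate $O(\ep)$, and (iii) a conversion from parameter distance back to per-component TV (the paper's Lemma~\ref{lem:tv_simple}, via \citen{devroye2018total}; your Lipschitz remark). The difference lies in how step (ii) is certified. The paper lower-bounds $\TV(\P,\hat\P)$ by testing the signed density difference against finitely many half-infinite rectangles, Taylor-expands the resulting CDF functionals $h(\cdot;c_j)$, and shows that $3k$ (resp.\ $k+kd+kd(d+1)/2$) well-chosen test points yield a nonsingular Jacobian matrix $\biga$, so that $\norm{\de}_\infty \le O(\norm{\biga^{-1}}_\infty)\,\ep$. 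You instead work directly in $L^1$ and invoke injectivity of the Fr\'echet derivative $Df(\theta^*)$, i.e.\ linear independence of $\{p_i,\partial_{\mu_i}p_i,\partial_{\Sigma_i}p_i\}$ --- first-order strong identifiability of Gaussian mixtures. These are two certificates for the same fact (the paper's finite test functionals are precisely a way of witnessing that no nonzero linear combination of those derivative functions vanishes), but yours is cleaner and connects to the established identifiability literature, while the paper's is more elementary and self-contained, reducing everything to the invertibility of an explicit finite matrix. Two small cautions on your version: the ``inverse function theorem'' framing is loose since $f$ maps a finite-dimensional parameter space into the infinite-dimensional $L^1$ --- what you actually need is the quantitative bound $\norm{Df(\theta^*)[v]}_{L^1}\ge c\norm{v}$ (which follows from injectivity plus compactness of the unit sphere in finite dimensions) together with a uniformly controlled second-order remainder in $L^1$, and the derivative must be restricted to the tangent space of the simplex constraint $\sum_i dm_i=0$; and you should note that only \emph{first-order} strong identifiability is required, which does hold for distinct Gaussian components --- the well-known identifiability failure of Gaussian location-scale mixtures occurs at second order via the heat-equation relation and does not affect this argument.
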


In addition, we use the following standard result from learning theory \citep{229t}
to relate the minimizer of the estimated robust training loss $\hat{R}_{\text{robust}}$
to the minimizer of the true robust training loss $R_{\text{robust}}$.

\begin{lem}
\label{lem:learning}
Suppose $g(\cdot, \cdot) \in [-B, B]$. Let $f(\theta) \defeq \E_{(x,y) \sim P} [g(x,y;\theta)]$ and let $\hat{f}(\theta) \defeq \f{1}{n} \su{i=1}{n} g(x_i, y_i; \theta)$, where $\{(x_i,y_i)\}_{i=1}^n$ are IID samples from $P$. Suppose $g(\cdot, \cdot \theta)$ is $L$-Lipschitz w.r.t. $\theta$, so $f(\theta), \hat{f}(\theta)$ are $L$-Lipschitz. Then with probability $\ge 1 - O(e^{-p})$, we have
\begin{equation} \label{eq:uniform_convergence} \forall \theta \text{ s.t.} \norm{\theta} \le R, \quad |\hat{f}(\theta) - f(\theta)| \le O\lt B\sqrt{\f{p \log(nLR)}{n}}\rt\end{equation}
\end{lem}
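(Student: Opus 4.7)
The statement is a standard uniform deviation bound for a bounded Lipschitz family of random functions, and the cleanest route is a covering-number argument that lifts pointwise Hoeffding concentration to a uniform bound over the parameter ball $\{\theta : \|\theta\| \le R\}$.

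First, fix any $\theta$ in the ball. Since $g(x,y;\theta) \in [-B, B]$ and the samples are i.i.d.\ with mean $f(\theta)$, Hoeffding's inequality gives $\Pr(|\hat{f}(\theta) - f(\theta)| > t) \le 2\exp(-nt^2/(2B^2))$. Next, I would take a minimal $\delta$-net $\mathcal{N}_\delta$ of the ball in the norm with respect to which $g$ is Lipschitz; the standard volume argument yields $|\mathcal{N}_\delta| \le (3R/\delta)^{d}$, where $d$ is the ambient dimension of $\theta$. A union bound combined with the pointwise bound then gives that, with probability at least $1 - 2|\mathcal{N}_\delta|\exp(-nt^2/(2B^2))$, the inequality $|\hat{f}(\theta_i) - f(\theta_i)| \le t$ holds simultaneously for every $\theta_i \in \mathcal{N}_\delta$.

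For an arbitrary $\theta$ in the ball, choose $\theta_i \in \mathcal{N}_\delta$ with $\|\theta - \theta_i\| \le \delta$. Applying the $L$-Lipschitz property of $f$ and $\hat{f}$ and the triangle inequality gives
\[
|\hat{f}(\theta) - f(\theta)| \;\le\; |\hat{f}(\theta_i) - f(\theta_i)| + |\hat{f}(\theta) - \hat{f}(\theta_i)| + |f(\theta) - f(\theta_i)| \;\le\; t + 2L\delta.
\]
I would then tune the two free parameters: set $\delta = B/(nL)$ so that the Lipschitz slack $2L\delta = 2B/n$ is absorbed by the stochastic term, and set $t = C \cdot B\sqrt{(p + d\log(nLR/B))/n}$ for a suitable constant $C$ so that the union-bound failure probability is at most $O(e^{-p})$. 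Rearranging and absorbing constants yields the stated uniform bound $|\hat{f}(\theta) - f(\theta)| \le O(B\sqrt{p\log(nLR)/n})$ simultaneously over the ball, with probability $\ge 1 - O(e^{-p})$.

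The main subtlety is reconciling the stated bound, which carries no explicit parameter-dimension factor $d$, with the covering-number calculation that naturally produces one. The intended reading appears to be that $p$ absorbs $d$ (as is common when $p$ is used both as a confidence parameter and a complexity index), so that $p + d\log(\cdots)$ is summarized as $p\log(nLR)$; alternatively, one could weaken the statement to include an explicit $d$ or invoke a Dudley-style chaining refinement to sharpen it. Aside from this bookkeeping, the proof is routine: the only real care required is in the choice of $\delta$, which must be small enough that the Lipschitz remainder is dominated while the covering-number logarithm is kept of order $\log(nLR)$.
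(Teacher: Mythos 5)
Your proof is correct, and it is the standard covering-number argument for exactly this statement. Note that the paper does not actually prove Lemma~\ref{lem:learning}; it cites it as a standard result from learning theory, so there is no in-paper proof to compare against---your argument is the canonical one behind the cited result. The one ``subtlety'' you flag resolves itself once you notice the paper's notation: $\theta \in \R^p$ (see Algorithm~\ref{alg:main}, where the featurizer class is ``parameterized by $\theta \in \R^p$''), so $p$ \emph{is} the ambient parameter dimension, not a separate confidence parameter. With $d = p$ your covering bound $|\mathcal{N}_\delta| \le (3R/\delta)^{p}$, the choice $\delta = B/(nL)$, and $t = O\lt B\sqrt{p\log(nLR)/n}\rt$ give failure probability $2(3RnL/B)^{p} e^{-nt^2/(2B^2)} = O(e^{-p})$ and the stated deviation bound simultaneously, with no leftover dimension factor to reconcile. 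No chaining refinement or weakening of the statement is needed.
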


\subsubsection*{Theorem \ref{thm:l3} Proof}
Using the preceding lemmas, we will now prove Theorem \ref{thm:l3}.

\begin{proof}
Note that $\hat{\P}(x | y = b)$ is itself a mixture of Gaussians for each superclass $b$
(it is simply a mixture of the Gaussians corresponding to the subclasses in $S_b$).
Thus, for each $b \in [B]$ we estimate $\hat{\P}(x | y = b)$
using the Gaussian mixture learning algorithm from \citep{ashtiani2018near}. 
With high probability, this returns an $|S_b|$-component mixture-of-Gaussian distribution
$\hat{\P}(x | y = b)$ such that $\TV({\P(x | y = b)}, \hat{\P}(x | y = b)) \le \tilde{O}(\sqrt{1 / n})$.
By Lemma~\ref{lem:tv_mixture}, if $n$ is large enough, then this means that
there exists a permutation $P: [S_b] \ra [S_b]$ such that for each subclass $c \in S_b$,
$\TV(\P(x | z = c), \hat{\P}(x | z = P(c))) \le \tilde{O}(1/\sqrt{n})$.
So, applying Lemma~\ref{lem:l2}, we have that for a fixed prediction function $f$,
$|\hat{R}_{P(c)} - \tilde{R}_c| \le \tilde{O}(1/\sqrt{n})$, for all subclasses $c \in S_b$ (and for all superclasses $b \in [B]$).
By Lemma~\ref{lem:unbiased} and triangle inequality, $|\hat{R}_{P(c)} - {R}_c|$ is $\tilde{O}(1/\sqrt{n})$ as well.
So, we have that (for fixed $f$) for all subclasses $c \in [C]$, $|\hat{R}_{P(c)} - {R}_c|$ is $\tilde{O}(1/\sqrt{n})$ with high probability.
Thus, for any given $f$, $|\hat{R}_{\text{robust}}(f) - R_{\text{robust}}(f)| =
\left|\max\limits_{b \in [B]} \max\limits_{c \in S_b} \hat{R}_{c} - \max\limits_{b \in [B]} \max\limits_{c \in S_b} R_{c}\right| = \tilde{O}(1/\sqrt{n})$ with high probability.

Let $R^*_{\text{robust}}(f)$ denote the true \emph{population} robust loss. Then Lemma~\ref{lem:learning} says that $|R_{\text{robust}}(f) - R^*_{\text{robust}}(f)|$ with high probability for all $f$ in the hypothesis class $\mathcal{F}$.
Similarly, we can use an analogous uniform convergence result to show that $|\hat{R}_{\text{robust}}(f) - R_{\text{robust}}(f)| \le \tilde{O}(1/\sqrt{n})$ for all $f$ in the hypothesis class with high probability. Thus, by triangle inequality and union bound,
$|\hat{R}_{\text{robust}}(f) - R_{\text{robust}}^*(f)| \le \tilde{O}(1/\sqrt{n})$ for all $f$ in the hypothesis class with high probability, and in particular this holds for the minimizer $\hat{f}$ of $\hat{R}_{\text{robust}}$.

Thus, under the given assumptions,
the excess robust generalization risk (i.e., worst-case subclass generalization risk) of the \name~model $\hat{f}$ is $\tilde{O}(1/\sqrt{n})$ [which is near-optimal in terms of sample complexity, since $\Omega(1/\sqrt{n})$ is a generic worst-case lower bound even if the subclass labels are known].

Note that a technical requirement of the above argument is that the samples we use to estimate $\hat{\P}$ should be independent from those we use to compute the robust loss; for this to hold, we may randomly sample half of the examples to learn the distribution $\hat{\P}$ (and its mixture components), and then use the other half to minimize the robust loss.
This does not change the asymptotic dependence on the number of samples $n$.
[In practice, however, we use all examples in both phases, to get the most out of the data.]
\end{proof}

\subsubsection*{Lemma~\ref{lem:tv_mixture} Proof}
\newcommand{\bigsig}{\mathbf{\Sigma}}
Before we prove Lemma~\ref{lem:tv_mixture}, we first provide a simple lemma bounding the total variation distance of two Gaussians in terms of the Euclidean distance between their parameters, directly based on the results from \citep{devroye2018total}.
\begin{lem}
\label{lem:tv_simple}
Let $p$ be a $d$-dimensional Gaussian with mean $\mu$ and full-rank covariance matrix $\bigsig \in \R^{d \times d}$. Let $p'$ be another Gaussian with mean $\mu'$ and covariance $\bigsig'$. Then there exists a constant $c(\mu,\bigsig)$ [i.e., depending only on the parameters of $p$] such that for all sufficiently small $\ep > 0$, whenever $\norm{\mu-\mu'}_2 \le \ep$ and $\norm{\bigsig- \bigsig'}_F \le \ep$ it is the case that $TV(p, p') \le c(\mu,\bigsig) \cdot \ep$.
\end{lem}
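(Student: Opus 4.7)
The plan is to reduce the TV bound to a KL bound via Pinsker's inequality and then use a second-order expansion of the Gaussian KL in the perturbation; equivalently, we can directly quote one of the closed-form bounds from \citen{devroye2018total}. I will sketch the KL-based route since it makes the local dependence on $(\mu,\bigsig)$ transparent.

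First, I would apply Pinsker's inequality $\TV(p,p') \le \sqrt{\tfrac{1}{2}\,KL(p \| p')}$ together with the standard closed form
\[
2\,KL(p \| p') = \operatorname{tr}(\bigsig'^{-1}\bigsig) - d + (\mu-\mu')^{\!\top}\bigsig'^{-1}(\mu-\mu') + \log\det(\bigsig'\bigsig^{-1}).
\]
Since $\bigsig$ is full rank, $\|\bigsig^{-1}\|_{\mathrm{op}}$ is finite; for $\epsilon$ sufficiently small (depending on $\bigsig$), the perturbation $E \defeq \bigsig' - \bigsig$ satisfies $\|\bigsig^{-1}E\|_{\mathrm{op}} < 1$, so $\bigsig'$ is positive definite and the Neumann series gives $\bigsig'^{-1} = \bigsig^{-1} - \bigsig^{-1}E\bigsig^{-1} + O(\|E\|_F^{2})$, where the implicit constant depends only on $\bigsig$.

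Second, I would expand each term on the right-hand side in $E$ and in $\mu-\mu'$. Using $\log\det(I + \bigsig^{-1}E) = \operatorname{tr}(\bigsig^{-1}E) - \tfrac{1}{2}\operatorname{tr}((\bigsig^{-1}E)^{2}) + O(\|E\|_F^{3})$ and $\operatorname{tr}(\bigsig'^{-1}\bigsig) = d - \operatorname{tr}(\bigsig^{-1}E) + O(\|E\|_F^{2})$, the first-order terms in $E$ cancel, leaving a purely quadratic residual $O(\|E\|_F^{2})$ with constant controlled by $\|\bigsig^{-1}\|_{\mathrm{op}}^{2}$. Meanwhile the quadratic form $(\mu-\mu')^{\!\top}\bigsig'^{-1}(\mu-\mu')$ is bounded by $\|\bigsig^{-1}\|_{\mathrm{op}}\,\|\mu-\mu'\|_{2}^{2} + O(\epsilon^{3})$. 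Combining yields $KL(p\|p') \le C(\mu,\bigsig)\,\epsilon^{2}$ for some constant depending only on $\mu$ and $\bigsig$, whence $\TV(p,p') \le c(\mu,\bigsig)\,\epsilon$ by Pinsker.

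The only real obstacle is verifying the cancellation of first-order terms cleanly and tracking that the remaining constants depend only on $(\mu,\bigsig)$ and not on $(\mu',\bigsig')$; this is why we need an ``$\epsilon$ sufficiently small'' hypothesis to keep $\bigsig'$ in a neighborhood of $\bigsig$ on which $\bigsig'^{-1}$ and $\log\det\bigsig'$ are smooth. As an alternative that avoids the Taylor calculation entirely, one could invoke the explicit bound from \citen{devroye2018total} of the form $\TV(p,p') \lesssim \|\bigsig^{-1/2}(\mu-\mu')\|_{2} + \|\bigsig^{-1/2}E\,\bigsig^{-1/2}\|_{F}$ and bound each factor by $O(\epsilon)$ using $\|\bigsig^{-1}\|_{\mathrm{op}} \le c(\bigsig)$; this yields the same conclusion with an explicit constant.
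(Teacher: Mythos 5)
Your proposal is correct, and its primary route is genuinely different from the paper's. The paper's proof of this lemma is a one-line citation: it invokes Theorem 1.3 of \citen{devroye2018total} for $d=1$ and Theorems 1.1--1.2 for general $d$, which give explicit two-sided bounds on the Gaussian total variation in terms of the parameter differences --- essentially your ``alternative'' route of bounding $\TV(p,p') \lesssim \norm{\bigsig^{-1/2}(\mu-\mu')}_2 + \norm{\bigsig^{-1/2}(\bigsig'-\bigsig)\bigsig^{-1/2}}_F$ and then controlling each term by $O(\ep)$ via $\norm{\bigsig^{-1}}_{\mathrm{op}}$. Your main route (Pinsker plus a second-order expansion of the Gaussian KL) is self-contained and checks out: the first-order terms in $E = \bigsig'-\bigsig$ from $\operatorname{tr}(\bigsig'^{-1}\bigsig)$ and $\log\det(\bigsig'\bigsig^{-1})$ do cancel, leaving $KL(p\|p') = O(\ep^2)$ with constants controlled by $\norm{\bigsig^{-1}}_{\mathrm{op}}$ alone once $\ep$ is small enough that the Neumann series converges with a remainder depending only on $\bigsig$; Pinsker then gives $\TV \le \sqrt{KL/2} = O(\ep)$. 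What the KL route buys is independence from the specific theorems of \citen{devroye2018total} and a transparent view of why the constant depends only on $(\mu,\bigsig)$ (in fact only on $\bigsig$); what the citation route buys is brevity and explicit, non-asymptotic constants. Both correctly require the ``$\ep$ sufficiently small'' hypothesis to keep $\bigsig'$ positive definite and within a neighborhood where the relevant quantities are smooth, which is all the downstream use in Lemma~\ref{lem:tv_mixture} needs.
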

\begin{proof}
The one-dimensional case is shown in Theorem 1.3 of \citen{devroye2018total}. The higher-dimensional case follows from Theorems 1.1 and 1.2 of \citen{devroye2018total}. Note that the constant $c$ does not depend on $\ep$, although it may depend on $d$.
\end{proof}

\noindent For convenience, we restate Lemma~\ref{lem:tv_mixture} below.
\tvmixture*

\begin{proof}
The case $k = 1$ is vacuous (since the ``mixture'' is simply a single Gaussian); so, suppose $k > 1$.
Denote the mixture weights of the true distribution $\P$ as $m_1, \dots, m_k$, and the mean and covariance parameters of the individual distributions in $\P$ as $\mu_1, \dots, \mu_k, \bigsig_1, \dots, \bigsig_k$.
In other words, for $x \in \R^d$, $\P(x) = \su{i=1}{k} m_i \N_{\mu_i, \bigsig_i}(x)$, where $m_i \in (0, 1)$ and $\N_{\mu, \bigsig}$ denotes the normal density with mean $\mu$ and covariance $\bigsig$.
Similarly, denote the mixture weights of the estimated distribution $\hat{\P}$ by $\hat{m}_1, \dots, \hat{m}_k$, and the mean and covariance parameters of $\hat{\P}$ by $\hat{\mu}_1, \dots, \hat{\mu}_k, \hat{\bigsig}_1, \dots, \hat{\bigsig}_k$.
For simplicity assume the covariance matrices $\bigsig_i$ of each component in the true distribution are strictly positive definite (although this is not required).

Define $q({m}_1', \dots, {m}_k', {\mu}_1', \dots, {\mu}_k', {\bigsig}_1', \dots, {\bigsig}_k') = \I{\R^d}{} \left| \su{i=1}{k} m_i \N_{\mu_i, \bigsig_i}(x) - \su{i=1}{k} {m}_i' \N_{{\mu}_i', {\bigsig}_i'}(x) \right| \dx$. The domain of $q$ is constrained to have ${m}_i' \in [0,1]$ for all $1 \le i \le k$ and $\su{i=1}{k} {m}_i' = 1$, as well as to have ${\bigsig}_i'$ be SPD. By definition, $q(\hat{m}_1, \dots, \hat{m}_k, \hat{\mu}_1, \dots, \hat{\mu}_k, \hat{\bigsig}_1, \dots, \hat{\bigsig}_k)$ is simply twice the total variation between $\P$ and $\hat{\P}$.

Note that, since we assumed the mixture components are unique and $m_i \ne 0$ for all $i$, the only global minima of $q$ [where $q$ evaluates to $0$, which means that $\P$ and $\hat{\P}$ are the same distribution]
are where $(m_{\pi(i)}', \mu_{\pi(i)}', \bigsig_{\pi(i)}') = (m_i, \mu_i, \bigsig_i)$ for all $1 \le i \le k$, for some permutation $\pi$---in other words, when the two distributions have the exact same mixture components and mixture weights up to permutation.
Note that $q$ is continuous on its domain.
Further, it is not hard to see that the $\ep$-sublevel sets of $q$ are compact for sufficiently small $\ep$,
and therefore $\lim\limits_{\ep \ra 0} \{({m}_1', \dots, {\mu}_1', \dots, {\bigsig}_1', \dots):q({m}_1', \dots, {\mu}_1', \dots, {\bigsig}_1', \dots) \le \ep\}$ is exactly the set of global minima of $q$.
Thus, for a fixed distribution $\P$, as $\ep \ra 0$, the set of points such that $q({m}_1', \dots, {\mu}_1', \dots, {\bigsig}_1', \dots) \le \ep$ is contained in the union of sets of $\infty$-norm radius $O(\de(\ep))$ around each of the global minima of $q$, where $\de(\ep) \ra 0$ as $\ep \ra 0$.
In other words, when $\ep$ is sufficiently small then the set of all Gaussian mixtures $\P'$ with $TV(\P, \P') \le \ep$ is the set of all mixtures $\P'$ whose parameters $\{m_i', \mu_i', \bigsig_i'\}$ are $O(\de(\ep))$-close to those of the true distribution $\P$, up to permutation.
In particular, if $TV(\P, \P') \le \ep$, then for each individual Gaussian component $\N_{\mu_i,\bigsig_i}$ in $\P$, there exists a component $\N_{\mu_j',\bigsig_j'}$ in $\P'$ whose parameters are $O(\de(\ep))$-close to it, i.e., $\max\left\{|m_i - m_j|, \norm{\mu_i-\mu_j'}_{\infty}, \norm{\bigsig_i'-\bigsig_j'}_{\infty}\right\} \le O(\de(\ep))$.

We now argue that $\lim\limits_{\ep \ra 0} \ff{\de(\ep)}{\ep}$ must be a constant (i.e., that $\de(\ep)$ is $\Theta(\ep)$ as $\ep \ra 0$) in order for the total variation between the two mixtures to be $\le \ep$.
We do so by Taylor expanding a set of quantities whose magnitudes lower bound the total variation between $\P$ and $\P'$, and showing that these quantities are locally linear in the parameter differences between $\P'$ and $\P$ when these differences are sufficiently small.

By assumption,
$2 TV(\P, \P') = q({m}_1', \dots, {m}_k', {\mu}_1', \dots, {\mu}_k', {\bigsig}_1', \dots, {\bigsig}_k') = \\
\I{\R^d}{} \left| \su{i=1}{k} m_i \N_{\mu_i, \bigsig_i}(x) - \su{i=1}{k} m_i'\N_{\mu_i',\bigsig_i'}(x)\right| \dx \le 2\ep$.
Notice that
$2 TV(\P, \P') = \\
\sup\limits_{S \subseteq \mathfrak{M}^d} \I{S}{} \left| \su{i=1}{k} m_i \N_{\mu_i, \bigsig_i}(x) - \su{i=1}{k} m_i'\N_{\mu_i',\bigsig_i'}(x)\right| \dx
\ge
\sup\limits_{S \subseteq  \mathfrak{M}^d} \left| \I{S}{} \lt \su{i=1}{k} m_i \N_{\mu_i, \bigsig_i}(x) - \su{i=1}{k} m_i'\N_{\mu_i',\bigsig_i'}(x) \rt \dx \right|$, \\ where $ \mathfrak{M}^d$ denotes the collection of all measurable subsets of $\R^d$.

\noindent Suppose first that $d = 1$ (so $\bigsig_i = \sigma_i^2$, a scalar). Then, $\sup\limits_{S \subseteq \mathfrak{M}}  \left| \I{S}{} \lt \su{i=1}{k} m_i \N_{\mu_i, \sigma_i^2}(x) - \su{i=1}{k} m_i'\N_{\mu_i',\sigma_i^{2\prime}}(x) \rt \dx \right|
 \\
\ge \sup\limits_{c_j \in \R} \left|\I{-\infty}{c_j} \lt \su{i=1}{k} m_i \N_{\mu_i, \sigma_i^2}(x) - \su{i=1}{k} m_i'\N_{\mu_i',\sigma_i^{2\prime}}(x) \rt  \dx \right|
=
\sup\limits_{c_j \in \R} |h(\P', c_j)|$
where we define
$h(\P'; c_j) =  \\ h(m_1', \dots, m_k', \mu_1', \dots, \mu_k', \sigma_1^{2\prime}, \dots, \sigma_k^{2\prime}; c_j) \defeq \su{i=1}{k} m_i \I{-\infty}{c_j} \N_{\mu_i, \sigma_i^2}(x)\, \dx - \su{i=1}{k} m_i' \I{-\infty}{c_j} \N_{\mu_i',\sigma_i^{2\prime}}(x) \, \dx$.
So, $|h(\P'; c_j)|$ is a lower bound on twice the total variation between $\P$ and $\P'$, for any value of $c_j$.
Let $\vec{v} \in \R^{3k}$ denote the vector of parameters $(m_1,\dots,m_k,\mu_1,\dots,\mu_k,\sigma_1^2,\dots,\sigma_k^2)$,
and similarly for $\vec{v}'$.
For ease of notation we write $h(\vec{v}'; c_j)$ interchangeably with both $h(\P'; c_j)$ and $h(m_1', \dots, m_k', \mu_1', \dots, \mu_k', \sigma_1^{2\prime}, \dots, \sigma_k^{2\prime}; c_j)$.

Assume $TV(\P, \P') \le \ep$, so as argued before $\norm{\vec{v}' - \pi(\vec{v})}_2$ is $O(\de(\ep))$ for some permutation $\pi$ and some function $\de$ with $\lim\limits_{x \ra 0} \de(x) = 0$. [More precisely, $\lim\limits_{\ep \ra 0} \,\,\max\limits_{\vec{v}' : TV(\P, \P') \le \ep} \,\, \min\limits_{\pi : \pi(k)} \norm{\vec{v}'-\pi(\vec{v})}_2 = 0$.] Without loss of generality, we will henceforth simply write $\vec{v}$ in place of $\pi(\vec{v})$. For notational simplicity, let's write $\de \defeq \norm{\vec{v}' - \vec{v}}_2$. As $h$ is smooth around $\vec{v}$, we can Taylor expand $h$ about the point $\vec{v} = \vec{v}'$ [i.e., $\{m_1' = m_1, ..., \mu_1' = \mu_1,  ..., \sigma_1^{2\prime} = \sigma_1^2, ..., \sigma_k^{2\prime} = \sigma_k^2\}$] to get
\newcommand{\erfc}{\text{erfc}}
\begin{align*}
h(m_1', \dots, m_k', \mu_1', \dots, \mu_k', \sigma_1^{2\prime}, \dots, \sigma_k^{2\prime}; c_j) = h(\vec{v}') &= \\
h(\vec{v}; c_j) + \G h(\vec{v}; c_j)^T (\vec{v}' - \vec{v}) + O(\norm{\vec{v}' - \vec{v}}_2^2)~.
\end{align*}
Note that $h(\vec{v}; c_j) = 0$ (since $\vec{v}$ are the true parameters). \newline
$\G h(\vec{v}; c_j)^T = (\ff{\p}{\p m_1'} h(\vec{v}; c_j), \dots, \ff{\p}{\p \mu_1'} h(\vec{v}; c_j), \dots, \ff{\p}{\p \sigma_1'} h(\vec{v}; c_j), \dots)$.
\newline
$\f{\p}{\p m_i'} h(\vec{v}; c_j) = \I{-\infty}{c_j} p_{\mu_i, \sigma_i^2}(x) \, \dx = \ff{1}{2} \erfc\lt \ff{\mu_i-c_j}{\sqrt{2}\sigma_i}\rt$. [$\sigma_i$ denotes the positive root of $\sigma_i^2$.]
\newline
$\f{\p}{\p \mu_i'} h(\vec{v}; c_j) = -\f{m_i e^{-(c_j-\mu_i)^2 / (2\sigma_i^2)}}{\sqrt{2\pi}\sigma_i}$.
\newline
$\f{\p}{\p \sigma_i^{2\prime}} h(\vec{v}; c_j) = \f{m_i e^{-(c_j-\mu_i)^2 / (2\sigma_i^2)}(\mu_i- c_j)}{2\sqrt{2\pi}\sigma_i^3}$.

Define $f_i(x) = \ff{1}{2} \erfc\lt \ff{\mu_i-x}{\sqrt{2}\sigma_i}\rt$ for $1 \le i \le k$, $-\ff{m_i e^{-(x-\mu_i)^2 / (2\sigma_i^2)}}{\sqrt{2\pi}\sigma_i}$ for $k+1 \le i \le 2k$, and $\ff{m_i e^{-(x-\mu_i)^2 / (2\sigma_i^2)}(\mu_i- x)}{2\sqrt{2\pi}\sigma_i^3}$ for $2k+1 \le i \le 3k$. So $\G h(\vec{v}; c_j)^T = (f_1(c_j), \dots, f_{3k}(c_j))^T$.

We have $|\G h(\vec{v}; c_j)^T\de + O(\norm{\de}_2^2)| \le 2\ep$ for all $c_j \in \R$.
Now, we claim that it is possible to select $c_1, \dots, c_{3k} \in \R$ such that the $3k \times 3k$ matrix with rows $\G h(\vec{v}; c_j)^T$ for $1 \le j \le 3k$ is nonsingular.

\emph{Proof}:
Suppose that there is a nonzero vector $\vec{w} \in \R^{3k}$ such that $(f_1(x), \dots, f_{3k}(x))^T w = 0$ for all $x \in \R$. This means that the function $w_1 f_1(x) + \dots + w_{3k} f_{3k}(x)$ is identically 0. But this is impossible unless $\vec{w} = 0$, as the $f_i$'s form a linearly independent set of functions (which can easily be seen by looking at their asymptotic behavior). Thus, there is no nonzero vector that is orthogonal to every vector in the set $\bigcup\limits_{x\in\R} \{(f_1(x), \dots, f_{3k}(x))\}$. In particular, this means that we can find $c_1, \dots, c_{3k} \in \R$ such that the matrix with rows $(f_1(c_j), \dots, f_{3k}(c_j))$ is nonsingular (i.e., has linearly independent rows).

\newcommand{\biga}{\mathbf{A}}
Call this matrix $\biga$. So $\norm{\biga\de + \eta}_\infty \le 2\ep = 2\ep \norm{\mathbbm{1}}_\infty$ where $\eta \in \R^{3k}$ is such that $\norm{\eta}_{\infty}$ is $O(\norm{\de}_2^2) = O(\norm{\de}_\infty^2)$, and $\mathbbm{1}$ denotes the vector of all ones in $\R^{3k}$. So
$\norm{\de}_\infty - \norm{\biga^{-1}\eta}_\infty \le \norm{\de + \biga^{-1}\eta}_{\infty} \le 2\ep \norm{\biga^{-1}}_\infty \norm{\mathbbm{1}}_\infty = 2\ep\norm{\biga^{-1}}_{\infty}$,
where $\norm{\biga^{-1}}_{\infty}$ is the induced $\infty$-norm of $\biga^{-1}$, and the first inequality follows from the reverse triangle inequality.
Note that $\norm{\biga^{-1}\eta}_\infty \le \norm{\biga^{-1}}_\infty \norm{\eta}_\infty \le O(\norm{\de}_\infty^2)$ since $\biga^{-1}$ is defined independently of $\de$.

So, $\norm{\de}_\infty - O(\norm{\de}_\infty^2) \le 2\ep \norm{\biga^{-1}}_{\infty}$, and thus $\norm{\de}_\infty$ [which is, by definition, the maximum error in any parameter $m_1,\dots,m_k,\mu_1,\dots,\mu_k,\sigma_1^2,\dots,\sigma_k^2$ up to permutation] is $O(\ep)$.
But then, the total variation between each pair of mixture components $\N_{\mu_i,\sigma_i^2}$ and $\N_{\mu_i',\sigma_i^{2\prime}}$ is also $O(\ep)$, by Lemma~\ref{lem:tv_simple} and norm equivalence.

Thus, when $d = 1$,
if the total variation between the two Gaussian mixtures $\P$ and $\hat{\P}$ is $O(\ep)$, the total variation between each mixture component must also be $O(\ep)$
[where the big-O notation suppresses all parameters that depend on the true distribution $\P$], as desired. (Recall that total variation is always in $[0, 1]$.)

Now suppose $d > 1$. Similarly to before,
we have
\begin{align*}
2\ep &\ge 2 TV(\P, \P') 
\ge
\max\limits_{S \subseteq \mathfrak{M}^d}  \left| \I{S}{} \lt \su{i=1}{k} m_i \N_{\mu_i, \bigsig_i}(x) - \su{i=1}{k} m_i'\N_{\mu_i',\bigsig_i'}(x) \rt \dx \right|
\\ &\ge
\max\limits_{\vec{c}_j \in \R^d} \left|\I{-\infty}{c_{j1}} \I{-\infty}{c_{j2}} \dots \I{-\infty}{c_{jd}} \lt \su{i=1}{k} m_i \N_{\mu_i, \bigsig_i}(x) - \su{i=1}{k} m_i'\N_{\mu_i',\bigsig_i'}(x) \rt \dx \right|
\\ &=
\max\limits_{\vec{c}_j \in \R^d} |h(\P', \vec{c}_j)|,
\end{align*}
where we define $h(\P'; \vec{c}_j)$ as $\su{i=1}{k} m_i \I{-\infty}{c_{j1}} \I{-\infty}{c_{j2}} \dots \I{-\infty}{c_{jd}} \N_{\mu_i, \bigsig_i}(x) \, \dx - \su{i=1}{k} m_i' \I{-\infty}{c_{j1}} \I{-\infty}{c_{j2}} \dots \I{-\infty}{c_{jd}} \N_{\mu_i',\bigsig_i'}(x) \, \dx$.
Again, we equivalently denote this by $h(\vec{v}'; c_j)$, where $\vec{v}' \in \R^{k(d+1) + kd(d+1)/2} \defeq \\ (m_1', ..., \text{vec}(\mu_1'), ..., \text{vec}(\bigsig_1'), ...)$ denotes the parameters collected into a single vector.
As before, we Taylor expand about the point $\vec{v}' = \vec{v}$ to get that $|\G h(\vec{v}; \vec{c}_j)^T \de + O(\norm{\de}_\infty^2)| \le 2\ep$, where $\de \defeq \vec{v}'-\vec{v}$.
\noindent
Now, we compute the entries of $\G h(\vec{v}; \vec{c}_j)$: \newline
$\f{\p}{\p m_i'} h(\vec{v}; \vec{c}_j) = \I{-\infty}{c_{j1}} \I{-\infty}{c_{j2}} \dots \I{-\infty}{c_{jd}} \N_{\mu_i, \bigsig_i}(x) \, \dx = \\
\f{1}{(2\pi)^{d/2} \sqrt{|\det(\bigsig_i)|}} \I{-\infty}{c_{j1}} \I{-\infty}{c_{j2}} \dots \I{-\infty}{c_{jd}} e^{-(x-\mu_i)^T \bigsig_i^{-1} (x-\mu_i) /2} \, \dx$.
\newline
${\f{\p}{\p (\mu_i')_a} h(\vec{v}; \vec{c}_j) = \f{m_i}{(2\pi)^{d/2} \sqrt{|\det(\bigsig_i)|}} \I{-\infty}{c_{j1}} \I{-\infty}{c_{j2}} \dots \I{-\infty}{c_{jd}} (\bigsig_i^{-1})_a \cdot (x-\mu_i) e^{-(x-\mu_i)^T \bigsig_i^{-1} (x-\mu_i) /2} \, \dx}$, \\
where $(\mu_i)_a$ denotes the $a^{th}$ entry of the vector $\mu_i$ and $(\bigsig_i^{-1})_a$ is the $a^{th}$ row of the matrix $\bigsig_i^{-1}$.
\newline
$\f{\p}{\p (\bigsig_i')_{ab}} h(\vec{v}; \vec{c}_j) = -\f{m_i (\bigsig_i')_{ab}^{-1}}{2(2\pi)^{d/2} \sqrt{|\det(\bigsig_i)|}} \I{-\infty}{c_{j1}} \I{-\infty}{c_{j2}} \dots \I{-\infty}{c_{jd}} e^{-(x-\mu_i)^T \bigsig_i^{-1} (x-\mu_i) /2} \, \dx + \\
{\f{m_i}{(2\pi)^{d/2} \sqrt{|\det(\bigsig_i)|}} \I{-\infty}{c_{j1}} \I{-\infty}{c_{j2}} \dots \I{-\infty}{c_{jd}} [(\bigsig_i^{-1}(x-\mu_i)) (\bigsig_i^{-1}(x-\mu_i))^T]_{ab} \, e^{-(x-\mu_i)^T \bigsig_i^{-1} (x-\mu_i) /2} \, \dx}$,
where $M_{ab}$ \\ denotes the $(a,b)$ entry of the matrix $M$ (note that all matrices involved in this expression are symmetric).

Once again, this set of $k + kd + kd(d+1)/2$ partial derivatives, considered as functions of $\vec{c}_j$, comprise a linearly independent set of functions,
since the $(\mu_i, \bigsig_i)$ pairs are unique. The remainder of the proof proceeds analogously to the $d = 1$ case.
\end{proof}

\subsection{Subclass Performance Gaps Enable Distinguishing Between Subclasses}
\label{app:gap}
In this section, we give simple intuition for why a performance gap between two subclasses of a superclass implies that
it is possible to discriminate between the two subclasses in feature space to a certain extent.

Suppose the setting is binary classification,
and one of the superclasses has two subclasses with equal proportions in the dataset.
Suppose we have access to a model whose training accuracy on one subclass is $x$,
while its training accuracy on the other subclass is $y$, where $1 \ge x > y \ge 0$.

Of the correctly classified examples, $\ff{x}{x+y} > \ff{1}{2}$ fraction of them are from the first subclass;
similarly, of the incorrectly classified examples, $\ff{1-y}{2-x-y} > \ff{1}{2}$ fraction of them are from the second subclass.

This means that if we form ``proxy subclasses'' by simply splitting the superclass into the correctly classified training examples and incorrectly classified training examples, the resulting groups can in fact be a good approximation of the true subclasses!
This is illustrated in Figure~\ref{fig:accuracy-diff}.
For instance, suppose $x = 0.9$ and $y = 0.6$. Then $\ff{x}{x+y} = 0.6$ and $\ff{1-y}{2-x-y} = 0.8$ - so, 60\% of the examples in the first group are from subclass 1, and 80\% of those in the second group are from subclass 2, which is much better than randomly guessing the true subclasses (in which the concentration of each subclass in each guessed group will approach 50\% as $n \ra \infty$).
In the extreme case, if one subclass has accuracy $1$ and the other has accuracy $0$, then the superclass decision boundary separates them perfectly (no matter their proportions).

Combined with other information, this helps explain why looking at the way each example is classified (such as the loss of the example or related error metrics) can be helpful to discriminate between the subclasses.

\begin{figure}
\centering
\includegraphics[height=1.5in]{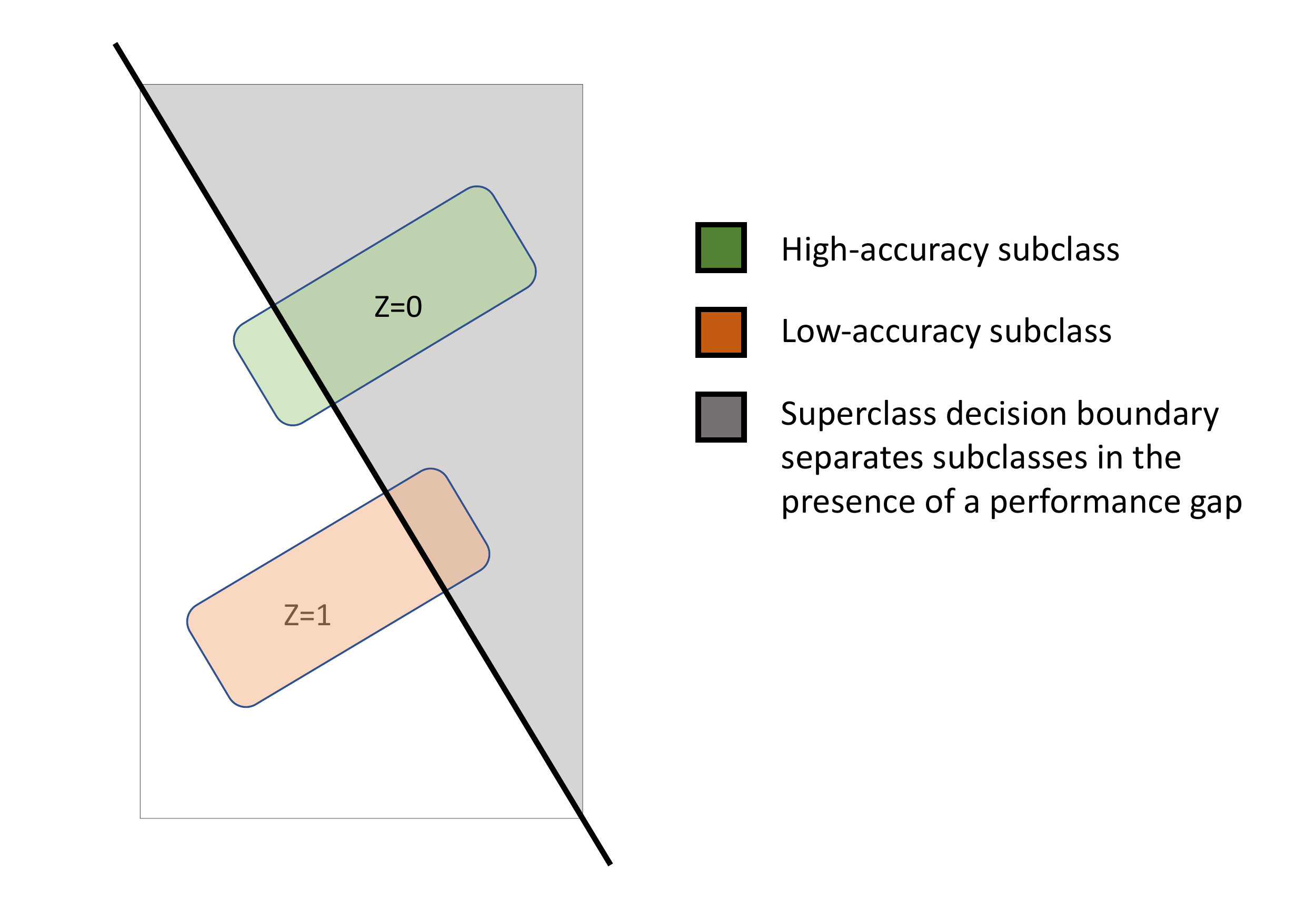}
\caption{
A performance gap between subclasses within the same superclass
implies a corresponding degree of separation in feature space.
Green and red are true subclasses for the superclass which the model predicts as the gray region;
the decision boundary for the superclass classification task
also approximately separates the subclasses.}

\label{fig:accuracy-diff}
\end{figure}

\subsection{Inherent Hardness}
We define the ``inherent hardness'' of a (task, function class) pair as the minimum attainable robust error, i.e.,
$$\argmax\limits_{f \in \mathcal{F}} \min\limits_{c \in \{1,\dots,C\}} \mathbb{E}_{(x,y) | z = c}\left[\1(f(x) = y) \right],$$
where the function class is denoted by $\mathcal{F}$. (This can be thought of as the ``Bayes robust risk.'')
We allow the function $f$ to be stochastic: i.e., for a given
input $x$, it may output a fixed probability distribution over the possible labels,
in which case we define $\1(f(x) = y)$ as the probability assigned by $f$ to the label $y$, given input $x$.
By definition, the inherent hardness lower bounds the robust error attained by any classifier in $\mathcal{F}$,
regardless of how it is trained or how much data is available.
The only way to improve robust performance is therefore
to either make the model class $\mathcal{F}$ more expressive (i.e., include more functions in $\mathcal{F}$)
or to collect new data such that the covariates $x$ include more information that can be used to distinguish between different classes.
(Of course, both of these changes would be expected to improve overall performance as well, if sufficient data is available.)
Thus, addressing hidden stratification effects caused by ``inherent hardness'' is beyond the scope of this work.
A simple example of an ``inherently hard'' task (i.e., a task with nonzero ``inherent hardness'') is shown in Figure~\ref{fig:hardness};
no classifier can get perfect accuracy on every subclass, because the two superclasses overlap
and thus it is impossible to distinguish between them in the region of overlap.
Nevertheless, it is possible to attain perfect accuracy on \emph{some} subclasses in this example,
meaning that there will still be performance gaps between the subclasses.

\begin{figure}
\centering
 \includegraphics[height=1.5in]{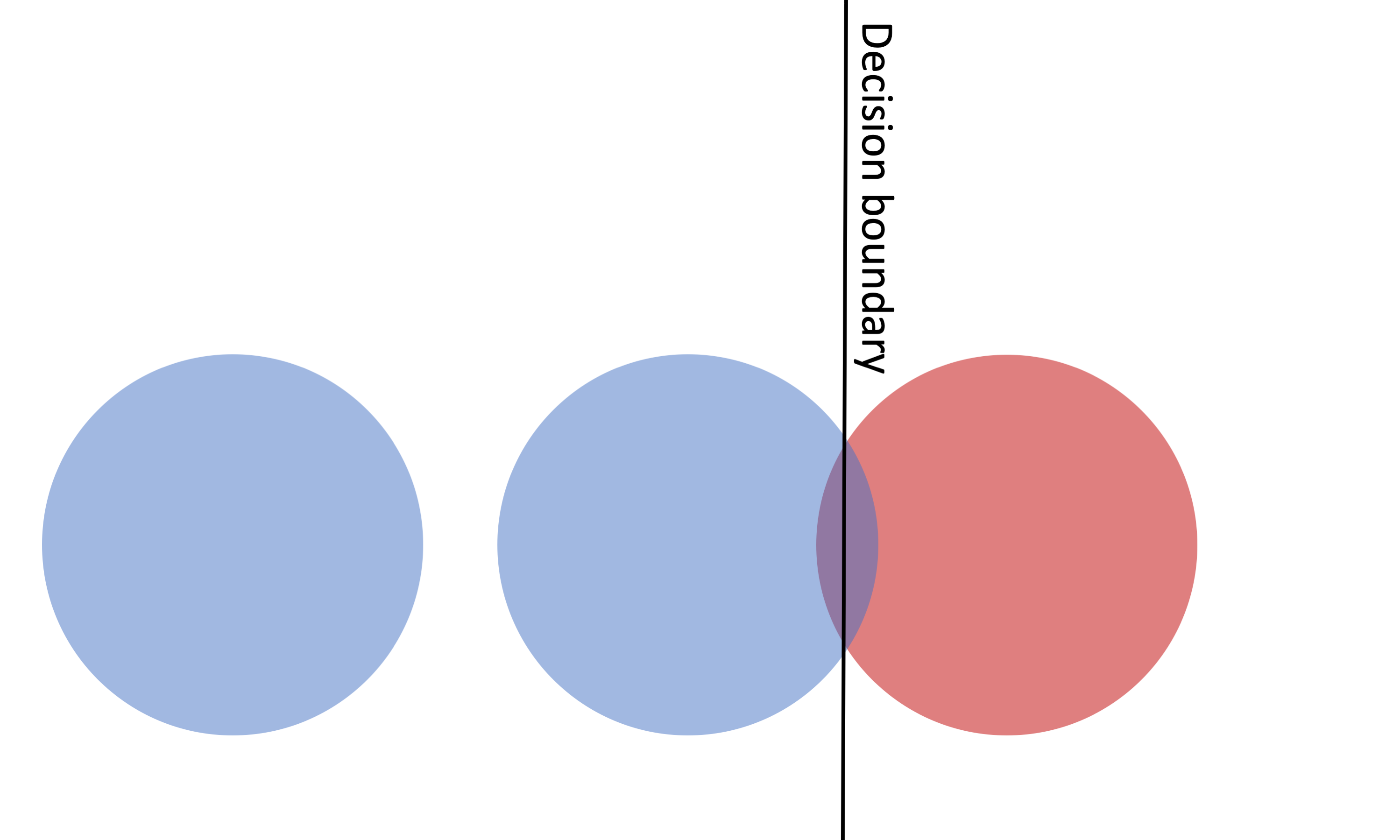}
\caption{``Inherent hardness'': the red and blue superclasses overlap,
making it impossible to distinguish between them with perfect overall accuracy.
The blue superclass has two subclasses; on the leftmost subclass,
the classifier can attain perfect accuracy.}
\label{fig:hardness}
\end{figure}

\subsection{GDRO with Soft Group Assignments}
\label{app:gdro-soft}
As shown above in Appendix \ref{app:analysis-proofs}, we can minimize $\max\limits_{c \in [C]} \E_{x \sim \hat{\P}_{S(c)}}[\hat{w}(x, c) \ell(x, S(c); \theta)]$ as a surrogate for
$\max\limits_{c \in [C]} \E_{x \sim {\P}_{c}}[\ell(x, S(c); \theta)]$. Here, $\hat{\P}_{S(c)}$ is the empirical distribution of training examples of superclass $c$ (with density $\hat{p}_{S(c)} = \hat{p}(x | y = S(c))$), $\hat{w}(x,c)$ is shorthand for $\f{\hat{p}(x | z = c)}{\hat{p}(x | y = S(c))}$,
and $\ell(x, S(c); \T)$ is shorthand for $\ell(f_\T(x), S(c))$, where $f_\T$ is a classifier parameterized by $\T$.
If we define the density $A_c(x, z) = \hat{p}_{S(c)}(x) \1(z = c)$, then $\max\limits_{c \in [C]} \E_{x \sim \hat{\P}_{S(c)}}[\hat{w}(x, c) \ell(x, S(c); \theta)] = \max\limits_{c \in [C]} \E_{(x,z) \sim A_c}[\hat{w}(x, z) \ell(x, S(z); \theta)]$.

If we now define $\tilde{\ell}(x,z; \theta) \defeq \hat{w}(x,z) \ell(x, S(z); \theta)$, we see that this falls directly within the group DRO framework of  \citen{sagawa2020distributionally}. We thus obtain Algorithm~\ref{alg:opt}, which is a minor modification of Algorithm 1 of \citen{sagawa2020distributionally}.

\begin{algorithm}[h]
\caption{Modified Group DRO}
\label{alg:opt}
\SetAlgoLined
\textbf{Input}: Step sizes $\eta_q, \eta_\theta$; empirical per-superclass distributions $\hat{\P}_b$ for each superclass $b\in [B]$ \\
 \nl Initialize $\theta\iter{0}$ and $q\iter{0}$ \\
 \nl \For{$t=1,\dots,T$}{
 \nl $ c \sim \text{Uniform}(1,\dots,c)$\\
 \nl $x \sim \hat{\P}_{S(c)}$\\
 \nl $ q^\prime \gets q\iter{t-1}$\\
 \nl $ q^\prime_c \gets q^\prime_c \cdot \exp \lt \eta_q \cdot \hat{w}(x,c) \cdot \ell(x,S(c); \theta\iter{t-1}) \rt$\\
 \nl $ q\iter{t} \gets q^\prime / \sum_c q^\prime_c$\\
 \nl $\theta\iter{t} \gets \theta\iter{t-1} - \eta_\theta \cdot q_c\iter{t} \cdot \hat{w}(x,c) \cdot \nabla_\theta \ell(x,S(c); \theta\iter{t-1})$\\
 }
\end{algorithm}

The weights $\hat{w}(x, c)$ correspond to ``soft labels'' indicating the probability a particular example came from a particular superclass; notice that that $\E_{(x,z) \sim A_c}[\hat{w}(x, z) \ell(x, S(z); \theta)]$ depends on every training example in the \emph{superclass} $S(c)$, so each training example is used in multiple terms in the maximization.

Finally, note that if the assumptions (informally: nonnegativity, convexity, Lipschitz continuity, and boundedness) of Proposition 2 in \citen{sagawa2020distributionally} hold for the modified loss $\tilde{l}(x, z; \theta)$, then the convergence guarantees carry over as well, since Algorithm \ref{alg:opt} is a specific instantiation of Algorithm 1 from \citen{sagawa2020distributionally}. So, under these assumptions, the convergence rate of Algorithm \ref{alg:opt} is $O(1/\sqrt{T})$, where $T$ is the number of iterations. [Specifically, the average iterate after $T$ iterations achieves a robust loss that is $O(1/\sqrt{T})$ greater than the minimum of the robust loss.]

In preliminary experiments, we found hard clustering to work better than the ``soft clustering'' approach described in this section; as it also has the advantage of simplicity, all final experiments were performed with hard clustering.

\clearpage 
\FloatBarrier

\end{document}